\documentclass{article}

\usepackage[preprint]{neurips_2026}

\usepackage[utf8]{inputenc} 
\usepackage[T1]{fontenc}    
\usepackage{hyperref}       
\usepackage{url}            
\usepackage{booktabs}       
\usepackage{amsfonts}       
\usepackage{nicefrac}       
\usepackage{microtype}      
\usepackage{xcolor}         

\usepackage{graphicx}
\usepackage{subcaption}

\usepackage{amsmath,amsfonts}
\usepackage{amssymb}
\usepackage{mathtools}
\usepackage{amsthm}

\usepackage{bm}
\usepackage{bbm}
\usepackage{enumerate}
\usepackage{comment}
\usepackage{nicefrac}  
\usepackage{enumitem}
\usepackage{here}

\theoremstyle{plain}
\newtheorem{theorem}{Theorem}[section]

\newtheorem{lemma}[theorem]{Lemma}

\theoremstyle{definition}
\newtheorem{definition}[theorem]{Definition}

\theoremstyle{remark}

\makeatletter
\newcommand*\rel@kern[1]{\kern#1\dimexpr\macc@kerna}
\newcommand*\widebar[1]{%
  \begingroup
  \def\mathaccent##1##2{%
    \rel@kern{0.8}%
    \overline{\rel@kern{-0.8}\macc@nucleus\rel@kern{0.2}}%
    \rel@kern{-0.2}%
  }%
  \macc@depth\@ne
  \let\math@bgroup\@empty \let\math@egroup\macc@set@skewchar
  \mathsurround\z@ \frozen@everymath{\mathgroup\macc@group\relax}%
  \macc@set@skewchar\relax
  \let\mathaccentV\macc@nested@a
  \macc@nested@a\relax111{#1}%
  \endgroup
}
\makeatother

\newcommand{\Two}{I\hspace{-1.2pt}I}
\newcommand{\Three}{I\hspace{-1.2pt}I\hspace{-1.2pt}I}
\newcommand{\Four}{I\hspace{-1.2pt}V}
\newcommand{\Six}{V\hspace{-1.2pt}I}

\allowdisplaybreaks[3]

\title{Supervised Learning as Lossy Compression: Characterizing Generalization and Sample Complexity via Finite Blocklength Analysis}

\author{%
  Kosuke Sugiyama \\
  Waseda University\\
  3-4-1 Okubo, Shinjuku, Tokyo 169-8555, Japan \\
  \texttt{kohsuke0322@asagi.waseda.jp} \\
  \And
  Masato Uchida \\
  Waseda University \\
  3-4-1 Okubo, Shinjuku, Tokyo 169-8555, Japan \\
  \texttt{m.uchida@waseda.jp} \\
}

\begin{document}

\maketitle

\begin{abstract}

This paper presents a novel information-theoretic perspective on generalization in machine learning by framing the learning problem within the context of lossy compression and applying finite blocklength analysis.
In our approach, the sampling of training data formally corresponds to an encoding process, and the model construction to a decoding process.
By leveraging finite blocklength analysis, we derive lower bounds on sample complexity and generalization error for a fixed randomized learning algorithm and its associated optimal sampling strategy.
Our bounds explicitly characterize the degree of overfitting of the learning algorithm and the mismatch between its inductive bias and the task as distinct terms.
This separation provides a significant advantage over existing frameworks.
Additionally, we decompose the overfitting term to show its theoretical connection to existing metrics found in information-theoretic bounds and stability theory, unifying these perspectives under our proposed framework.

\end{abstract}

\section{Introduction}
\label{sec:intro}

Learning theory aims to characterize the relationship between learning algorithm properties and generalization performance.
Various frameworks quantify these properties to assess generalization error and sample complexity.
Specifically, these include uniform convergence based on hypothesis set complexity \cite{mohri2018foundations}, PAC-Bayes focusing on the divergence between prior and posterior distributions \cite{mcallester1999pac}, and stability theory grounded in sensitivity to training data \cite{bousquet2002stability,elisseeff2005stability}.
Given the strong connection between machine learning and information theory, recent works have explored information-theoretic bounds (IT-bounds) using mutual information to measure data-hypothesis dependence \cite{xu2017information}, compression bounds emphasizing model compressibility \cite{shalev2014understanding,arora2019stronger,sefidgaran2022rate}, and frameworks drawing analogies to lossy compression \cite{berger1975rate,nokleby2016rate,jeon2022an}.

In this paper, we focus on the analogy between machine learning and lossy compression as utilized by \cite{nokleby2016rate,jeon2022an}.
This perspective is grounded in a natural correspondence where the complete information of the true distribution is lossily compressed into a finite training dataset, and then decoded into a predictive distribution via learning.
Based on this premise, prior works have analyzed the sample complexity and generalization error of optimal Bayesian learning.
Specifically, they adapt the asymptotic analysis of the minimum rate for compressing infinite symbols to the learning problem.
By linking this rate to sample complexity within a Bayesian framework, these analysis is performed.

Conversely, lossy compression theory has evolved to include finite blocklength analysis \cite{kostina2012fixed}, which characterizes the minimum rate in a {\it non-asymptotic} setting.
This method explicitly quantifies the uncertainty due to the finite blocklength, thereby providing a more detailed analysis than the asymptotic approach utilized in \cite{nokleby2016rate,jeon2022an}.

In this paper, building on this analogy, we investigate the potential of generalization analysis enabled by the novel application of finite blocklength analysis of lossy compression.
To achieve this, we must formulate the entire process of data sampling and learning as lossy compression to enable finite blocklength analysis.
However, prior works provide only a conceptual analogy without detailed specifications, leaving it uncertain whether a concrete correspondence can be established.

As our first contribution, we present a correspondence enabling the application of finite blocklength analysis (Figure \ref{fig:correspondence_overview}).
We formalize the analogy by viewing the training dataset as a codeword of the true distribution, its sampling as encoding, and learning as decoding.
Accordingly, the sampling strategy and learning algorithm correspond to the encoder and decoder, respectively.
However, this correspondence requires aligning the two-step process of symbol observation and encoding with single-step sampling from the true distribution.
We resolve this by redefining, without loss of generality, sampling as two steps: stochastically determining observable regions in the input space, and sampling from them.
By mapping symbols to these regions and the source distribution to the region distribution, we establish a valid correspondence.
This yields a natural mapping where block coding corresponds to sampling from multiple regions, rate to training data size, and distortion to generalization error on observable regions.

Our second contribution is to demonstrate that this correspondence allows us to leverage finite blocklength analysis, specifically the derivation techniques for lower bounds of minimum rates and distortions, to analyze generalization.
Through this approach, we derive lower bounds on sample complexity and generalization error for any fixed learning algorithm and its associated optimal sampling strategy.
A key distinction of our work is its flexibility: whereas \cite{nokleby2016rate,jeon2022an} are limited to optimal Bayesian learning, our lower bounds can be evaluated for any learning algorithm of interest.
Furthermore, this framework considers arbitrary sampling strategies beyond i.i.d. sampling, where training data are drawn independently from the true distribution.
Consequently, this lower bound establishes a fundamental limit for any strategy, including active learning \cite{settles2009active}.

Our third contribution is to demonstrate the utility of these lower bounds through two distinct advantages.
First, our bounds allow for an explicit separation between the term representing the degree of overfitting of the learning algorithm and the term quantifying the mismatch between its inductive bias and the task.
This provides a novel perspective on the relationship between generalization and both the degree of overfitting and the mismatch of inductive bias, represented as separate terms, distinct from existing frameworks.
Second, we establish a direct connection between the overfitting term in our lower bounds and the overfitting metrics appearing in the error upper bounds of and IT-bounds \cite{xu2017information,hellstrom2020generalization} stability theory \cite{bousquet2002stability,elisseeff2005stability}.
This theoretical connection serves as strong evidence supporting the validity of our framework.
Moreover, these two benefits provide a new perspective on the theoretical limits of active learning, which have traditionally been analyzed based on the properties of hypothesis sets and data distributions \cite{,castro2008minimax,raginsky2011lower,hanneke2015minimax,yuan2024regimes,hanneke2025agnostic}.

\section{Preliminaries}
\label{sec:preliminaries}

\textbf{Machine Learning Process.}
Let $\bm{X}$ and $Y$ be random variables representing the input instance and label, taking values in $\mathcal{X}$ and $\mathcal{Y}$, respectively, with realizations denoted by $\bm{x}$ and $y$. 
The pair $(\bm{X}, Y)$ follows a true distribution $P^*_{\bm{X}Y}(\bm{x},y)$ with support $\mathrm{supp}(P^{*}_{\bm{X}Y})$.
The target of learning is the $P^*_{Y|\bm{X}}(y|\bm{x})$.
To this end, a training dataset $\{(\bm{x}_i,y_i)\}^n_{i=1}$ ($n \in \mathbb{N}_+$) is collected from $P^*_{\bm{X}Y}$ via a sampling strategy such as i.i.d. sampling and active learning \cite{settles2009active}.
Next, a hypothesis $h:\mathcal{X} \rightarrow \mathcal{Y}$ is selected from a hypothesis class $\mathcal{H}$ by a learning algorithm using the training dataset.
Let $H$ be the random variable representing this hypothesis. 
If $h$ is a probabilistic model, it is denoted as $\widehat{P}^{h}_{Y|\bm{X}}$.
The generalization error is measured using a divergence $\mathcal{D}(\cdot || \cdot)$ or a loss function $l:\mathcal{Y}\times\mathcal{Y}\rightarrow \mathbb{R}_+$, and is expressed as $\mathcal{D}(P^*_{Y|\bm{X}} || \widehat{P}^{h}_{Y|\bm{X}})$ or $\mathbb{E}_{P^*_{\bm{X}Y}}[l(h(\bm{X}),Y)]$.

\textbf{Fixed-Length Lossy Compression.}
Lossy compression is a framework for compressing (encoding) a symbol $A$, following a source distribution $P_A$, into a codeword and decoding it, permitting decoding errors \cite{berger1975rate}.
Specifically, an encoder $f$ maps $A$ to one of $M \in \mathbb{N}_+$ codewords, and a decoder $g$ reconstructs it.
In fixed-length compression, each codelength is $\log M$ bits, representing the compression level.
Typically, block coding is used to compress a sequence $A^{k}=(A_1,\dots,A_k)$ (where $k \in \mathbb{N}_+$) jointly, rather than single symbols.
Block coding reduces the {\it rate} $R=\log M /k$ as $k$ grows; thus, classical theory analyzes the minimum rate as $k \rightarrow \infty$.
The decoding error, termed {\it distortion}, is quantified by a measure $\mathsf{d}(\cdot;\cdot)$ as $\mathsf{d}(A^k; g(f(A^k))).$
Rate and distortion exhibit a trade-off: higher rates retain more information, reducing distortion (and vice versa).
As $k \rightarrow \infty$, this optimal trade-off is characterized by the rate-distortion function.
It is defined as the minimum rate $R(d)$ achieving a distortion level $d \in \mathbb{R}_+$.
Classical theory derives $R(d)$ to reveal the theoretical compression limits.

\textbf{Finite Blocklength Analysis.}
Whereas traditional information theory deals with asymptotic limits ($k \rightarrow \infty$), finite blocklength analysis characterizes non-asymptotic limits where $k < \infty$ \cite{polyanskiy2010channel}.
We outline the framework for fixed-length lossy compression provided by \cite{kostina2012fixed}.
This approach defines the minimum rate $R(k,d,\epsilon)$ required to maintain the distortion within a level $d$ with probability at least $1-\epsilon$, i.e., satisfying $\mathbb{P}[\mathsf{d}(A^k; g(f(A^k))) > d] \le \epsilon$. 
Specifically, $R(k,d,\epsilon)$ is characterized as:
\begin{align}
    \textstyle R(k, d, \epsilon) = R(d) + \sqrt{\frac{V(d)}{k}} Q^{-1}(\epsilon) + O\big(\frac{\log k}{k}\big). 
\label{eq:FBA_example}
\end{align}
Here, $Q(\cdot)$ is the standard normal complementary cumulative distribution function.
The term $V(d)$, known as the rate-dispersion function, represents the variance of the {\it tilted information} (corresponding to the per-symbol minimum codelength). 
Essentially, $V(d)$ quantifies the uncertainty due to finite blocklength $k$, capturing the stochastic fluctuation in the minimum codelength required for compressing $A^k \sim P_{A^k}$.
The derivation of this result involves proving the achievability part (existence of a code achieving $R(k,d,\epsilon) \le \dots$) and the converse part (a fundamental lower bound $R(k,d,\epsilon)\ge \dots$ satisfied by any valid code).
Additionally, this approach can analyze the minimum achievable distortion $d$ for a fixed rate $R$.
We leverage this analysis to construct a new theoretical framework for machine learning.

\section{Machine Learning Process as  Fixed-Length Lossy Compression}
\label{sec:lc_correspondence}

\begin{figure*}[t]
    \centering
    \includegraphics[width=0.95\linewidth]{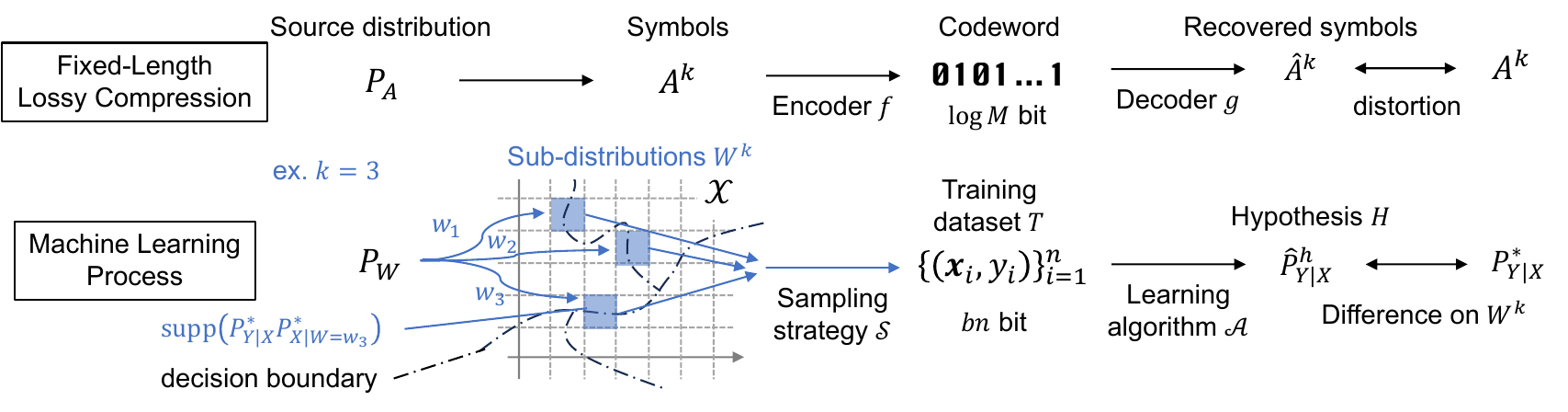}
    \caption{
             Schematic diagram of the correspondence between the machine learning process and fixed-length lossy compression.
            }
    \label{fig:correspondence_overview}
\end{figure*}

In this section, we establish a correspondence between the machine learning process and fixed-length lossy compression, as outlined in Figure \ref{fig:correspondence_overview}.
The core insight lies in interpreting machine learning as a process where the target information $P^{*}_{Y|\bm{X}}$ is compressed into a training dataset through sampling, from which a hypothesis $\widehat{P}^h_{Y|\bm{X}}$ approximating $P^{*}_{Y|\bm{X}}$ is reconstructed via learning.
This process parallels the mechanism of lossy compression, which compresses a symbol into a codeword and decodes it to recover an approximation of the original symbol.
Fundamentally, both share the same abstract structure: compressing an object and reconstructing a close approximation from its compressed representation.
Building on this perspective, we define the following correspondence to develop a theoretical framework grounded in finite blocklength analysis.
We start by considering the case of $k=1$.

\textbf{Codeword and Encoding:}
We first map the codeword to the training dataset $\{(\bm{x}_i,y_i)\}^n_{i=1}$, viewing both as compressed representations.
Because encoding generates codewords and sampling generates datasets, the encoding process (encoder) corresponds to the sampling process (sampling strategy).
While the compression level of encoding is expressed by the codelength $\log M$ bits, the description length of the dataset $\{(\bm{x}_i,y_i)\}^n_{i=1}$ is $bn$ bits, assuming each sample $(\bm{x},y)$ is represented by a constant $b \in \mathbb{N}_+$ bits.
We adopt this assumption hereafter, and the compression level of sampling is expressed as $bn$ bits.

\textbf{Source Distribution and Symbol:}
Encoding compresses a symbol into a codeword.
By mapping the codeword to the training dataset, we view an encoder as a sampling strategy, implying that the symbol corresponds to the target distribution of sampling.
A challenge arises, however, because a symbol is a random variable (or realization), whereas $P^{*}_{\bm{X}Y}$ is a distribution; they do not directly correspond. 
Addressing this mismatch requires a formulation where the probability distribution itself is treated as a random variable.
To this end, we focus on the data generation process: $\bm{x} \sim P^{*}_{\bm{X}} \rightarrow y \sim P^{*}_{Y|\bm{X}=\bm{x}}$.
Let us assume $P^{*}_{\bm{X}}$ is a mixture distribution given by $P^{*}_{\bm{X}}(\bm{x})=\sum_{w\in\mathcal{W}}P^{*}_{\bm{X}|W}(\bm{x}|w)P_{W}(w)$, where $W$ is a random variable on $\mathcal{W}$ following $P_W$.
Here, $W$ serves as the index for the mixture component, with realization $w$.
Assuming $W$ is sampled i.i.d. from $P_W$, the data generation process from $P^{*}_{\bm{X}Y}$ can be described as $w \sim P_W \rightarrow \bm{x} \sim P^{*}_{\bm{X}|W=w} \rightarrow y \sim P^{*}_{Y|\bm{X}=\bm{x}}$.
In particular, we postulate that the support of $P^{*}_{\bm{X}|W}$ varies for each $w$.
This process consists of two stages: (i) the stochastic determination of the sampling target $P^{*}_{\bm{X}Y|W}=P^{*}_{Y|\bm{X}}P^{*}_{\bm{X}|W}$ (or the observable region $\mathrm{supp}(P^{*}_{\bm{X}Y|W})$), and (ii) the observation of a sample according to $P^{*}_{\bm{X}Y|W}$.

Generalizing step (ii) to an arbitrary sampling strategy yields the following workflow for training dataset generation:
an index $w$ (representing $P^{*}_{\bm{X}Y|W=w}$) is drawn from $P_{W}$, followed by the acquisition of data from $P^{*}_{\bm{X}Y|W=w}$ using the sampling strategy.
This parallels the process of lossy compression, where a symbol $A$ is drawn from a source distribution $P_A$ and subsequently encoded into a codeword.
Thus, we map the source distribution to $P_{W}$ and the symbol to $W$ (or $P^{*}_{\bm{X}Y|W}$).
We term the component $P^{*}_{\bm{X}Y|W=w}$ a {\it sub-distribution} of $P^{*}_{\bm{X}Y}$.
For the sake of brevity, we refer to $w$ and $P^{*}_{\bm{X}Y|W=w}$ interchangeably throughout this paper.

This mixture representation of $P^{*}_{\bm{X}}$ can always be constructed by introducing an auxiliary variable $W$ without loss of generality.
For example, if $\bm{X}$ contains discrete latent factors, $P^{*}_{\bm{X}}$ is naturally expressed as a mixture of $P^{*}_{\bm{X}|W}$.
In the general case, we can partition $\mathrm{supp}(P^{*}_{\bm{X}})$ into multiple regions and index each region by $w$, defining $P^{*}_{\bm{X}|W=w}$ as the distribution normalized within each region and $P_W(w)$ as the probability mass of $P^{*}_{\bm{X}}$ on that region.
Figure \ref{fig:correspondence_overview} illustrates a construction example based on grid partitioning.
The choice of $W$ is not unique and may be tailored to the analytical objectives.

\textbf{Block Coding:}
Next, we consider block coding where $k > 1$.
This setting corresponds to sampling $k$ indices $\{w_i\}^k_{i=1}$ from $P_{W^k}=P_W\times\cdots\times P_W$ and obtaining $n$ training samples by applying a sampling strategy to the associated sub-distributions $\{P^{*}_{\bm{X}Y|W=w_i}\}^k_{i=1}$.
For $k=1$, a single sub-distribution is determined, and $n$ samples are obtained via the strategy.
In contrast, for $k>1$, $k$ sub-distributions are determined, and a total of $n$ samples are acquired via the strategy.
Thus, $k$ signifies the number of sub-distributions to which the sampling strategy can be applied.
We term this quantity {\it sampling opportunities}.

\textbf{Rate:}
Since the compression rate is $R=\frac{\log M}{k}$, we define the sampling rate as $R=b\frac{n}{k}$ via our correspondence. 
The term $\frac{n}{k}$ denotes the number of samples per sampling opportunity, serving as a measure of sampling efficiency.
With $b$ constant, for a fixed $k$, the rate $R$ corresponds one-to-one with the training data size $n$, allowing us to treat them as interchangeable quantities.

For example, consider obtaining $n$ samples from $P^{*}_{\bm{X}Y}$ via i.i.d. sampling.
In this case, the process $w \sim P_W \rightarrow (\bm{x},y) \sim P_{\bm{X}Y|W}$ repeats $n$ times, meaning $k=n$ and the sampling efficiency is $1$.
As another example, consider an adaptive experimental design where $50$ samples are collected from $100$ observable regions $\{\mathrm{supp}(P^{*}_{\bm{X}Y|W=w_i})\}^{100}_{i=1}$; in this case, the sampling efficiency is $0.5$.

\textbf{Decoding:}
Decoding in lossy compression reconstructs an approximate symbol from a codeword.
This parallels the learning, which derives a hypothesis approximating the sub-distribution from a training dataset.
Thus, the decoder maps to the learning algorithm.

\textbf{Distortion:}
Distortion quantifies the deviation of the decoded output from the original symbol.
Under our framework, this maps to the discrepancy between $P^*_{Y|\bm{X}}$ and the learned model $P^{h}_{Y|\bm{X}}$ on the sub-distribution $W$, specifically, the generalization error of $h$ on $W$.

In the following section, we show that this correspondence enables the application of the converse part from finite blocklength analysis of lossy compression \cite{kostina2012fixed}, allowing us to derive new lower bounds for sample complexity and generalization error.

\section{Theoretical Analysis}
\label{sec:lc_analysis}

\paragraph{Notation}
Let $T$ be the random variable for the training dataset $\{(\bm{x}_i,y_i)\}^n_{i=1}$ on the domain $\mathcal{T}$, with realization $t$ ($|t|=n$). 
We define the sampling strategy $\mathcal{S}:\mathcal{W}^k \rightarrow \mathcal{T}$ to map the sub-distribution sequence $W^k = (W_1,\dots,W_k)$ to $T$, and the randomized learning algorithm $\mathcal{A}: \mathcal{T} \rightarrow \mathcal{H}$ to map $T$ to a hypothesis $H$.
The entire process is thus given by $H=\mathcal{A}(\mathcal{S}(W^k))$. 
Additionally, $\mathcal{S}$ and $\mathcal{A}$ are represented by Markov kernels $P^{\mathcal{S}}_{T|W^k}$ and $P^{\mathcal{A}}_{H|T}$, respectively, where $P^{\mathcal{S}}_{T|W}(t|w)=0$ for any $t$ unobtainable from $w$.
The distortion measure can be defined as $\mathsf{d}(W;H) := \mathbb{E}_{P^*_{\bm{X}|W}}[\mathcal{D}(P^{*}_{Y|\bm{X}}(Y|\bm{X}) || \hat{P}^{H}_{Y|\bm{X}}(Y|\bm{X}))]$ for probabilistic models, and $\mathsf{d}(W;H) := \mathbb{E}_{P^{*}_{Y|\bm{X}}P^*_{\bm{X}|W}}[l(Y,H(\bm{X}))]$ for deterministic models.
For $k>1$, let $\mathsf{d}(W^k;H) := \frac{1}{k} \sum^k_{i=1} \mathsf{d}(W_i;H)$.
This measure evaluates prediction error on $\mathrm{supp}(\frac{1}{k} \sum^k_{i=1}P^*_{\bm{X}|W_i})$ instead of the entire support $\mathrm{supp}(P^{*}_{\bm{X}})$.
This implies that, based on the $W^k$ used to obtain $T$, regions where training samples are unavailable are treated as out-of-distribution and excluded from the error evaluation.
Since each $W_i$ is sampled independently from $P_W$, as $k \rightarrow \infty$, $d(W^k;H)$ converges to $\mathbb{E}_{P^*_{\bm{X}Y}}[l(Y,H(\bm{X}))]$ or $\mathbb{E}_{P^*_{\bm{X}}}[\mathcal{D}(P^{*}_{Y|\bm{X}}(Y|\bm{X}) || \hat{P}^{H}_{Y|\bm{X}}(Y|\bm{X}) )]$ \cite{van2000asymptotic}.
Thus, $\mathsf{d}(W^k;H)$ asymptotically coincides with the expected risk in learning theory \cite{mohri2018foundations}.
We assume that $\mathsf{d}(\cdot;\cdot)$ is bounded (Assumption (I)).

\subsection{Definition of Key Concepts}
\label{subsec:lc_def_key_concepts}

In active learning, sampling strategy effectiveness depends on the learning algorithm \cite{hanneke2014theory,sloman2022characterizing}.
Thus, a unified treatment of sampling and learning must account for this interdependence.
Our framework analyzes the optimal sampling strategy for a fixed learning algorithm $\mathcal{A}$.
As a preliminary, we introduce key definitions based on \cite{kostina2012fixed,kostina2016nonasymptotic}; details are provided in Appendix~\ref{apdx:sec:lc_detail_def_interpret}.

\paragraph{``Good'' Sampling Strategies}
The converse part in \cite{kostina2012fixed}, which we apply here, restricts its focus to "good" codes that achieve low distortion with high probability.
Accordingly, building on Definitions 1 and 2 in \cite{kostina2012fixed}, we introduce Definition \ref{def:lc_k_n_d_e_A_sampling} to characterize a class of ``good'' sampling strategies capable of achieving small distortion with high probability via $\mathcal{A}$ for a finite sampling opportunities $k$, which serves as the primary subject of our analysis.
Furthermore, we define the minimum training data size and minimum rate within this class.

\begin{definition}
  For sampling opportunities $k$, a $(k, n, d, \epsilon, \mathcal{A})$ sampling in $\{\mathcal{W}^k, \mathcal{H}, P_{W^k}, \mathsf{d}:\mathcal{W}^k\times\mathcal{H} \rightarrow [0,+\infty]\}$ is defined as a sampling strategy $\mathcal{S}$ that satisfies the training data size $|\mathcal{S}(W^k)|=n$ and the condition $\mathbb{P}[\mathsf{d}(W^k; \mathcal{A}(\mathcal{S}(W^k))) > d] \le \epsilon$, given a distortion level $d$, an excess distortion probability $\epsilon$, and a learning algorithm $\mathcal{A}$.
  In the case of $k=1$, it is denoted as $(n, d, \epsilon, \mathcal{A})$ sampling.
  We define the minimum training data size as $n^*(k, d, \epsilon, \mathcal{A}) := \min\{ n: \exists (k, n, d, \epsilon, \mathcal{A}) \text{ sampling} \}$ and the minimum rate as $R(k,d,\epsilon, \mathcal{A}) := \frac{b}{k}n^*(k,d,\epsilon, \mathcal{A})$.
\label{def:lc_k_n_d_e_A_sampling}
\end{definition}

Depending on the parameters $n$, $d$, and $\epsilon$, this class encompasses strategies such as i.i.d. sampling and active learning, as these methods are capable of producing high-accuracy hypotheses with sufficient data.
By definition, since satisfying $\mathbb{P}[\mathsf{d}(W^k; \mathcal{A}(\mathcal{S}(W^k))) > d] \le \epsilon$ with $\mathcal{A}$ requires at least $n^*(k, d, \epsilon, \mathcal{A})$ samples, this quantity represents the sample complexity of $\mathcal{A}$.

In Section \ref{subsec:lc_sample_comp_bound}, we apply the approach of \cite{kostina2012fixed} to derive a lower bound for this sample complexity $n^*(k, d, \epsilon, \mathcal{A})$ ($\propto R(k,d,\epsilon, \mathcal{A})$).
This analysis characterizes $R(k,d,\epsilon, \mathcal{A})$ using the asymptotic minimum rate as $k\rightarrow\infty$ and the variance of the minimum required training data size per pair $(w,h)$.
These are quantified by the {\it rate-distortion function} and the variance of {\it tilted information}, respectively.
Below, we define these quantities, building on \cite{kostina2012fixed} while adapting them to our framework.

\paragraph{Rate-Distortion Function}
We define the rate-distortion function $R(d,\mathcal{A})$, representing the minimum rate as $k\rightarrow\infty$, based on \cite{kostina2012fixed}. 
To begin, we define $\langle k,n,d,\mathcal{A} \rangle$ sampling as the class obtained by replacing the condition in Definition \ref{def:lc_k_n_d_e_A_sampling} with $\mathbb{E}_{P^{\mathcal{A}}_{H|T}P^{\mathcal{S}}_{T|W}P_W}[\mathsf{d}(W^k;H)] \le d$.
We denote the minimum rate within this class as $R(k,d,\mathcal{A})$.
$R(d,\mathcal{A})$ is then defined as $R(d,\mathcal{A}) := \limsup_{k\rightarrow\infty}R(k,d, \mathcal{A})$.
Let $\bar{R}(k,d,\mathcal{A})$ denote the minimum rates for the scheme $\{H_i = \mathcal{A}(\mathcal{S}(W_i))\}^k_{i=1}$, where training is performed for each $W$ independently.
We assume that $R(d, \mathcal{A}) \le \limsup_{k\rightarrow \infty}\bar{R}(k,d, \mathcal{A})$ (Assumption (\Two)). 
This implies that it is more efficient for $\mathcal{A}$ to learn $W^k$ collectively than to learn each $W_i$ individually.
Under this assumption, we establish the rate-distortion theorem for our proposed framework as follows, where $\mathbb{I}(\cdot;\cdot)$ denotes the mutual information.
See Appendix~\ref{apdx:proof:thm:rate_distortion_theorem} for the proof:
\begin{theorem}
    Under Assumptions (I) and (\Two), $R(d,\mathcal{A})$ is non-increasing and convex in $d$, and for any finite $d >0$, we have
    \footnote{
        We can further show that $\limsup_{k\rightarrow \infty} R(k,d,\epsilon, \mathcal{A})=\mathbb{R}_W(d, \mathcal{A})$ for any $\epsilon \in (0,1)$, but we defer the formal derivation to Theorem~\ref{thm:rate_distortion_theorem_epsilon} to maintain consistency with the subsequent definitions and assumptions in Appendices \ref{apdx:subsec:rate_distortion_theorem_RkdeA} and \ref{apdx:subsec:proof_thm:rate_distortion_theorem_epsilon}.}
    :
    \begin{align}
        \textstyle R(d,\mathcal{A})=\mathbb{R}_W(d, \mathcal{A}) := \inf_{\mathcal{S}: \mathbb{E}[\mathsf{d}(W;\mathcal{A}(\mathcal{S}(W)))] \le d}  \mathbb{I}(W;T).
        \label{eq:lc_R_W}
    \end{align}
\label{thm:rate_distortion_theorem}
\end{theorem}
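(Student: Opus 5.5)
We follow the classical rate-distortion proof: monotonicity and convexity of $\mathbb{R}_W(d,\mathcal{A})$, a converse showing $R(d,\mathcal{A})\ge \mathbb{R}_W(d,\mathcal{A})$, and an achievability argument showing $R(d,\mathcal{A})\le\mathbb{R}_W(d,\mathcal{A})$. The achievability step is the one that uses Assumption (\Two).

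\emph{Monotonicity and convexity.} Monotonicity of $R(k,d,\mathcal{A})$ in $d$ is immediate, since enlarging $d$ enlarges the feasible class of $\langle k,n,d,\mathcal{A}\rangle$ samplings; it passes to the $\limsup$. For $\mathbb{R}_W$ we use two facts. First, the feasible set of kernels $P^{\mathcal{S}}_{T|W}$ is convex, because $\mathcal{A}$ is fixed and $\mathbb{E}[\mathsf{d}(W;\mathcal{A}(\mathcal{S}(W)))]$ is linear in $P^{\mathcal{S}}_{T|W}$ once $P^{\mathcal{A}}_{H|T}$ is fixed. Second, $\mathbb{I}(W;T)$ is convex in $P_{T|W}$ for fixed $P_W$. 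Mixing optimal kernels for $d_1$ and $d_2$ with weight $\lambda$ therefore gives
\[
\mathbb{R}_W(\lambda d_1+(1-\lambda)d_2,\mathcal{A})\le \lambda\,\mathbb{R}_W(d_1,\mathcal{A})+(1-\lambda)\,\mathbb{R}_W(d_2,\mathcal{A}).
\]
Once the equality $R(d,\mathcal{A})=\mathbb{R}_W(d,\mathcal{A})$ is proved, these properties transfer to $R(d,\mathcal{A})$. One subtlety: the mixture must be a legitimate sampling strategy with support constraints. This holds because convex combinations preserve $P_{T|W}(t|w)=0$ on unobtainable $t$, and the randomized strategies can be taken as mixtures.

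\emph{Converse.} Take any $\langle k,n,d,\mathcal{A}\rangle$ sampling. Since $T$ takes at most $2^{bn}$ values,
\[
bn\ge H(T)\ge \mathbb{I}(W^k;T)\ge \sum_{i=1}^k \mathbb{I}(W_i;T),
\]
where the last inequality uses independence of the $W_i$ (chain rule plus $\mathbb{I}(W_i;T|W^{i-1})\ge \mathbb{I}(W_i;T)$). For the single-letter comparison, note that $W_i\to T\to H$ holds and that $T$ with $\mathcal{A}$ gives a valid single-letter sampling of $W_i$, namely the induced kernel $P_{T|W_i}$. Hence each term satisfies $\mathbb{I}(W_i;T)\ge \mathbb{R}_W(d_i,\mathcal{A})$ with $d_i=\mathbb{E}[\mathsf{d}(W_i;H)]$. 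A caveat is that $T$ has size $n$ rather than whatever single-letter size is intended; since $\mathbb{R}_W$ imposes no size restriction beyond the kernel structure, we treat this as admissible. Convexity and monotonicity then give
\[
\frac{bn}{k}\ge \frac1k\sum_i \mathbb{R}_W(d_i,\mathcal{A})\ge \mathbb{R}_W\Bigl(\tfrac1k\textstyle\sum_i d_i,\mathcal{A}\Bigr)\ge \mathbb{R}_W(d,\mathcal{A}).
\]
Taking $\limsup_k$ yields $R(d,\mathcal{A})\ge\mathbb{R}_W(d,\mathcal{A})$.

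\emph{Achievability (main obstacle).} Here we may not choose the decoder, since $\mathcal{A}$ is fixed, so a random-coding argument cannot directly build a codebook and a matching decoder. The plan is to use Assumption (\Two) to reduce to the separate scheme $H_i=\mathcal{A}(\mathcal{S}(W_i))$, so that $R(d,\mathcal{A})\le\limsup_k\bar R(k,d,\mathcal{A})$. We then show $\limsup_k \bar R(k,d,\mathcal{A})\le \mathbb{R}_W(d+\delta,\mathcal{A})$ for every $\delta>0$. Fix a near-optimal single-letter kernel $P^{\mathcal{S}}_{T|W}$ achieving distortion $d$ with $\mathbb{I}(W;T)\le\mathbb{R}_W(d,\mathcal{A})+\delta$. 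Generate $2^{k(\mathbb{I}(W;T)+\delta)}$ i.i.d. sequences $T^k\sim P_T^{\otimes k}$ and encode $W^k$ to a jointly typical $T^k$; this is covering-lemma style. The learner then applies $\mathcal{A}$ blockwise to the selected $T_i$. Joint typicality ensures the empirical joint law of $(W_i,T_i)$ is close to $P_WP^{\mathcal{S}}_{T|W}$, so, using boundedness of $\mathsf{d}$ from Assumption (I), the average distortion of $\mathcal{A}(T_i)$ on $W_i$ is within $\delta$ of $d$; atypical events contribute at most $\delta\cdot\sup\mathsf{d}$. The delicate point is identifying the codelength $k(\mathbb{I}+\delta)$ with $bn$, i.e.\ interpreting a codebook index as a dataset of $n$ samples consistent with the support constraint $P^{\mathcal{S}}_{T|W}(t|w)=0$. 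Typicality only selects $T^k$ jointly typical with $W^k$, which is compatible with that support, so this is plausibly fine. Finally, letting $\delta\downarrow0$ and using continuity of the convex function $\mathbb{R}_W$ at interior $d>0$ completes the argument.
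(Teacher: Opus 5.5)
Your proposal follows the paper's proof essentially step for step: monotonicity and convexity of $\mathbb{R}_W$ via the Cover--Thomas argument, the same converse chain $bn\ge\mathbb{H}(T)\ge\mathbb{I}(W^k;T)\ge\sum_i\mathbb{I}(W_i;T)\ge\sum_i\mathbb{R}_W(d_i,\mathcal{A})\ge k\,\mathbb{R}_W(d,\mathcal{A})$, and a random-coding (covering) achievability for the local scheme $\{H_i=\mathcal{A}(\mathcal{S}(W_i))\}_{i=1}^k$, transferred to $R(d,\mathcal{A})$ through Assumption (\Two). The steps you flag as delicate are treated just as informally in the paper; these are (a) treating the induced kernel $P_{T|W_i}$ as an admissible single-letter strategy and (b) identifying $\log M$ with $bn$ in the covering step. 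Your use of continuity of $\mathbb{R}_W$ to absorb the $O(\delta)$ distortion slack is slightly more careful than the paper's limit argument, but your intermediate claim has a sign slip: it should read $\limsup_k\bar{R}(k,d,\mathcal{A})\le\mathbb{R}_W(d-O(\delta),\mathcal{A})+O(\delta)$, not $\le\mathbb{R}_W(d+\delta,\mathcal{A})$.
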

To exclude extremes of $d$, we assume that $d_{\mathrm{min}} := \inf\{d: \mathbb{R}_W(d, \mathcal{A}) < \infty \}$ exists and is finite (Assumption (\Three)).
Additionally, applying the analysis of \cite{kostina2012fixed} requires assuming an optimal sampling strategy $\mathcal{S}^*$ that attains the infimum of Eq. \eqref{eq:lc_R_W} exactly as $\mathbb{E}[\mathsf{d}(W;\mathcal{A}(\mathcal{S^*}(W)))] = d$ (Assumption (\Four)).
Meeting this condition entails that $\mathcal{S}^*$ can select a minimal sample set from $\mathrm{supp}(P^*_{\bm{X}|W})$ to satisfy $\mathbb{E}_{P^{\mathcal{A}}_{H|T}P^{\mathcal{S}}_{T|W}P_W}[\mathsf{d}(W;H)] \le d$ 
This implies that $\mathcal{S}^*$ allows for the selective acquisition of any sample within $\mathrm{supp}(P^{*}_{\bm{X}Y|W})$.
Such a capability corresponds to the membership query synthesis in active learning \cite{angluin1988queries}.

\paragraph{Tilted Information}
The analysis in \cite{kostina2012fixed} quantifies the deviation of the minimum rate at finite $k$ ($<\infty$) from the rate-distortion function (the asymptotic minimum rate as $k\rightarrow\infty$) using the variance of a quantity known as tilted information.
This quantity corresponds to the minimum codelength per symbol, and its expectation coincides with the rate-distortion function.
In this section, following \cite{kostina2012fixed,kostina2016nonasymptotic}, we define tilted information within our framework.

As a preparation, we introduce the {\it information density} for $(w,t) \in \mathcal{W}\times\mathcal{T}$ distributed according to $P_{WT}(w,t)$ as $\iota_{W;T}(w;t) := \log \frac{\mathrm{d}P_{WT}}{\mathrm{d}(P_{W} \times P_{T})}(w,t)$.
By this definition, $\mathbb{E}_{P_{WT}}[\iota_{W;T}(W;T)] = \mathbb{I}(W;T)$ holds.
Specifically, when using $P^{\mathcal{S}^*}_{T|W}P_W$ with $T^{\mathcal{S}^*}
= \mathcal{S}^*(W)$, we have 
$\mathbb{E}_{P^{\mathcal{S}^*}_{T|W}P_{W}}[\iota_{W;T^{\mathcal{S}^{*}}}(W;T)] = \mathbb{I}(W;T^{\mathcal{S}^*}) = \mathbb{R}_W(d, \mathcal{A})$.
Since its expectation equals $\mathbb{R}_W(d, \mathcal{A})$, $\iota_{W;T^{\mathcal{S}^*}}(w;t)$ corresponds to the minimum value of the rate $bn$ ($k=1$) per pair $(w,h)$.
That is, $\iota_{W;T^{\mathcal{S}^*}}(w;t)$ can be interpreted as the information content of $t$ obtained from $w$ via $\mathcal{S}^*$, which corresponds to $|t|b$ [bit].
Using $\iota_{W;H^{*}_{\mathcal{A}}}$, we define the tilted information within our framework:
\begin{definition}
  For any $\mathcal{A}$ and $d > d_{\mathrm{min}}$, the $\mathcal{A}$-specified $d$-tilted information at $(w,t)$ is defined as follows 
  , with $\lambda^{*}_{\mathcal{A}}(d) = -\frac{\mathrm{d} \mathbb{R}_W(d,\mathcal{A})}{\mathrm{d}d}> 0$:
  \begin{equation}
    \jmath_{W}(w,t,h,d,\mathcal{A}) := \iota_{W;T^{\mathcal{S}^{*}}}(w;t) + \lambda^{*}_{\mathcal{A}}(d)(\mathsf{d}(w;h) - d).
  \label{eq:ML_lc_dfn_d-tilted}
  \end{equation}
\label{def:tilted_information}
\end{definition}
Since $\mathbb{E}_{P^{\mathcal{A}}_{H|T}P^{\mathcal{S}^*}_{T|W}P_{W}}[\mathsf{d}(W;H)]=d$, we have $\mathbb{E}_{P^{\mathcal{A}}_{H|T}P^{\mathcal{S}^*}_{T|W}P_{W}}[\jmath_{W}(W,T,H,d,\mathcal{A})] = \mathbb{R}_W(d, \mathcal{A})$.

\textbf{Interpretation of $\jmath_{W}$:} 
For the following reasons, $\jmath_{W}(w,t,h,d,\mathcal{A})$ can be interpreted as the training data size $\times b$ [bits] required to obtain $h$ from $w$ ``under the constraint $\mathbb{E}[\mathsf{d}(W;\mathcal{A}(\mathcal{S}(W)))]\le d$.''
First, as we discussed previously, the first term $\iota_{W;T^{\mathcal{S}^{*}}}(w;t)$ corresponds to the size of the dataset $t$ sampled via $\mathcal{S}^*$, which is $|t| b$ [bit].
The second term, $\lambda^{*}_{\mathcal{A}}(d)(\mathsf{d}(w;h) - d)$, serves as a reward or penalty based on the distortion $\mathsf{d}(w;h)$ of $h$.
This term is negative if the distortion $\mathsf{d}(w;h)$ is lower than the target $d$ and positive otherwise.
Since $R(d,\mathcal{A}) ~(=\mathbb{R}_W(d,\mathcal{A}))$ is non-increasing and convex, the coefficient $\lambda^{*}_{\mathcal{A}}(d)$ increases as the required $d$ decreases.
Denoting the unit of distortion as [dist], $\lambda^{*}_{\mathcal{A}}(d)$ has the unit [bit/dist] and can be interpreted as a conversion rate that translates the reward or penalty measured in distortion [dist] into information quantity [bit].
Thus, $\jmath_{W}$ represents the pure required training data size (the first term) adjusted, or ``tilted'', according to the specified distortion level $d$ via the second term.
Specifically, in the case where $\mathsf{d}(w;h) < d$, the second term reflects the fact that the amount of training data required to merely achieve the level $d$ is less than that required to obtain the specific $h$, resulting in $\jmath_{W}(w,t,h,d,\mathcal{A}) < \iota_{W;T^{\mathcal{S}^*}}(w;t)$ (and vice versa).

\subsection{Lower Bound of Sample Complexity}
\label{subsec:lc_sample_comp_bound}

In this section, we derive a lower bound on sample complexity by applying the converse part of the finite blocklength analysis from \cite{kostina2012fixed}.
As a preparation, we show Theorem \ref{thm:eq:ML_lossy_sc_eps_conv_bound} that describes the limit on the excess distortion probability $\epsilon$ for any $(n, d, \epsilon, \mathcal{A})$ sampling. 
Although this theorem corresponds to Theorem 7 in \cite{kostina2012fixed}, their proof method assumes optimal encoders and decoders and is thus inapplicable to our setting, where the $\mathcal{A}$ (corresponding to the decoder) is fixed.
We resolve this by adapting the approach of \cite{kostina2016nonasymptotic}, which proved a comparable result in a different context.
See Appendix~\ref{apdx:proof:thm:eq:ML_lossy_sc_eps_conv_bound} for the proof.
\begin{theorem}
  For any $d > d_{\mathrm{min}}$ and any $(n, d, \epsilon, \mathcal{A})$ sampling, the following holds:
  \begin{equation}
    \textstyle \epsilon \ge \sup_{\gamma \ge 0} \{ \mathbb{P}[\jmath_W(W,T,H,d,\mathcal{A}) \ge bn + \gamma] - e^{-\gamma} \}.
  \label{eq:ML_lossy_sc_eps_conv_bound}
  \end{equation}
\label{thm:eq:ML_lossy_sc_eps_conv_bound}
\end{theorem}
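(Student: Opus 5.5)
The plan follows the converse argument of Kostina and Verdú for a fixed decoder. The key tool is the characterization of the $d$-tilted information via the Lagrangian dual of Eq.~\eqref{eq:lc_R_W}, restricted to strategies compatible with the fixed $\mathcal{A}$.

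\textbf{Step 1: a dual inequality.} Since $\mathcal{S}^*$ attains the infimum in Eq.~\eqref{eq:lc_R_W} with $\mathbb{E}[\mathsf{d}(W;\mathcal{A}(\mathcal{S}^*(W)))]=d$ (Assumption (\Four)), and $\mathbb{R}_W$ is convex and differentiable with slope $-\lambda^*_{\mathcal{A}}(d)$, I would show the following. For every alternative kernel $P^{\mathcal{S}}_{T|W}$,
\begin{align*}
\mathbb{I}(W;T^{\mathcal{S}}) + \lambda^*_{\mathcal{A}}(d)\,\mathbb{E}[\mathsf{d}(W;H)] \ge \mathbb{R}_W(d,\mathcal{A}) + \lambda^*_{\mathcal{A}}(d)\,d .
\end{align*}
From this, as in Kostina's Lemma, I would derive the pointwise-in-expectation bound
\begin{align*}
\mathbb{E}_{P_W}\Big[\exp\Big(\jmath_W(W,t,h,d,\mathcal{A}) - \lambda^*_{\mathcal{A}}(d)\big(\mathsf{d}(W;h)-d\big)\Big)\cdots\Big] \le 1 .
\end{align*}
In usable form: for every fixed codeword/dataset $t$ and with $H\sim P^{\mathcal{A}}_{H|T=t}$, the quantity $\mathbb{E}_{P_W}\big[\exp(\jmath_W(W,t,H,d,\mathcal{A}))\,\mathbb{1}\{\cdot\}\big]$ is at most $1$. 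This is obtained by perturbing $\mathcal{S}^*$ toward a point mass on $t$ and applying the first-order optimality (KKT) condition. I would mimic the proof in Kostina--Verdú (2016), where the decoder is likewise constrained.

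\textbf{Step 2: split the event.} Fix any $(n,d,\epsilon,\mathcal{A})$ sampling $\mathcal{S}$ and write $\jmath = \jmath_W(W,T,H,d,\mathcal{A})$. Then
\begin{align*}
\mathbb{P}[\jmath \ge bn+\gamma] \le \mathbb{P}[\mathsf{d}(W;H)>d] + \mathbb{P}[\jmath \ge bn+\gamma,\ \mathsf{d}(W;H)\le d] .
\end{align*}
The first term is at most $\epsilon$.

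\textbf{Step 3: bound the second term.} On the event $\{\mathsf{d}(W;H)\le d\}$, the penalty term in Definition~\ref{def:tilted_information} is nonpositive. The key point is that $T$ ranges over at most $M=2^{bn}$ values, i.e. $e^{bn}$ in the paper's units. Using Markov's inequality in the form $\mathbb{1}\{\jmath\ge bn+\gamma\}\le \exp(\jmath-bn-\gamma)$, I would sum over the possible values $t$ instead of weighting them by $P_{T|W}$:
\begin{align*}
\mathbb{P}[\jmath \ge bn+\gamma,\ \mathsf{d}\le d] \le \sum_{t} \mathbb{E}_{P_W}\mathbb{E}_{P^{\mathcal{A}}_{H|T=t}}\Big[\mathbb{1}\{\mathsf{d}(W;H)\le d\}\exp(\jmath-bn-\gamma)\Big] \le M e^{-bn-\gamma} = e^{-\gamma},
\end{align*}
where the last inequality applies the Step 1 bound for each $t$. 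Rearranging and taking the supremum over $\gamma\ge 0$ gives Eq.~\eqref{eq:ML_lossy_sc_eps_conv_bound}.

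\textbf{Main obstacle.} The hard step is Step 1. The fixed $\mathcal{A}$ makes the optimization in Eq.~\eqref{eq:lc_R_W} one over encoders only, and the information density refers to $T^{\mathcal{S}^*}$, not to the reproduction $H$. I must therefore verify that the variational (tilting) inequality holds with $P_{T^{\mathcal{S}^*}}$ as the reference output measure, uniformly over each $t$. I would first try a direct perturbation: set $P_{T|W}^{\theta}=(1-\theta)P^{\mathcal{S}^*}_{T|W}+\theta\,\delta_t$ and differentiate at $\theta=0$. The feasibility of $\delta_t$ for every $w$ needs care because of the support restriction $P^{\mathcal{S}}_{T|W}(t|w)=0$. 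If that fails, I would restrict the sum in Step 3 to the $t$ in the support and argue there.
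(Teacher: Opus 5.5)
Your Steps 2 and 3 are the paper's proof. You split off $\{\mathsf{d}(W;H)>d\}$, which has probability at most $\epsilon$. Then $P^{\mathcal{S}}_{T|W}(t|w)\le 1$ and $\mathbbm{1}\{\jmath\ge bn+\gamma\}\le e^{\jmath-bn-\gamma}$ replace the $P^{\mathcal{S}}_{T|W}$-weighting by a plain sum over the $|\mathcal{T}_n|=2^{bn}\le e^{bn}$ datasets. (So your final ``$=e^{-\gamma}$'' is really ``$\le e^{-\gamma}$'', which is all you need.)

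You diverge at Step 1, and there you have misidentified the difficulty: no duality, KKT condition, or optimality of $\mathcal{S}^*$ is needed. The paper defines $\jmath_W$ as information density plus penalty, not through Csisz\'ar's log-moment-generating function as in Kostina--Verd\'u. So the per-$t$ bound is just a normalization identity. Since $\lambda^*_{\mathcal{A}}(d)>0$, we have $\mathbbm{1}\{\mathsf{d}(w;h)\le d\}\le e^{\lambda^*_{\mathcal{A}}(d)(d-\mathsf{d}(w;h))}$, which cancels the penalty exactly:
\[
\mathbb{E}_{P_W}\mathbb{E}_{P^{\mathcal{A}}_{H|T=t}}\big[\mathbbm{1}\{\mathsf{d}(W;H)\le d\}\,e^{\jmath_W(W,t,H,d,\mathcal{A})}\big]
\le \sum_{w}P_W(w)\,e^{\iota_{W;T^{\mathcal{S}^*}}(w;t)}
=\sum_{w}P_{W|T^{\mathcal{S}^*}}(w|t)=1 .
\]
You already state the key observation at the start of Step 3: the penalty is nonpositive on the event. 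Combining it with the normalization of $P_{W|T^{\mathcal{S}^*}=t}$ closes the argument, exactly as the paper does. In particular, the bound holds with $\iota_{W;T^{\mathcal{S}^*}}$ replaced by the information density of any reference kernel. Optimality of $\mathcal{S}^*$ only matters later, when $\mathbb{E}[\jmath_W]=\mathbb{R}_W(d,\mathcal{A})$ is used.

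Your planned KKT route is therefore superfluous, and as described it would not produce the bound. Write $\bar{\mathsf{d}}(w;t):=\mathbb{E}_{P^{\mathcal{A}}_{H|T=t}}[\mathsf{d}(w;H)]$. Perturbing the kernel $P^{\mathcal{S}^*}_{T|W}$ toward $\delta_t$ gives a first-order condition on $\mathbb{E}_{P_W}[\iota_{W;T^{\mathcal{S}^*}}(W;t)+\lambda^*_{\mathcal{A}}(d)\bar{\mathsf{d}}(W;t)]$. That is a condition on the mean of a logarithm, and it cannot yield an exponential-moment bound $\mathbb{E}[e^{X}]\le 1$, since Jensen goes the wrong way. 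A Csisz\'ar-type inequality comes instead from perturbing the output distribution $Q_T$ in the double-minimization form of the Lagrangian. It would be genuinely needed only if $\jmath$ were defined independently of $t$, as in Kostina--Verd\'u.

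Your worry is legitimate in one place: datasets $t$ with $P_{T^{\mathcal{S}^*}}(t)=0$, where $\iota_{W;T^{\mathcal{S}^*}}(w;t)$ is undefined, a point the paper silently ignores. Your fallback handles it. Adopt the convention $e^{\iota}=0$, equivalently $\jmath_W=-\infty$. Then such $t$ never enter $\{\jmath_W\ge bn+\gamma\}$, and each remaining $t$ still contributes at most $1$ to the sum.
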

This theorem demonstrates that, given $n$, $d$ and $\mathcal{A}$, the limit of $\epsilon$ for which $\mathbb{P}[\mathsf{d}(W; \mathcal{A}(\mathcal{S}(W))) > d] \le \epsilon$ can be evaluated by the probability that $\jmath_{W}$ exceeds $n$ ($\times b$). 
This result validates the plausibility of interpreting $\jmath_{W}$ as the minimum required training data size.
Furthermore, since the converse part of the analysis in \cite{kostina2012fixed} is demonstrated using their Theorem 7, which corresponds to this theorem, our result enables the application of their converse part to our framework.

As a final preparation, we make the following two assumptions based on \cite{kostina2012fixed}:
(V) The required distortion level $d$ satisfies  $d \in (d_{\mathrm{min}},d_{\mathrm{max}})$, where $d_{\mathrm{max}} := \sup \{d : \mathbb{R}_W(d,\mathcal{A})>0 \}$, and the excess distortion probability $\epsilon$ satisfies $0 < \epsilon < 1$.
(\Six) $\jmath_W(w,t, h,d,\mathcal{A})$ possesses a finite absolute third moment.
Under these assumptions, we derive a lower bound for the minimum rate $R(k,d,\epsilon, \mathcal{A})$ as following Theorem \ref{thm:ML_lossy_sc_rate_conv_bound}.
This theorem is established using Theorem \ref{thm:eq:ML_lossy_sc_eps_conv_bound} and the Berry-Esseen central limit theorem \cite{erokhin1958CLT}.
The proof and full statement are deferred to Appendices~\ref{apdx:proof:thm:ML_lossy_sc_rate_conv_bound} and \ref{apdx:sec:detail_thm:ML_lossy_sc_rate_conv_bound}.

\begin{theorem}
  Under Assumptions (I)--(\Six), for any $k$, $\mathcal{A}$, $d \in (d_{\mathrm{min}},d_{\mathrm{max}})$ and $\epsilon \in (0,1)$:
  \begin{equation}
    \textstyle R(k,d,\epsilon, \mathcal{A}) \ge R(d,\mathcal{A}) + \sqrt{\frac{V(d,\mathcal{A})}{k}}Q^{-1}(\epsilon) + \Delta_k(d,\epsilon)+ O\big(\frac{\log k}{k}\big).
  \label{eq:ML_lossy_sc_rate_conv_bound}
  \end{equation}
  Here, the term $V(d,\mathcal{A})$ denotes the rate-dispersion function, defined as $V(d, \mathcal{A}) := \mathrm{Var}_{=P^{\mathcal{A}}_{H|T}P^{\mathcal{S}^*}_{T|W} P_W}(\jmath_W(W,T,H,d,\mathcal{A}))$. 
  The term $\Delta_k(d,\epsilon)$ is defined in Appendix~\ref{apdx:sec:detail_thm:ML_lossy_sc_rate_conv_bound}.
\label{thm:ML_lossy_sc_rate_conv_bound}
\end{theorem}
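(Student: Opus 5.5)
The plan follows the converse argument of Kostina--Verdú (Theorem 12 of \cite{kostina2012fixed}), with the one-shot bound of Theorem~\ref{thm:eq:ML_lossy_sc_eps_conv_bound} in place of their Theorem 7.

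\textbf{Step 1: a one-shot bound for $k$ opportunities.} First I would lift Theorem~\ref{thm:eq:ML_lossy_sc_eps_conv_bound} from $k=1$ to general $k$. Treat $W^k$ as a single super-symbol with source $P_{W^k}=P_W^{\otimes k}$ and distortion $\mathsf{d}(W^k;H)=\frac1k\sum_i \mathsf{d}(W_i;H)$. The same argument (adapted from \cite{kostina2016nonasymptotic}) then gives, for any $(k,n,d,\epsilon,\mathcal{A})$ sampling,
\[
\epsilon \ge \sup_{\gamma\ge0}\Big\{\mathbb{P}\Big[\textstyle\sum_{i=1}^k \jmath_W(W_i,T_i,H_i,d,\mathcal{A}) \ge bn+\gamma\Big]-e^{-\gamma}\Big\}.
\]
The key point is that the $k$-letter tilted information splits into a sum of single-letter tilted informations. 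This needs two facts.
\begin{itemize}
\item The product optimizer $P^{\mathcal{S}^*}_{T|W}{}^{\otimes k}$ attains the $k$-letter infimum of $\mathbb{I}(W^k;T)$. Here I would use Assumption (\Two), which says collective learning is at least as good as per-component learning, so $R(d,\mathcal{A})$ is single-letterized as in Theorem~\ref{thm:rate_distortion_theorem}.
\item The slope $\lambda^*_{\mathcal{A}}(d)$ is unchanged under averaging the distortion, because $\mathbb{R}_{W^k}(d)=k\,\mathbb{R}_W(d)$.
\end{itemize}
Assumption (\Four) ensures the optimizer meets the constraint with equality, so $\mathbb{E}[\jmath_W]=R(d,\mathcal{A})$ per letter.

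\textbf{Step 2: Berry--Esseen.} Write $J_i=\jmath_W(W_i,T_i,H_i,d,\mathcal{A})$ for the i.i.d.\ terms. Each has mean $R(d,\mathcal{A})$, variance $V(d,\mathcal{A})$, and finite third absolute moment $\mu_3$ by Assumption (\Six). By \cite{erokhin1958CLT},
\[
\mathbb{P}\Big[\textstyle\sum_i J_i\ge bn+\gamma\Big]\ge Q\Big(\frac{bn+\gamma-kR(d,\mathcal{A})}{\sqrt{kV}}\Big)-\frac{B}{\sqrt k},\qquad B=\frac{c\,\mu_3}{V^{3/2}}.
\]
Now choose $\gamma=\tfrac12\log k$, so that $e^{-\gamma}=k^{-1/2}$. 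Combined with Step 1 this gives
\[
\epsilon+\frac{B+1}{\sqrt k}\ge Q\Big(\frac{bn+\tfrac12\log k-kR}{\sqrt{kV}}\Big).
\]
Inverting $Q$ for $n=n^*(k,d,\epsilon,\mathcal{A})$ and dividing by $k$ yields
\[
R(k,d,\epsilon,\mathcal{A})\ge R(d,\mathcal{A})+\sqrt{\frac{V}{k}}\,Q^{-1}\Big(\epsilon+\frac{B+1}{\sqrt k}\Big)-\frac{\log k}{2k}.
\]
Define $\Delta_k(d,\epsilon)$ as the gap between this expression and its leading-order version:
\[
\Delta_k(d,\epsilon):=\sqrt{\frac{V}{k}}\Big[Q^{-1}\Big(\epsilon+\frac{B+1}{\sqrt k}\Big)-Q^{-1}(\epsilon)\Big].
\]
This matches the stated form. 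A Taylor expansion of $Q^{-1}$ shows $\Delta_k=O(1/k)$ for fixed $\epsilon\in(0,1)$, which is where Assumption (V) is used. When $V=0$ I would handle it separately with Chebyshev, or simply keep $\Delta_k$ as defined.

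\textbf{Main obstacle.} I expect Step 1 to be the hardest, specifically the additivity of the tilted information over $k$ letters. The algorithm $\mathcal{A}$ is fixed and acts on the whole dataset $T$, producing one $H$ for all of $W^k$. So it is not a priori clear that the optimal $k$-letter sampler is a product, or that $\jmath$ decomposes. I would first try to prove the super-symbol form of Theorem~\ref{thm:eq:ML_lossy_sc_eps_conv_bound} directly, with $\jmath_{W^k}$ defined from the $k$-letter optimizer. I would then bound it below by $\sum_i J_i$, using Assumption (\Two) together with the convexity and slope characterization of $\mathbb{R}_W$ from Theorem~\ref{thm:rate_distortion_theorem}. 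If exact additivity fails, I would use this inequality only in the direction needed for the converse, since a lower bound on the tail probability suffices.
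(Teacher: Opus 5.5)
There is a genuine gap in Step 1, and it is the obstacle you flag yourself. In the global scheme there is a single dataset $T$ and a single hypothesis $H=\mathcal{A}(T)$. The $k$-letter distortion $\frac1k\sum_i\mathsf{d}(W_i;H)$ therefore has dependent summands that share $H$, and no $T_i,H_i$ exist at all. The relevant $k$-letter tilted information is $\jmath_{\mathrm{glob}}$, built from the global optimizer $\mathcal{S}^*_{\mathrm{glob}}$ and its own slope $\lambda^{*\mathrm{glob}}_{\mathcal{A}}(d)$.

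Assumption (\Two) cannot repair this, because it is only a $\limsup_{k\to\infty}$ comparison of rates. It does not make the finite-$k$ global optimizer equal to the product $(P^{\mathcal{S}^*}_{T|W})^{\otimes k}$; fed to $\mathcal{A}$ as one concatenated dataset, that product kernel need not even satisfy the distortion constraint. Nor does it give $\lambda^{*\mathrm{glob}}_{\mathcal{A}}(d)=\lambda^*_{\mathcal{A}}(d)$.

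Your fallback, a one-sided inequality $\jmath_{\mathrm{glob}}\ge\sum_i J_i$, has no pointwise meaning, since the two sides are functions of different random variables. It can also fail in mean: the paper deliberately allows $\mu_\Delta<0$, which is exactly the case where learning $W^k$ collectively is cheaper.

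Consequently, the quantity you call $\Delta_k$ is not the statement's $\Delta_k$. Yours is a Taylor remainder of order $1/k$ that could simply be absorbed into $O(\log k/k)$. The statement's is $\Delta_k(d,\epsilon)=\frac{\mu_\Delta}{k}-\frac{3}{2k^{5/6}}\big(2V_\Delta\frac{\sqrt{V(d,\mathcal{A})}}{\phi(Q^{-1}(\epsilon))}\big)^{1/3}$, which encodes the global/local mismatch. What you would really be claiming is the mismatch-free Kostina--Verd\'u bound for the global scheme. The assumptions do not support that bound, and when $\mu_\Delta>0$ it would not even imply the stated inequality.

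The missing idea is to give up additivity and control the global--local mismatch only through its first two moments. The paper proceeds as follows.
\begin{enumerate}
\item Apply Theorem~\ref{thm:eq:ML_lossy_sc_eps_conv_bound} to the super-symbol $W^k$, using $\jmath_{\mathrm{glob}}$ under the rate-achieving sampler $\mathcal{S}^{**}_{\mathrm{glob}}(k,\epsilon)$.
\item Couple this, on the same $W^k$, with an independent local scheme ($T_i\sim P^{\mathcal{S}^*}_{T|W=W_i}$, $H_i\sim P^{\mathcal{A}}_{H|T_i}$) via $P_{\mathrm{joint}}$, and write $\jmath_{\mathrm{glob}}=\jmath_{\mathrm{local}}+\Delta\jmath$ with $\jmath_{\mathrm{local}}=\sum_i\jmath_W(W_i,T_i,H_i,d,\mathcal{A})$.
\item Set $E_1=\{\jmath_{\mathrm{local}}\ge kR(k,d,\epsilon,\mathcal{A})+\gamma-\mu_\Delta+\nu\}$ and $E_2=\{\Delta\jmath\ge\mu_\Delta-\nu\}$. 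The bound $\mathbb{P}[E_1\cap E_2]\ge\mathbb{P}[E_1]+\mathbb{P}[E_2]-1$ and Cantelli's inequality $\mathbb{P}[E_2^c]\le V_\Delta/(V_\Delta+\nu^2)$ give $\mathbb{P}[E_1]\le\epsilon+e^{-\gamma}+V_\Delta/(V_\Delta+\nu^2)$.
\item Only now apply Berry--Esseen, to the genuinely i.i.d.\ sum $\jmath_{\mathrm{local}}$.
\item Linearize $Q^{-1}$, take $\gamma=\frac12\log k$, and bound $V_\Delta/(V_\Delta+\nu^2)\le V_\Delta/\nu^2$.
\item Optimize at $\nu^*=(2V_\Delta C)^{1/3}k^{1/6}$ with $C=\sqrt{V(d,\mathcal{A})}/\phi(Q^{-1}(\epsilon))$, which produces exactly the two terms of $\Delta_k$.
\end{enumerate}
Your Steps 1--2, carried out carefully, are essentially the paper's alternative argument in Appendix~\ref{apdx:sec:alternative_ver_rate_conv_bound}. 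That argument is Kostina--Verd\'u applied to the local scheme, with the mismatch carried by the tautological term $R_\Delta=R(k,d,\epsilon,\mathcal{A})-\widebar{R}(k,d,\epsilon,\mathcal{A})$ rather than by $\Delta_k$.
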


Via $n^*(k,d,\epsilon, \mathcal{A})=\frac{k}{b}R(k,d,\epsilon, \mathcal{A})$, Theorem \ref{thm:ML_lossy_sc_rate_conv_bound} also lower-bounds sample complexity.
This theorem characterizes the minimum rate at finite $k$ via the asymptotic rate $R(d,\mathcal{A})$ and the dispersion $V(d,\mathcal{A})$, reflecting the stochastic region determination of $\mathrm{supp}(\{P^{*}_{\bm{X}|W=W_i}\}^k_{i=1})$.
This analysis models the fundamental sampling uncertainty, persisting even with optimal $\mathcal{S}^*$, as $k$ stochastic region determinations.
By capturing all randomness of $W$, $\mathcal{S}^*$, and $\mathcal{A}$ in $V(d,\mathcal{A})$, we evaluate the theoretical minimum rate limit.
Larger $k$ reduces randomness and weakens $V(d,\mathcal{A})$'s effect, whereas stricter $\epsilon$ increases dependence on $W^k$, strengthening it.
Moreover, $\Delta_k(d,\epsilon)$ quantifies the efficiency with which $\mathcal{A}$ learns the entire sequence $W^k$, taking a larger negative value as this efficiency increases (see Appendix~\ref{apdx:sec:detail_thm:ML_lossy_sc_rate_conv_bound}).
Since $R(k,d,\epsilon, \mathcal{A})$ is the minimum rate satisfying $\mathbb{P}[\mathsf{d}(W^k; \mathcal{A}(\mathcal{S}(W^k))) > d] \le \epsilon$, this analysis is analogous to the concept of sample complexity in PAC learning \cite{mohri2018foundations}.

The lower bound derived via finite blocklength analysis exhibits two distinct features:
(i) $V(d,\mathcal{A})$ can be decomposed into two terms that separately evaluate the degree of overfitting of $\mathcal{A}$ and the mismatch between the inductive bias of $\mathcal{A}$ and the task.
This decomposition allows for an analysis of how these factors relate to $n^*(k,d,\epsilon, \mathcal{A})$.
(ii) Many components of $R(d,\mathcal{A})$ and $V(d,\mathcal{A})$ are directly related to representative quantities found in IT-bounds and stability theory.
These features imply that our framework enables a multifaceted characterization of the theoretical limits of learning.
Furthermore, the fact that our results relate to existing theories derived from completely different perspectives supports the theoretical validity of our framework.
We outline these features below; details and proofs are provided in Appendices~\ref{apdx:sec:bridge_ours_others} and \ref{apdx:subsec:derive_V_decomposition}, respectively.

Feature (i) follows from the variance decomposition of $V(d,\mathcal{A})$ over $P_W$: 
\begin{align}
\begin{split}
  &V(d,\mathcal{A})=\
    \underbracket[0.5pt]{\mathbb{E}_{P_W}\big[ \mathrm{Var}_{P^{\mathcal{A}}_{H|T}P^{\mathcal{S}^*}_{T|W}} (\jmath_{W}(W,T,H,d,\mathcal{A})) \big]  }_{=: V_{\mathrm{in}}(d,\mathcal{A})}
    + \underbracket[0.5pt]{\mathrm{Var}_{P_W}\big( \mathbb{E}_{P^{\mathcal{A}}_{H|T}P^{\mathcal{S}^*}_{T|W}}[\jmath_{W}(W,T,H,d,\mathcal{A})] \big)   }_{=: V_{\mathrm{bet}}(d,\mathcal{A})}. \notag
\end{split}
\end{align}

\textbf{Interpretation of $V_{\mathrm{in}}$ and $V_{\mathrm{bet}}$:}
As argued below, $V_{\mathrm{in}}$ and $V_{\mathrm{bet}}$ can be interpreted as measures of $\mathcal{A}$'s overfitting and inductive bias mismatch, respectively.
First, $V_{\mathrm{in}}$ is the expectation of $\mathrm{Var}_{P^{\mathcal{A},\mathcal{S}^*}_{H|W}} (\jmath_{W})$ over $W$, capturing fluctuations in the required data size $\jmath_W$ due to $\mathcal{S}^*$ and $\mathcal{A}$.
We assume hereafter that the optimal $\mathcal{S}^*$ samples freely, yielding $T$ with low variance.
A large $V_{\mathrm{in}}$ implies $\mathcal{A}$'s high sensitivity to small variations in $T \sim P^{\mathcal{S}^*}_{T|W}$ or significant inherent randomness of $\mathcal{A}$.
As detailed later, expanding $V_{\mathrm{in}}$ yields terms corresponding to overfitting metrics in IT-bounds and stability theory.
Thus, $V_{\mathrm{in}}$ effectively measures overfitting.

In contrast, $V_{\mathrm{bet}}$ measures the variation of the average required data size $\mathbb{E}_{P^{\mathcal{A}}_{H|T}P^{\mathcal{S}^*}_{T|W}}[\jmath_{W}]$ across $W$.
A large $V_{\mathrm{bet}}$ implies a significant disparity between $W$ easy for $\mathcal{A}$ to learn and those difficult.
For instance, if $\mathcal{A}$ optimizes a linear model, $W$ exhibiting a linear relationship between $\bm{X}$ and $Y$ are easy to learn, while non-linear ones are challenging; this discrepancy leads to a large $V_{\mathrm{bet}}$.
Conversely, if $\mathcal{A}$ requires a consistent data size across all $W$, $V_{\mathrm{bet}}$ remains small.
Thus, when $\mathcal{A}$ suits some $W$, $V_{\mathrm{bet}}$ quantifies the mismatch between the task $P^*_{\bm{X}Y}$ and $\mathcal{A}$'s inductive bias.

Next, we outline $V_{\mathrm{in}}$, $V_{\mathrm{bet}}$, and $R(d,\mathcal{A})$, and their connections to existing theories (Feature (ii)).

\textbf{Details of $V_{\mathrm{in}}$:}
Using the definition of $\jmath_{W}$ and variance decomposition over $P^{\mathcal{S}^*}_{T|W}$ and $P^{\mathcal{A}}_{H|T}$, $V_{\mathrm{in}}$ admits a unique decomposition that facilitates a detailed analysis of $\mathcal{A}$:

\begin{align}
    &V_{\mathrm{in}}(d,\mathcal{A}) = \mathbb{E}_{P_W}[ V^{\iota}_{\mathrm{in},\mathcal{S}} 
        +  (\lambda^{*}_{\mathcal{A}}(d))^2(V^{\mathsf{d}}_{\mathrm{in},\mathcal{S}} + V^{\mathsf{d}}_{\mathrm{in},\mathcal{A}}) + 2\lambda^{*}_{\mathcal{A}}(d) V^{\mathrm{cov}}_{\mathrm{in}}], \notag \\ 
    &\text{where~} V^{\iota}_{\mathrm{in},\mathcal{S}} := \mathrm{Var}_{P^{\mathcal{S}^*}_{T|W}} (\iota_{W;T^{\mathcal{S}^*}}(W;T) ), 
    V^{\mathsf{d}}_{\mathrm{in},\mathcal{S}} := \mathrm{Var}_{P^{\mathcal{S}^*}_{T|W}} ( \mathbb{E}_{P^{\mathcal{A}}_{H|T}} [ \mathsf{d}(W;H) ]  ),  \notag \\
    &~~~~~~~~~~ V^{\mathsf{d}}_{\mathrm{in},\mathcal{A}} := \mathbb{E}_{P^{\mathcal{S}^*}_{T|W}} [ \mathrm{Var}_{P^{\mathcal{A}}_{H|T}} ( \mathsf{d}(W;H) )  ], V^{\mathrm{cov}}_{\mathrm{in}} :=\mathrm{Cov}_{P^{\mathcal{A}}_{H|T}P^{\mathcal{S}^*}_{T|W}} (\iota_{W;T^{\mathcal{S}^*}}(W;T), \mathsf{d}(W;H)).\notag
\end{align}

$V^{\iota}_{\mathrm{in},\mathcal{S}}$ measures the degree of fluctuation in the size of $t$ when sampled via the optimal strategy $\mathcal{S}^*$.
Expanding this variance yields the term $\mathrm{Var}_{P^{\mathcal{A}}_{H|T}P^{\mathcal{S}^*}_{T|W}} (\iota_{T;H}(t;h))$. 
This quantity matches the overfitting metric in IT-bounds \cite{hellstrom2020generalization}, differing only in the sampling strategy.
Whereas \cite{hellstrom2020generalization} assumes i.i.d. sampling, our term evaluates this variance using $\mathcal{S}^*$.
Details are provided in Appendix~\ref{apdx:subsec:V_in_S_A_density}.

$V^{\mathsf{d}}_{\mathrm{in},\mathcal{S}}$ and $V^{\mathsf{d}}_{\mathrm{in},\mathcal{A}}$ quantify fluctuations in $\mathsf{d}(W;H)$  due to small variations in $T$ and the randomness of $\mathcal{A}$, respectively.
These correspond to the quantities measured by uniform stability with respect to training data variations \cite{bousquet2002stability} and the internal randomness of $\mathcal{A}$ \cite{elisseeff2005stability}, respectively.
While this theory assumes i.i.d. sampling, our terms differ only by using $\mathcal{S}^*$.
Uniform stability measures overfitting and is widely used in theoretical analyses, such as for SGD \cite{hardt2016train}. 
Details are provided in Appendix~\ref{apdx:subsec:V_in_S_A_distortion}.

$V^{\mathrm{cov}}_{\mathrm{in}}$ represents the correlation between $\iota_{W;T^{\mathcal{S}^*}}(W;T)$ and $\mathsf{d}(W;H)$.
It becomes negative if large $\iota_{W;T^{\mathcal{S}^*}}(W;T)$ correlates with small $\mathsf{d}(W;H)$.
Since hypotheses requiring more data typically achieve lower distortion, $V^{\mathrm{cov}}_{\mathrm{in}}$ is generally expected to be negative.

\textbf{Details of $V_{\mathrm{bet}}$:}
Based on the definition of $\jmath_{W}$, $V_{\mathrm{bet}}$ can be expanded as follows:

\begin{align}
    &V_{\mathrm{bet}}(d,\mathcal{A}) = V^{\iota}_{\mathrm{bet}} + (\lambda^{*}_{\mathcal{A}}(d))^2V^{\mathsf{d}}_{\mathrm{bet}} + 2\lambda^{*}_{\mathcal{A}}(d) V^{\mathrm{cov}}_{\mathrm{bet}}, ~~~~\text{where~} \iota=\iota_{W;T^{\mathcal{S}^*}}(W;T), \mathsf{d}=\mathsf{d}(W;H), \notag \\ 
    & V^{\iota}_{\mathrm{bet}} := \mathrm{Var}_{P_W}(  \mathbb{E}_{P^{\mathcal{S}^*}_{T|W}}[\iota  ]), ~V^{\mathsf{d}}_{\mathrm{bet}} := \mathrm{Var}_{P_W}(  \mathbb{E}_{P^{\mathcal{A},\mathcal{S}^*}_{H|W}}[\mathsf{d}]),
    V^{\mathrm{cov}}_{\mathrm{bet}} := \mathrm{Cov}_{P_W}( 
            \mathbb{E}_{P^{\mathcal{S}^*}_{T|W}}[\iota],  \mathbb{E}_{P^{\mathcal{A},\mathcal{S}^*}_{H|W}}[\mathsf{d}]).\notag
\end{align}

The terms $V^{\iota}_{\mathrm{bet}}$ and $V^{\mathsf{d}}_{\mathrm{bet}}$ quantify the mismatch of $\mathcal{A}$'s inductive bias in terms of $\iota_{W;T^{\mathcal{S}^*}}(W;T)$ and $\mathsf{d}(W;H)$, respectively.
Like $V^{\mathrm{cov}}_{\mathrm{in}}$, $V^{\mathrm{cov}}_{\mathrm{bet}}$ is expected to be negative.
Evaluation via $V_{\mathrm{bet}}$ parallels the concept that the required data size decreases as $\mathcal{A}$ contains more of the inductive bias required by the task \cite{canatar2021spectral,boopathy2023model,boopathy2024towards}, and the result that sample-wise difficulty varies significantly by bias type \cite{kwok2024dataset}.
While the theoretical correspondence between $V_{\mathrm{bet}}$ and these studies remains unclear, $V_{\mathrm{in}}$, naturally derived within this framework, directly relates to multiple existing frameworks quantifying overfitting.
Thus, $V_{\mathrm{bet}}$ is expected to serve as a novel metric for quantifying inductive bias mismatch.

\textbf{Overview of $R(d,\mathcal{A})$:}
This term can be expressed as $R(d,\mathcal{A})=\mathbb{I}(T;H) + (\mathbb{I}(W;T|H) - \mathbb{I}(T;H|W))$.
The term $\mathbb{I}(T;H)$ corresponds to the overfitting metric in IT-bounds \cite{xu2017information}.
Although \cite{xu2017information} evaluates $\mathbb{I}(T;H)$ assuming i.i.d. sampling, our framework evaluates it using $\mathcal{S}^*$.
Consequently, the first term $R(d,\mathcal{A})$ in Eq. \eqref{eq:ML_lossy_sc_rate_conv_bound} is also directly related to IT-bounds.
Details are provided in Appendices~\ref{apdx:subsec:detail_interpret_lc_rate_dist_func} and \ref{apdx:subsec_bridge_rate_dist_others}.

\subsection{Lower Bound of Generalization Error}
\label{subsec:lc_error_bound}

Finally, we derive the limit on the generalization error realizable by $\mathcal{S}^*$ and $\mathcal{A}$ at a fixed rate $R$ (or $n$).
This limit is denoted $D(k,R,\epsilon, \mathcal{A})$, the inverse of $R(k,d,\epsilon, \mathcal{A})$.
Specifically, $D(k,R,\epsilon, \mathcal{A})$ represents the minimum $d$ satisfying $\mathbb{P}[\mathsf{d}(W^k; \mathcal{A}(\mathcal{S}(W^k))) > d] \le \epsilon$ holds at rate $R$. 
Using Theorem \ref{thm:ML_lossy_sc_rate_conv_bound}, we present the lower bound for $D(k,R,\epsilon, \mathcal{A})$ as follows.
The proof is in Appendix~\ref{apdx:proof:thm:ML_lossy_sc_distortion_conv_bound}.

\begin{theorem}
In addition to Assumptions (I)--(\Six), assume that $R(d,\mathcal{A})$ is twice differentiable with $\frac{\mathrm{d} \mathbb{R}_W(d,\mathcal{A})}{\mathrm{d}d}\neq 0$, and that $V(d,\mathcal{A})$ is differentiable on the interval $(\underline{d},\bar{d}] \subseteq (d_{\mathrm{min}}, d_{\mathrm{max}}]$. 
Then, for any rate $R$ satisfying $R=R(d,\mathcal{A})$ for some $d \in (\underline{d},\bar{d})$, the following holds:
\begin{align}
\begin{split}
    \textstyle D(k,R,\epsilon, \mathcal{A}) 
    &\textstyle \ge D(R,\mathcal{A}) + \sqrt{\frac{\mathcal{V}(R,\mathcal{A})}{k}}Q^{-1}(\epsilon) -D'(R,\mathcal{A})\Delta_k(D(R,\mathcal{A}),\epsilon) +  O \big( \frac{\log k}{k} \big).
\end{split}
  \label{eq:ML_lossy_sc_distortion_conv_bound}
\end{align}
Here, $D(R,\mathcal{A}) := \lim_{k \rightarrow \infty} D(k,R,\mathcal{A})$ and $\mathcal{V}(R,\mathcal{A}) := (D'(R,\mathcal{A}))^2 V(D(R,\mathcal{A}), \mathcal{A})$, where $D(k,R,\mathcal{A})$ denotes the inverse function of $R(k,d,\mathcal{A})$.
\label{thm:ML_lossy_sc_distortion_conv_bound}
\end{theorem}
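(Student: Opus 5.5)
The plan is to invert the rate lower bound of Theorem~\ref{thm:ML_lossy_sc_rate_conv_bound}, following the derivation of the distortion-dispersion result in \cite{kostina2012fixed}. Fix $R$ and $\epsilon$. Set $d_k := D(k,R,\epsilon,\mathcal{A})$, so that a $(k,n,d_k,\epsilon,\mathcal{A})$ sampling with $\frac{b}{k}n = R$ exists. By definition of the minimum rate, this gives $R \ge R(k,d_k,\epsilon,\mathcal{A})$. Applying Theorem~\ref{thm:ML_lossy_sc_rate_conv_bound} at $d=d_k$ then yields
\begin{equation*}
R \ge R(d_k,\mathcal{A}) + \sqrt{\tfrac{V(d_k,\mathcal{A})}{k}}\,Q^{-1}(\epsilon) + \Delta_k(d_k,\epsilon) + O\big(\tfrac{\log k}{k}\big).
\end{equation*}
For this we need $d_k$ to lie in $(\underline{d},\bar{d})$ for all large $k$. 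I would argue this first: since the correction terms vanish as $k\to\infty$, $d_k \to D(R,\mathcal{A})$, which lies in the open interval because $R=R(d,\mathcal{A})$ for some $d\in(\underline{d},\bar{d})$ and $R(\cdot,\mathcal{A})$ is strictly decreasing there (its derivative is nonzero). For finitely many small $k$ the statement is absorbed into the $O(\cdot)$ term.

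Next, write $d_k = D(R,\mathcal{A}) + \delta_k$ and Taylor-expand both sides around $D(R,\mathcal{A})$. This uses twice differentiability of $R(\cdot,\mathcal{A})$ and differentiability of $V(\cdot,\mathcal{A})$:
\begin{align*}
R(d_k,\mathcal{A}) &= R + R'(D(R,\mathcal{A}),\mathcal{A})\,\delta_k + O(\delta_k^2),\\
\sqrt{V(d_k,\mathcal{A})} &= \sqrt{V(D(R,\mathcal{A}),\mathcal{A})} + O(\delta_k).
\end{align*}
Since $R(D(R,\mathcal{A}),\mathcal{A})=R$, substituting cancels $R$ and leaves
\begin{equation*}
-R'\,\delta_k \ge \sqrt{\tfrac{V}{k}}\,Q^{-1}(\epsilon) + \Delta_k(d_k,\epsilon) + O(\delta_k^2) + O\big(\tfrac{\delta_k}{\sqrt{k}}\big) + O\big(\tfrac{\log k}{k}\big).
\end{equation*}
Now divide by $-R' > 0$ (the function $R$ is non-increasing with nonzero derivative) and use the inverse-function identity $D'(R,\mathcal{A}) = 1/R'(D(R,\mathcal{A}),\mathcal{A})$. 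The inequality direction is preserved, giving
\begin{equation*}
\delta_k \ge -D'\sqrt{\tfrac{V}{k}}\,Q^{-1}(\epsilon) - D'\Delta_k + \cdots.
\end{equation*}
Because $-D'\sqrt{V} = |D'|\sqrt{V} = \sqrt{\mathcal{V}(R,\mathcal{A})}$, this is exactly the displayed bound. The remaining task is to check that $D(R,\mathcal{A})=\lim_k D(k,R,\mathcal{A})$ coincides with the inverse of $R(d,\mathcal{A})$. That follows from Theorem~\ref{thm:rate_distortion_theorem} together with monotonicity.

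The main obstacle is controlling the remainders. First, I need the a priori rate $\delta_k = O(1/\sqrt{k})$, so that $O(\delta_k^2)$ and $O(\delta_k/\sqrt{k})$ are $O(1/k)\subseteq O(\log k/k)$. I would obtain this by a bootstrap. From the first inequality and $\Delta_k = O(1/\sqrt{k})$ (to be checked from its definition in the appendix, and presumably bounded), $R(d_k,\mathcal{A}) \le R + O(1/\sqrt{k})$. Strict monotonicity with $|R'|$ bounded away from zero near $D(R,\mathcal{A})$ then gives $\delta_k \ge -O(1/\sqrt{k})$. Only this lower side is needed, because if $\delta_k$ were large and positive the claimed bound would hold trivially. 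Second, I must replace $\Delta_k(d_k,\epsilon)$ by $\Delta_k(D(R,\mathcal{A}),\epsilon)$. This requires continuity of $\Delta_k$ in $d$ at scale $O(\log k/k)$. I would try to establish it from its explicit definition in Appendix~\ref{apdx:sec:detail_thm:ML_lossy_sc_rate_conv_bound}, presumably via the continuity of $\lambda^*_{\mathcal{A}}$ and $V$. If that fails, I would keep $\Delta_k$ evaluated where the appendix's definition permits and absorb the discrepancy into the $O(\log k/k)$ term.
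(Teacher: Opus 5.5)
Your route is the paper's: set $d_k=D(k,R,\epsilon,\mathcal{A})$, apply Theorem~\ref{thm:ML_lossy_sc_rate_conv_bound} at $d_k$, expand around $d_\infty=D(R,\mathcal{A})$, divide by $R'<0$ and use $D'=1/R'$. Your $R\ge R(k,d_k,\epsilon,\mathcal{A})$ is, if anything, more careful than the paper's equality.

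The step that fails is the localization of $d_k$. Two of your claims need $\Delta_k(\cdot,\epsilon)=O(1/\sqrt{k})$: that $d_k\to D(R,\mathcal{A})$ ``because the correction terms vanish'', and the bootstrap $\delta_k\ge -O(1/\sqrt{k})$. The paper provides nothing of the sort. Theorem~\ref{thm:V_delta_order} gives only $\mu_\Delta=O(k)$ and $V_\Delta=O(k^2)$, i.e.\ $\mu_\Delta/k=O(1)$ and a cube-root term of order $O(k^{-1/6})$. The condition $\Delta_k\to 0$ appears only as an extra hypothesis of Theorem~\ref{thm:rate_distortion_theorem_epsilon}, and even there with no speed of convergence.

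Under these bounds the converse yields only $R(d_k,\mathcal{A})\le R+O(1)$, hence $\delta_k\ge -O(1)$. If $\mu_\Delta$ vanished, you would still get only $\delta_k\ge -O(k^{-1/6})$. So your remainders $O(\delta_k^2)$ and $O(\delta_k/\sqrt{k})$ are $O(1)$ (at best $O(k^{-1/3})$), not $O(\log k/k)$. Your dichotomy fails for the same reason. An order-one positive $\delta_k$ does not make the claim trivial, because the right-hand side contains $-D'(R,\mathcal{A})\,\Delta_k(D(R,\mathcal{A}),\epsilon)=|D'(R,\mathcal{A})|\,\Delta_k(D(R,\mathcal{A}),\epsilon)$, which can itself be a positive constant.

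Part of this is repairable without localization. $R(\cdot,\mathcal{A})$ is convex (Theorem~\ref{thm:rate_distortion_theorem}), so $R(d_k,\mathcal{A})\ge R+R'(d_\infty,\mathcal{A})\,\delta_k$ for every $d_k\in(d_{\mathrm{min}},d_{\mathrm{max}})$. Dividing by $-R'>0$ gives, with no quadratic remainder,
\[
\delta_k\ \ge\ |D'(R,\mathcal{A})|\Big(\sqrt{V(d_k,\mathcal{A})/k}\;Q^{-1}(\epsilon)+\Delta_k(d_k,\epsilon)\Big)+O\big(\tfrac{\log k}{k}\big).
\]

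What the converse alone cannot deliver is moving $V$ and, above all, $\Delta_k$ from $d_k$ to $d_\infty$. That requires a two-sided estimate $|d_k-d_\infty|=O(1/\sqrt{k})$ together with a modulus of continuity for $d\mapsto\Delta_k(d,\epsilon)$. But $\Delta_k$ depends on $d$ through $\mu_\Delta$ and $V_\Delta$, i.e.\ through the optimizers $\mathcal{S}^{**}_{\mathrm{glob}}(k,\epsilon)$ and $\mathcal{S}^{*}_{\mathrm{glob}}$, and nothing controls their $d$-dependence. Your fallback of ``absorbing the discrepancy into $O(\log k/k)$'' is exactly this unproved estimate.

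The paper does not derive this from its converse either. It imports $|d_k-d_\infty|=O(1/\sqrt{k})$ from Appendix~E of \cite{kostina2012fixed}, where it rests on a matching achievability bound and there is no $O(1)$ term like $\Delta_k$. It then asserts $\Delta_k(d_k,\epsilon)=\Delta_k(d_\infty,\epsilon)+O(1/\sqrt{k})$ and drops that $O(1/\sqrt{k})$.

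To close your argument you need two additions, as hypotheses or lemmas: an achievability-type localization, and a bound $|\Delta_k(d,\epsilon)-\Delta_k(d_\infty,\epsilon)|=O(\log k/k)$ on that window. Otherwise the honest conclusion is the displayed bound with $V$ and $\Delta_k$ evaluated at $d_k$.
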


$D(R,\mathcal{A})$, known as the distortion-rate function, represents the asymptotic limit as $k\rightarrow\infty$ and is the inverse of $R(d,\mathcal{A})$.
The term $\mathcal{V}(R,\mathcal{A})$, called the distortion-dispersion function, reflects the uncertainty due to finite $k$.
Analogous to Theorem \ref{thm:ML_lossy_sc_rate_conv_bound}, Theorem \ref{thm:ML_lossy_sc_distortion_conv_bound} establishes a lower bound on the generalization error realized when selecting the optimal set of $n$ samples for a specified $\mathcal{A}$ and $n$.
Since $\mathcal{V}(R,\mathcal{A})$ is determined by $V(d, \mathcal{A})$ discussed in the preceding section, the decompositions and interpretations applied to $V(d, \mathcal{A})$ remain directly applicable here.
Thus, this framework enables us to express the degree of overfitting of $\mathcal{A}$ and its inductive bias mismatch influence the generalization error lower bound, just as in Section \ref{subsec:lc_sample_comp_bound}.

\textbf{Further Results: }
While Theorem~\ref{thm:ML_lossy_sc_rate_conv_bound} and Theorem~\ref{thm:ML_lossy_sc_distortion_conv_bound} provides theoretical limits for arbitrary sampling strategies, its derivation can be extended to characterize theoretical limits within specific classes of sampling strategies, such as pool-based active learning.
See Appendix~\ref{apdx:sec_limiting_sampling_strategy} for details.

\section{Related Work}
\label{sec:related_work}

Our lower bounds relate to IT-bounds \cite{xu2017information,hellstrom2020generalization} and stability theory \cite{bousquet2002stability,elisseeff2005stability}.
These frameworks establish upper bounds of generalization error that incorporate the properties of $\mathcal{A}$.
Our lower bound of sample complexity parallels upper bounds of generalization error by characterizing the relationship between the required error and the necessary data size.
Moreover, our lower bound of generalization error identifies the limit of the error at a specific training data size, which is a result that upper bounds cannot provide.

Our framework offers a novel perspective distinct from existing analyses of theoretical limits in active learning.
While existing theories evaluate sample complexity and generalization error using hypothesis complexity, distributional noise, and Bayes risk to enable detailed analyses \cite{kaariainen2006active,hanneke2007abound,castro2008minimax,raginsky2011lower,hanneke2014theory,hanneke2015minimax,yuan2024regimes,hanneke2025agnostic}, our framework characterizes these bounds via properties of the learning algorithm itself, like overfitting and inductive bias mismatch.

Lossy compression finds applications in frameworks other than \cite{nokleby2016rate,jeon2022an}.
Specifically, \cite{sefidgaran2022rate} evaluates generalization by treating symbols as predictive models and codewords as their compressed forms, whereas \cite{pereg2023information} evaluates sample complexity by linking symbols to training data and codewords to latent representations of encoder-decoder models.
While both of these approaches utilize the asymptotic theory of lossy compression, our use of finite blocklength analysis (non-asymptotic theory) allows us to incorporate various properties of $\mathcal{A}$ into the analysis.

Works such as \cite{zhou2020second,kuramata2022analysis,behboodi2025fundamental} apply finite blocklength analysis to learning theory.
These studies formulate learning as a hypothesis testing problem, applying the analysis of channel coding introduced by \cite{polyanskiy2010channel}.
While these approaches focus on methods viewable as hypothesis testing, our framework enables the analysis of arbitrary randomized learning algorithms.

\section{Conclusion}
\label{sec:conclusion}

We propose a novel theoretical framework by mapping machine learning processes to fixed-length lossy compression and leveraging finite blocklength analysis.
We established a detailed correspondence, centering on mapping data sampling to encoding and learning to decoding, to a level that permits the application of the finite blocklength analysis.
This framework encompasses both the sampling strategy $\mathcal{S}$ and the learning algorithm $\mathcal{A}$, and evaluates lower bounds on sample complexity and generalization error for a given $\mathcal{A}$ under the optimal $\mathcal{S}^*$.
These lower bounds are characterized by terms that separately quantify $\mathcal{A}$'s overfitting and the mismatch between its inductive bias and the task.
Furthermore, it is shown that some of these quantities are related to metrics found in IT-bounds and stability theory.

\section*{Acknowledgment}
This work was supported in part by the Japan Society for the Promotion of Science through Grants-in-Aid for Scientific Research (C) (23K11111).

\bibliographystyle{unsrt}

\bibliography{reference}

\newpage
\appendix
\section{Detailed definitions of each concept and their interpretations}
\label{apdx:sec:lc_detail_def_interpret}

In this section, we elaborate on the definitions introduced in Section \ref{subsec:lc_def_key_concepts} and discuss their interpretations.

\subsection{Definition of ``Good'' Sampling Strategy}
\label{apdx:subsec:detail_def_lc_good_sampling}

\begin{definition}[Definition \ref{def:lc_k_n_d_e_A_sampling}]
  For sampling opportunities $k$, a $(k, n, d, \epsilon, \mathcal{A})$ sampling in $\{\mathcal{W}^k, \mathcal{H}, P_{W^k}, \mathsf{d}:\mathcal{W}^k\times\mathcal{H} \rightarrow [0,+\infty]\}$ is defined as a sampling strategy $\mathcal{S}$ that satisfies the training data size $|\mathcal{S}(W^k)|=n$ and the condition $\mathbb{P}[\mathsf{d}(W^k; \mathcal{A}(\mathcal{S}(W^k))) > d] \le \epsilon$, given a distortion level $d$, an excess distortion probability $\epsilon$, and a learning algorithm $\mathcal{A}$.
  In the case of $k=1$, it is denoted as $(n, d, \epsilon, \mathcal{A})$ sampling.
  We define the minimum training data size as $n^*(k, d, \epsilon, \mathcal{A}) := \min\{ n: \exists (k, n, d, \epsilon, \mathcal{A}) \text{ sampling} \}$ and the minimum rate as $R(k,d,\epsilon, \mathcal{A}) := \frac{b}{k}n^*(k,d,\epsilon, \mathcal{A})$.
\end{definition}

The class of ``good'' sampling strategies introduced in Definition \ref{def:lc_k_n_d_e_A_sampling} was characterized by the excess distortion probability condition $\mathbb{P}[\mathsf{d}(W^k; \mathcal{A}(\mathcal{S}(W^k))) > d] \le \epsilon$.
Next, we define a class of ``good'' sampling strategies based on the expected distortion condition $\mathbb{E}_{P_{W^k}}[\mathsf{d}(W^k, \mathcal{A}(\mathcal{S}(W^k)) )] \le d$, as follows.

\begin{definition}
  For sampling opportunities $k$, a $\langle k, n, d, \mathcal{A} \rangle$ sampling in $\{\mathcal{W}^k, \mathcal{H}, P_{W^k}, \mathsf{d}:\mathcal{W}^k\times\mathcal{H} \rightarrow [0,+\infty]\}$ is defined as a sampling strategy $\mathcal{S}$ that satisfies the training data size $|\mathcal{S}(W^k)|=n$ and the condition $\mathbb{E}_{P_{W^k}}[\mathsf{d}(W^k, \mathcal{A}(\mathcal{S}(W^k)) )] \le d$, given a distortion level $d$, an excess distortion probability $\epsilon$, and a learning algorithm $\mathcal{A}$.
  In the case of $k=1$, it is denoted as $\langle n, d, \mathcal{A} \rangle$ sampling.
  We define the minimum training data size as $n^*(k, d, \mathcal{A}) := \min\{ n: \exists \langle k, n, d, \mathcal{A} \rangle \text{ sampling} \}$ and the minimum rate as $R(k,d, \mathcal{A}) := \frac{b}{k}n^*(k,d,\mathcal{A})$.
  \label{def:lc_k_n_d_A_sampling}
\end{definition}

\subsection{Definition of Distortion-Rate Function}
\label{apdx:subsec:detail_def_lc_dist_rate_func}

The functions $D(k,R,\epsilon, \mathcal{A})$, $D(k,R,\mathcal{A})$, and $D(R,\mathcal{A})$ are defined as the minimum distortion achievable at a given rate $R$, corresponding to $R(k,d,\epsilon, \mathcal{A})$, $R(k,d,\mathcal{A})$, and $R(d,\mathcal{A})$, respectively.
In particular, $D(R,\mathcal{A})$ is referred to as the \textit{distortion-rate function} and is the inverse of $R(d,\mathcal{A})$ \cite{cover1999elements}:
\begin{align}
    D(k,R,\epsilon, \mathcal{A}) &:= \min \{d: \exists(k,Rk/b, d,\epsilon,\mathcal{A}) \text{~sampling} \} \label{eq:D_k_R_eps_A}, \\
    D(k,R,\mathcal{A}) &:= \min \{d: \exists \langle k,Rk/b, d, \mathcal{A} \rangle \text{~sampling} \} \label{eq:D_k_R_A}, \\
    D(R,\mathcal{A}) &:= \limsup_{k \rightarrow \infty} D(k,R,\mathcal{A}). \label{eq:dist_rate_func}
\end{align}
For fixed $k$ and $\epsilon$, $D(k,\cdot,\epsilon, \mathcal{A})$ and $R(k,\cdot,\epsilon,\mathcal{A})$ are inverses of each other \cite{kostina2012fixed}.

\subsection{Interpretation of Rate-Distortion Function}
\label{apdx:subsec:detail_interpret_lc_rate_dist_func}

In this section, we discuss the interpretation of the rate-distortion function $R(d,\mathcal{A})$ within our framework.
From Theorem~\ref{thm:rate_distortion_theorem}, $R(d,\mathcal{A})$ is equivalent to $\mathbb{R}_{W}(d,\mathcal{A})$ as defined in Eq.~\eqref{eq:lc_R_W}:
\begin{align}
\begin{split}
  &\mathbb{R}_W(d, \mathcal{A}) 
  := \inf_{\mathcal{S}: \mathbb{E}[\mathsf{d}(W;\mathcal{A}(\mathcal{S}(W)))] \le d}  \mathbb{I}(W;T) = \inf_{P^{\mathcal{S}}_{T|W}: \mathbb{E}_{P^{\mathcal{A}}_{H|T}P^{\mathcal{S}}_{T|W} P_W } [\mathsf{d}(W;H)] \le d}  \mathbb{I}(W;T).
\end{split}
\end{align}

To gain further insight into $R(d,\mathcal{A})$, we investigate the mutual information term $\mathbb{I}(W;T)$ appearing in the definition of $\mathbb{R}_W(d, \mathcal{A})$.
Given the Markov chain $W \rightarrow T \rightarrow H$, this term $\mathbb{I}(W;T)$ can be decomposed as follows:
\begin{theorem}
  For any distributions $P_W$, $P^{\mathcal{S}}_{T|W}$, and $P^{\mathcal{A}}_{H|T}$, the following holds:
  \begin{align}
    \mathbb{I}(W;T) = \mathbb{I}(T;H) + \mathbb{I}(W;T|H) - \mathbb{I}(T;H|W)  \label{eq:I-WT_expansion}. 
  \end{align}
\label{thm:rate_dist_MI_transform}
\end{theorem}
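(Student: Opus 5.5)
The identity follows from expanding the three-variable mutual information $\mathbb{I}(W,H;T)$ by the chain rule in two different orders and equating the results. The Markov structure $W \rightarrow T \rightarrow H$ is not actually needed for Eq.~\eqref{eq:I-WT_expansion}; the identity holds for any joint distribution $P^{\mathcal{A}}_{H|T}P^{\mathcal{S}}_{T|W}P_W$ as long as the mutual informations involved are finite. The Markov property would only be needed to simplify the terms further, for example $\mathbb{I}(W;H|T)=0$. Accordingly, the plan is a purely algebraic manipulation of chain rules.

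\textbf{First expansion.} Conditioning on $W$ first gives
\begin{align}
\mathbb{I}(T;W,H) = \mathbb{I}(T;W) + \mathbb{I}(T;H|W).
\end{align}

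\textbf{Second expansion.} Conditioning on $H$ first gives
\begin{align}
\mathbb{I}(T;W,H) = \mathbb{I}(T;H) + \mathbb{I}(T;W|H).
\end{align}

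Equating the two expressions and using the symmetry $\mathbb{I}(T;W|H)=\mathbb{I}(W;T|H)$ yields $\mathbb{I}(W;T) = \mathbb{I}(T;H) + \mathbb{I}(W;T|H) - \mathbb{I}(T;H|W)$, which is exactly Eq.~\eqref{eq:I-WT_expansion}.

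\textbf{Rigor in general spaces.} Since $\mathcal{T}$ and $\mathcal{H}$ may be general measurable spaces, I would justify each chain rule through information densities rather than entropies. For $\mathbb{I}(T;W)$, the density factors as
\begin{align}
\iota_{T;W,H}(t;w,h) = \iota_{T;W}(t;w) + \iota_{T;H|W}(t;h|w),
\end{align}
which holds $P_{WTH}$-almost surely by the chain rule for Radon--Nikodym derivatives, given that $P_{WTH}\ll P_T\times P_{WH}$. There is an analogous decomposition conditioning on $H$. Taking expectations then gives both chain rules.

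\textbf{Main obstacle.} The only delicate point is avoiding the indeterminate form $\infty-\infty$. I would add the standing requirement that $\mathbb{I}(T;W,H)<\infty$. This holds, for instance, whenever $\mathbb{I}(W;T)<\infty$, since then $\mathbb{I}(T;H|W)\le \mathbb{I}(T;H,W)$, and the Markov chain gives $\mathbb{I}(T;W,H)=\mathbb{I}(T;T\text{-side})$ bounded by finiteness assumptions used elsewhere, e.g.\ Assumption~(\Three) when evaluating at $\mathcal{S}^*$. Under this requirement all four terms are finite and the rearrangement is legitimate.
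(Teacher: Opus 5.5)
Your proof is the paper's proof: expand $\mathbb{I}(T;W,H)$ by the chain rule once conditioning first on $W$ and once first on $H$, equate the two, and rearrange. You are also right that the Markov property $W\rightarrow T\rightarrow H$ is never used. The one thing to fix is your finiteness justification. $\mathbb{I}(W;T)<\infty$ does not force $\mathbb{I}(T;W,H)<\infty$: take $W$ constant and $H=T$ continuous, which gives $0=\infty+0-\infty$. Assumption (\Three) is also irrelevant here. What does work is that in the paper's setting $T$ takes at most $2^{bn}$ values, so $\mathbb{I}(T;W,H)\le\mathbb{H}(T)\le\log 2^{bn}<\infty$ and all four terms are finite.
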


\begin{proof}[Proof of Theorem \ref{thm:rate_dist_MI_transform}]
By the chain rule for mutual information, we have:
\begin{align}
    \mathbb{I}(T;W,H) &= \mathbb{I}(T;H) + \mathbb{I}(T;W|H), \label{eq:chain1} \\
    \mathbb{I}(T;W,H) &= \mathbb{I}(T;W) + \mathbb{I}(T;H|W). \label{eq:chain2}
\end{align}
From these equations, we obtain:
\begin{align}
    \mathbb{I}(T;H) + \mathbb{I}(W;T|H) = \mathbb{I}(W;T) + \mathbb{I}(T;H|W). \notag
\end{align}
Rearranging the following holds:
\begin{align}
    \mathbb{I}(W;T) = \mathbb{I}(T;H) + \mathbb{I}(W;T|H) - \mathbb{I}(T;H|W). \notag
\end{align}

\end{proof}

In the following, we provide an interpretation for each term in Eq.~\eqref{eq:I-WT_expansion}.
To begin with, $\mathbb{I}(W;T)$ on the left-hand side quantifies the extent to which the training dataset $T$ preserves information about the sub-distribution $W$.
In other words, this term measures the average capability of $\mathcal{S}$ to acquire information regarding $W$.

The first term on the right-hand side, $\mathbb{I}(T;H)$, represents the strength of the correlation between the dataset $T$ sampled by $\mathcal{S}$ and the hypothesis $H$ obtained from $T$ via $\mathcal{A}$.
That is, this quantity can be interpreted as the degree of dependence of the learned hypothesis $H$ on the training dataset $T$.
In other words, it quantifies how likely the learning algorithm $\mathcal{A}$ is to output a hypothesis that overfits the training data.
This term can be shown to correspond to the quantity that upper-bounds the generalization error in the IT-bound presented by \cite{xu2017information}.
This correspondence is crucial for establishing the relationship between existing IT-bounds and our proposed framework, and it is discussed in detail in Appendix~\ref{apdx:subsec_bridge_rate_dist_others}.

The second term on the right-hand side, $\mathbb{I}(W;T|H)$, represents the amount of additional information about $W$ available from $T$, given the information about $W$ already captured by $H$.
That is, this quantity represents the amount of information about $W$ contained in $T$ that is not captured by the hypothesis $H$.
Intuitively, simpler learning algorithms $\mathcal{A}$ are prone to greater information loss, leading to a larger value of $\mathbb{I}(W;T|H)$.
For example, if $\mathcal{A}$ is a simple model such as linear regression, it may fail to capture non-linear dependencies in the data, resulting in significant information loss.
Conversely, if $\mathcal{A}$ employs a complex model like a deep neural network, it can extract more information from $T$, thereby keeping $\mathbb{I}(W;T|H)$ small.
Consequently, $\mathbb{I}(W;T|H)$ can be interpreted as a metric reflecting the complexity of $\mathcal{A}$.

The third term on the right-hand side, $\mathbb{I}(T;H|W)$, represents the magnitude of information shared between $T$ and $H$ that is distinct from the information contained in $W$.
That is, this quantity can be interpreted as a measure of the extent to which $H$ captures noise specific to $T$ that is irrelevant to $W$.
It is expected that as the capacity of the learning algorithm $\mathcal{A}$ decreases, the tendency to overfit such noise diminishes, leading to a smaller value of $\mathbb{I}(T;H|W)$.
For instance, when $\mathcal{A}$ employs a robust linear regression model, $\mathbb{I}(T;H|W)$ tends to be small because such models struggle to capture locally non-linear noise that is irrelevant to $W$.
Conversely, for high-capacity algorithms such as deep learning models, $\mathbb{I}(T;H|W)$ can be large depending on the methodology, as such models have the potential to memorize the training data \cite{zhang2017understanding}.

\subsection{Supplementary Details on the Interpretation of Tilted Information}
\label{apdx:subsec:detail_interpret_lc_tilted}

In Section \ref{subsec:lc_def_key_concepts}, we defined the $\mathcal{A}$-specified $d$-tilted information and provided its interpretation.
Specifically, we discussed the individual terms composing $\jmath_{W}$ and interpreted $\jmath_{W}(w,t,h,d,\mathcal{A})$ as the minimum training data size (scaled by $b$ bits) required to derive $h$ from $w$ via $\mathcal{S}^*$ and $\mathcal{A}$, subject to the required distortion level $d$:
\begin{equation}
    \jmath_{W}(w,t, h,d,\mathcal{A}) := \iota_{W;T^{\mathcal{S}^{*}}}(w;t) + \lambda^{*}_{\mathcal{A}}(d)(\mathsf{d}(w;h) - d).
  \tag{Eq.~\eqref{eq:ML_lc_dfn_d-tilted}}
  \end{equation}

In this section, we expand on the previous discussion by analyzing how $\jmath_{W}(w,t,h,d,\mathcal{A})$ varies depending on the learning algorithm $\mathcal{A}$.
The algorithm $\mathcal{A}$ primarily influences $\lambda^{*}_{\mathcal{A}}(d)$.

\textbf{Relationship between $\mathcal{A}$ and $\lambda^{*}_{\mathcal{A}}(d)$.}
Next, we elucidate the relationship between $\mathcal{A}$ and $\lambda^{*}_{\mathcal{A}}(d)$.
Recall that $\lambda^{*}_{\mathcal{A}}(d)$ is defined as the negative slope of the rate-distortion function, $\lambda^{*}_{\mathcal{A}}(d)=-\frac{\mathrm{d} \mathbb{R}_W(d,\mathcal{A})}{\mathrm{d}d}$, where $\mathbb{R}_W(d,\mathcal{A})~(=R(d,\mathcal{A}))$ denotes the minimum rate required to achieve an expected distortion of at most $d$.
This implies that $\lambda^{*}_{\mathcal{A}}(d)$ quantifies the marginal increase in the minimum rate necessitated by a slight reduction in the required distortion level $d$.
Generalization error analysis based on uniform convergence establishes that a higher complexity of the hypothesis class of $\mathcal{A}$ necessitates a larger training data size to reduce generalization error \cite{mohri2018foundations}.
Similarly, other theoretical frameworks, such as IT-bounds \cite{xu2017information} and stability theory \cite{bousquet2002stability,elisseeff2005stability}, arrive at comparable conclusions by examining the degree of overfitting of $\mathcal{A}$.
Building on these insights, we posit that as the complexity and overfitting tendency of $\mathcal{A}$ increase, the incremental rate required to tighten the distortion constraint $d$ becomes steeper.
In other words, a more complex and overfitting-prone $\mathcal{A}$ leads to a larger value of $\lambda^{*}_{\mathcal{A}}(d)$.
Thus, we conclude that $\lambda^{*}_{\mathcal{A}}(d)$ likewise characterizes the complexity of the hypothesis class and the degree of overfitting associated with $\mathcal{A}$.

\section{Details and subsequent results for Theorem~\ref{thm:ML_lossy_sc_rate_conv_bound}}
\label{apdx:sec:detail_thm:ML_lossy_sc_rate_conv_bound}

\subsection{Preparation}
\label{apdx:subsec:detail_thm:ML_lossy_sc_rate_conv_bound_preparation}

In this section, we present the full statement of Theorem~\ref{thm:ML_lossy_sc_rate_conv_bound} and provide the discussion that was omitted in Section~\ref{subsec:lc_sample_comp_bound}.

In finite blocklength analysis, it is strictly assumed that the encoder $\mathcal{S}$ and decoder $\mathcal{A}$ can be applied to each symbol $W$.
However, the primary object of analysis in this framework is $H = \mathcal{A}(\mathcal{S}(W^k))$, which does not satisfy this assumption.
To facilitate the analysis, we therefore consider two learning schemes: the primary \textit{global scheme}, defined by $H = \mathcal{A}(\mathcal{S}(W^k))$, and a \textit{local scheme}, defined by $\{H_i = \mathcal{A}(T_i), T_i=\mathcal{S}(W_i)\}_{i=1}^k$.
In the proof, we derive the theoretical limits for the global scheme by applying finite-length analysis techniques to the local scheme while characterizing the relationship between the two schemes.

First, we introduce the necessary definitions.
As specified in Definition \ref{def:lc_k_n_d_e_A_sampling}, the minimum number of training samples and the minimum rate for a $(k,n,d,\epsilon, \mathcal{A})$-sampling under the global scheme are denoted by $n^*(k, d, \epsilon, \mathcal{A})$ and $R(k,d,\epsilon, \mathcal{A})$, respectively.
In contrast, for the local scheme, where the distortion constraint is defined as $\mathbb{P}[\frac{1}{k}\sum^k_{i=1}\mathsf{d}(W_i; \mathcal{A}(\mathcal{S}(W_i))) > d] \le \epsilon$, the corresponding minimum number of samples and minimum rate are denoted as $\bar{n}^*(k, d, \epsilon, \mathcal{A})$ and $\widebar{R}(k,d,\epsilon, \mathcal{A})$, respectively.

Let $T_{\mathrm{glob}}$ and $H_{\mathrm{glob}}$ denote the random variables representing the training dataset and the hypothesis obtained under the global scheme, with their respective realizations denoted by $t_{\mathrm{glob}}$ and $h_{\mathrm{glob}}$.
For the local scheme, we define the sequences of random variables for the training datasets and hypotheses as $T^k=(T_1,\dots,T_k)$ and $H^k=(H_1,\dots,H_k)$, with realizations $t^k=(t_1,\dots,t_k)$ and $h^k=(h_1,\dots,h_k)$, respectively.
Furthermore, in the global scheme, we define $\mathcal{S}^*_{\mathrm{glob}}$ as the optimal sampling strategy that achieves the infimum of $\inf_{\mathcal{S}: \mathbb{E}[\mathsf{d}(W^k;\mathcal{A}(\mathcal{S}(W^k))] \le d}\mathbb{I}(W^k;T)$.
Under these definitions, we define the $\mathcal{A}$-specified $d$-tilted information for each scheme.
First, for $k>1$, the tilted information $\jmath_{\mathrm{glob}}$ in the global scheme is defined as follows:
\begin{align}
    \jmath_{\mathrm{glob}}(w^k,t,h,d,\mathcal{A}) 
    := \iota_{W;T^{\mathrm{S}^*_{\mathrm{glob}}}}(w^k;t) + k\lambda^{*\mathrm{glob}}_{\mathcal{A}}(d)(\mathsf{d}(w^k;h) -d),
\end{align}
where $\lambda^{*\mathrm{glob}}_{\mathcal{A}}(d)$ is the negative gradient of $\frac{1}{k}\mathbb{I}(W^k;T^{\mathcal{S}^*_{\mathrm{glob}}})$ with respect to $d$ and $T^{\mathcal{S}^*_{\mathrm{glob}}} \sim P^{\mathcal{S}^*_{\mathrm{glob}}}_{T|W}$.
Based on Eq.~\eqref{eq:ML_lc_dfn_d-tilted}, the tilted information $\jmath_{\mathrm{local}}$ for the local scheme with $k>1$ is defined as:
\begin{align}
    \jmath_{\mathrm{local}}(w^k,t^k,h^k,d,\mathcal{A}) 
    := \jmath_{W}(w^k,t^k,h^k,d,\mathcal{A}) 
    := \iota_{W;T^{\mathrm{S}^*}}(w^k;t^k) + k\lambda^{*}_{\mathcal{A}}(d)(\mathsf{d}(w^k;h^k) -d).
\end{align}
In the local scheme, the relations $P^{\mathcal{S}^*}_{T^k|W^k} = \prod_{i=1}^k P^{\mathcal{S}^*}_{T_i|W_i}$ and $P^{\mathcal{A}}_{H^k|T^k} = \prod_{i=1}^k P^{\mathcal{A}}_{H_i|T_i}$ hold.
Consequently, we have $\iota_{W;T^{\mathcal{S}^*}}(w^k;t^k) = \sum_{i=1}^k \iota_{W;T^{\mathcal{S}^*}}(w_i;t_i)$.
Furthermore, by definition, $\mathsf{d}(w^k;h^k) = \frac{1}{k}\sum_{i=1}^k \mathsf{d}(w_i;h_i)$.
Therefore, the following holds:
\begin{align}
    \jmath_{\mathrm{local}}(w^k,t^k,h^k,d,\mathcal{A})
    &= \sum^k_{i=1}\iota_{W;T^{\mathcal{S}^*}}(w_i;t_i) + \sum^k_{i=1}\lambda^{*}_{\mathcal{A}}(d)(\mathsf{d}(w_i;h_i) -d) \notag \\
    &= \sum^k_{i=1} \jmath_{W}(w_i, t_i, h_i, d, \mathcal{A}).
\end{align}
We define the joint distribution $P_{\mathrm{joint}}$ when the global and local schemes are executed simultaneously for $w^k$ as:
\begin{align}
\begin{split}
&P_{\mathrm{joint}}(w^k, t^k, h^k, t_{\mathrm{glob}}, h_{\mathrm{glob}}) \\
    &:=  P_{W^k}(w^k) P_{T|W^k}^{\mathcal{S}_{\mathrm{glob}}^{**}(k,\epsilon)}(t_{\mathrm{glob}}|w^k) P_{H|T}^{\mathcal{A}}(h_{\mathrm{glob}}|t_{\mathrm{glob}}) \prod_{i=1}^k \left[ P_{T|W}^{\mathcal{S}^*}(t_i|w_i) P_{H|T}^{\mathcal{A}}(h_i|t_i) \right].
\end{split}
\end{align}
Here, $\mathcal{S}_{\mathrm{glob}}^{**}(k,\epsilon)$ denotes the sampling strategy that achieves the minimum rate $R(k,d,\epsilon,\mathcal{A})$ in global scheme.
Under this distribution, let $\Delta\jmath := \jmath_{\mathrm{glob}}(w^k,t,h,d,\mathcal{A}) - \jmath_{\mathrm{local}}(w^k,t^k,h^k,d,\mathcal{A})$ be the difference between the tilted information of the two schemes.
We define its mean and variance as $\mu_{\Delta}=\mathbb{E}_{P_{\mathrm{joint}}}[\Delta\jmath]$ and $V_{\Delta} = \mathrm{Var}_{P_{\mathrm{joint}}}(\Delta \jmath)$, respectively.

\subsection{Detail Statement and Interpretation}
\label{apdx:subsec:detail_thm:ML_lossy_sc_rate_conv_bound_statement_interpretation}

Next, the complete statement of Theorem~\ref{thm:ML_lossy_sc_rate_conv_bound} is given below.
The proof is provided in Appendix~\ref{apdx:proof:thm:ML_lossy_sc_rate_conv_bound}.
\begin{theorem}[Detail version of Theorem \ref{thm:ML_lossy_sc_rate_conv_bound}]
Under Assumptions (I)--(\Six), for any $k$, $\mathcal{A}$, $d \in (d_{\mathrm{min}},d_{\mathrm{max}})$ and $\epsilon \in (0,1)$:
\begin{align}
\begin{split}
    &R(k,d,\epsilon, \mathcal{A}) \\
    &\ge R(d,\mathcal{A}) + \sqrt{\frac{V(d,\mathcal{A})}{k}} Q^{-1}(\epsilon)
    + \underbrace{\frac{\mu_{\Delta}}{k} - \frac{3}{2 k^{\frac{5}{6}}}\bigg(2 V_{\Delta} \frac{\sqrt{V(d,\mathcal{A})}}{\phi(Q^{-1}(\epsilon))} \bigg)^{\frac{1}{3}} }_{=: \Delta_k(d,\epsilon)}
    + O\bigg(\frac{\log k}{k} \bigg).       
\end{split}
\label{eq:ML_lossy_sc_rate_conv_bound_detail}
\end{align}
\end{theorem}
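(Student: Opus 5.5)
The plan is to follow the converse argument of \cite{kostina2012fixed}, applied to the global scheme $H=\mathcal{A}(\mathcal{S}(W^k))$, and to reduce the global tilted information to the local one through the coupling $P_{\mathrm{joint}}$. First, we apply the block version of Theorem~\ref{thm:eq:ML_lossy_sc_eps_conv_bound} to the global scheme, treating $W^k$ as a single super-symbol with distortion $\mathsf{d}(W^k;H)$. Its proof carries over verbatim with $\jmath_{\mathrm{glob}}$ in place of $\jmath_W$. This gives, for the optimal global strategy $\mathcal{S}^{**}_{\mathrm{glob}}(k,\epsilon)$ with $n=n^*(k,d,\epsilon,\mathcal{A})$,
\[
\epsilon \ge \mathbb{P}\bigl[\jmath_{\mathrm{glob}} \ge bn + \gamma\bigr] - e^{-\gamma}, \qquad \gamma \ge 0 .
\]
Under $P_{\mathrm{joint}}$ we then write $\jmath_{\mathrm{glob}} = \jmath_{\mathrm{local}} + \Delta\jmath$ with $\jmath_{\mathrm{local}}=\sum_{i=1}^k \jmath_W(W_i,T_i,H_i,d,\mathcal{A})$. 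The summands are i.i.d., with mean $R(d,\mathcal{A})$ (by Theorem~\ref{thm:rate_distortion_theorem} and Assumption (\Four)), variance $V(d,\mathcal{A})$, and a finite third moment (Assumption (\Six)).

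Second, we handle the correction term $\Delta\jmath$, which is not i.i.d. For any $\delta>0$ we lower bound the probability by
\[
\mathbb{P}\bigl[\jmath_{\mathrm{local}} \ge bn + \gamma - \mu_\Delta + \delta\bigr] - \mathbb{P}\bigl[\Delta\jmath - \mu_\Delta < -\delta\bigr],
\]
and bound the second term by Chebyshev as $V_\Delta/\delta^2$. We apply Berry--Esseen to the first term, which gives $Q\bigl((bn+\gamma-\mu_\Delta+\delta-kR(d,\mathcal{A}))/\sqrt{kV}\bigr) - B/\sqrt{k}$. We choose $\gamma=\tfrac12\log k$ as in \cite{kostina2012fixed}. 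Inverting $Q$ and Taylor-expanding $Q^{-1}$ around $\epsilon$ then gives
\[
bn \ge kR(d,\mathcal{A}) + \sqrt{kV}\,Q^{-1}(\epsilon) + \mu_\Delta - \delta - \frac{\sqrt{kV}}{\phi(Q^{-1}(\epsilon))}\,\frac{V_\Delta}{\delta^2} + O(\log k).
\]
The $e^{-\gamma}$ and $B/\sqrt{k}$ terms are absorbed into $O(\log k)$ after the $\sqrt{kV}$ scaling.

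Third, we optimize over $\delta$. We minimize $\delta + c\,\delta^{-2}$ with $c=\sqrt{kV}\,V_\Delta/\phi(Q^{-1}(\epsilon))$. The optimum is $\delta=(2c)^{1/3}$, which gives the value $\tfrac32(2c)^{1/3}$ and scales as $k^{1/6}$. Dividing by $k$ and using $R(k,d,\epsilon,\mathcal{A})=bn^*/k$ produces $\mu_\Delta/k - \tfrac{3}{2}k^{-5/6}\bigl(2V_\Delta\sqrt{V}/\phi(Q^{-1}(\epsilon))\bigr)^{1/3}$, which is $\Delta_k(d,\epsilon)$.

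The main obstacle is the first step: justifying the global analogue of Theorem~\ref{thm:eq:ML_lossy_sc_eps_conv_bound} with $\lambda^{*\mathrm{glob}}_{\mathcal{A}}$ and $\iota$ taken under $\mathcal{S}^*_{\mathrm{glob}}$, while the probability is evaluated under $\mathcal{S}^{**}_{\mathrm{glob}}(k,\epsilon)$. I would redo the change-of-measure argument of \cite{kostina2016nonasymptotic} on the super-alphabet $\mathcal{W}^k$. There, the only property needed is $\mathbb{E}[\exp(\jmath_{\mathrm{glob}} - \iota)] \le 1$-type bounds under any kernel, which comes from the Lagrangian optimality of $\mathcal{S}^*_{\mathrm{glob}}$. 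A secondary care point is that the Taylor inversion of $Q^{-1}$ needs $\delta/\sqrt{k}\to0$ and $V_\Delta/\delta^2\to0$. This holds when $V_\Delta=o(k^{4/3})$; otherwise the bound is trivially true because $\Delta_k$ becomes very negative.
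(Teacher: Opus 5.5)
Your outline follows the paper's proof almost line for line. It uses the block form of Theorem~\ref{thm:eq:ML_lossy_sc_eps_conv_bound} for $\mathcal{S}^{**}_{\mathrm{glob}}(k,\epsilon)$, the split $\jmath_{\mathrm{glob}}=\jmath_{\mathrm{local}}+\Delta\jmath$ under $P_{\mathrm{joint}}$, the intersection bound, and a second-moment tail bound for $\Delta\jmath$. Your Chebyshev bound $V_\Delta/\delta^2$ is exactly what the paper weakens its Cantelli bound $V_\Delta/(V_\Delta+\nu^2)$ to. It then uses Berry--Esseen on the i.i.d.\ sum, $\gamma=\tfrac12\log k$, and the same minimization of $\delta+c\delta^{-2}$.

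\textbf{The gap: linearizing $Q^{-1}$.} The one step where you depart from the paper, a Taylor expansion of $Q^{-1}$ around $\epsilon$, is where your argument has a hole. The perturbation is $\eta=e^{-\gamma}+B_k/\sqrt{k}+V_\Delta/\delta^2$. At $\delta=(2c)^{1/3}$ one has $V_\Delta/\delta^2\asymp (V_\Delta/k)^{1/3}$, so $\eta\to0$ needs $V_\Delta=o(k)$, not $o(k^{4/3})$. An unsigned second-order remainder then contributes $\sqrt{kV}\,O(\eta^2)=O(V_\Delta^{2/3}k^{-1/6})$ to $bn$. That fits inside $O(\log k)$ only if $V_\Delta$ grows no faster than roughly $k^{1/4}$, up to logarithms. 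Your fallback for the remaining regime is false, because $\Delta_k$ does not become very negative:
\begin{itemize}
\item For $V_\Delta\asymp k$, its second term is $\Theta(k^{-1/2})$, the same order as the dispersion term.
\item Even in the worst case $V_\Delta=O(k^2)$ of Theorem~\ref{thm:V_delta_order}, it is $O(k^{-1/6})\to0$.
\end{itemize}
The right-hand side therefore stays $R(d,\mathcal{A})+\mu_\Delta/k+o(1)$, and nothing becomes trivial.

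\textbf{The fix.} The missing ingredient is the sign of that remainder, which is what the paper exploits via Lemma~\ref{lem:Q_inv_convex}. Since $(Q^{-1})''(x)=Q^{-1}(x)/\phi(Q^{-1}(x))^2>0$ on $(0,\tfrac12)$, the tangent line at $\epsilon$ is a global lower bound there: $Q^{-1}(\epsilon+\eta)\ge Q^{-1}(\epsilon)-\eta/\phi(Q^{-1}(\epsilon))$. This has no remainder and needs no smallness of $V_\Delta/\delta^2$. It only needs $\epsilon+\eta$ to stay in $(0,\tfrac12)$, a restriction the paper also leaves implicit while stating $\epsilon\in(0,1)$. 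In your Lagrange-form expansion this just means you may drop the second-order term. The paper instead treats the small $B_k/\sqrt{k}$ shift by a separate Taylor step around $\epsilon'(\nu)$, and the large shift $e^{-\gamma}+V_\Delta/(V_\Delta+\nu^2)$ by convexity; the outcome is the same.

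\textbf{Your first step.} This is not the real obstacle. The proof of Theorem~\ref{thm:eq:ML_lossy_sc_eps_conv_bound} uses no Lagrangian optimality. It needs only $\mathbbm{1}[\mathsf{d}\le d]\le e^{\lambda(d-\mathsf{d})}$ for $\lambda\ge0$, which turns $e^{\jmath}$ into $e^{\iota}$, and the normalization $\sum_w P_W(w)e^{\iota(w;t)}=1$. Both carry over verbatim to $\mathcal{W}^k$ with $\lambda^{*\mathrm{glob}}_{\mathcal{A}}(d)$.
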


The meanings of $R(d,\mathcal{A})$ and $V(d,\mathcal{A})$, as well as their relationships to existing theories, were outlined in Section~\ref{subsec:lc_sample_comp_bound}.
Further details regarding $R(d,\mathcal{A})$ are provided in Appendices~\ref{apdx:subsec:detail_interpret_lc_rate_dist_func} and \ref{apdx:subsec_bridge_rate_dist_others}, while $V(d,\mathcal{A})$ is discussed in detail in Appendix~\ref{apdx:sec:bridge_ours_others}.

In this section, we elaborate on the term corresponding to $\Delta_k$, which was omitted in Section~\ref{subsec:lc_sample_comp_bound}.
In short, the term $\Delta_k$ evaluates the sample efficiency of the global scheme relative to the local scheme when employing the learning algorithm $\mathcal{A}$.
This interpretation primarily stems from the first term of $\Delta_k$, namely $\frac{\mu_{\Delta}}{k}$. 
Recall that $\mu_{\Delta}$ is defined as $\mu_{\Delta}:=\mathbb{E}_{P_{\mathrm{joint}}}[\jmath_{\mathrm{glob}}(w^k,t,h,d,\mathcal{A}) - \jmath_{\mathrm{local}}(w^k,t^k,h^k,d,\mathcal{A})]$.
The expectation of the first term corresponds to the minimum rate in the global scheme, whereas that of the second term corresponds to the minimum rate in the local scheme.
Thus, $\mu_{\Delta}$ represents, on average, the reduction in the minimum required number of training samples for the global scheme compared to the local scheme.
Specifically, $\mu_{\Delta}$ takes a larger negative value when training on multiple regions $W^k$ collectively requires fewer samples to achieve the target distortion $d$ than training on each partial region $W_i$ independently.
For instance, in the context of deep learning, it is expected that training on collective regions $W^k$ achieves lower generalization error with fewer samples than training on isolated regions.
In such cases, $\mu_{\Delta} < 0$ and its magnitude $|\mu_{\Delta}|$ is expected to be large.
Conversely, if a linear regression model is applied to a task where the true relationship $P^*_{Y|\bm{X}}$ is highly non-linear, training on locally linear regions independently may yield better generalization performance than attempting to learn across multiple non-linear regions simultaneously.
In such a scenario, $\mu_{\Delta}$ is expected to be positive. 
Therefore, $\mu_{\Delta}$ quantifies the capability of the learning algorithm $\mathcal{A}$ to capture relationships between multiple regions and appropriately learn the task as a whole.

On the other hand, the second term of $\Delta_k$, namely $- \frac{3}{2 k^{\frac{5}{6}}}\big(2 V_{\Delta} \frac{\sqrt{V(d,\mathcal{A})}}{\phi(Q^{-1}(\epsilon))} \big)^{\frac{1}{3}}$, represents the variability in the difference in sample efficiency between the global and local schemes, a difference that was characterized by $\mu_{\Delta}$ on average.
In particular, $V_{\Delta}$ increases if the difficulty of global scheme (compared to local scheme) fluctuates significantly depending on the specific realization of the combined regions $W^k \sim \prod^k_{i=1}P_W$, and vice versa.

\textbf{Order of} $\Delta_k(d,\epsilon)$: 

Finally, we examine the order of these two terms. 
In $\Delta_k$, the two components that depend on $k$ are $\mu_{\Delta}$ and 
Regarding the orders of $\mu_{\Delta}$ and $V_{\Delta}$ with respect to $k$, the following result holds. The proof is provided in Appendix~\ref{apdx:subsec:proof_thm:V_delta_order}:
\begin{theorem}
    The order of $\mu_{\Delta}$ is $O(k)$, and the order of $V_{\Delta}$ is $O(k^2)$.
\label{thm:V_delta_order}
\end{theorem}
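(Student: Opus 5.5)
The idea is to write $\Delta\jmath$ as a difference of two quantities that are each sums of $O(1)$-bounded contributions, and then use the elementary bounds $|\mathbb{E}[X]|\le \mathbb{E}|X|$ and $\mathrm{Var}(X)\le \mathbb{E}[X^2]$. Since $\Delta\jmath=\jmath_{\mathrm{glob}}-\jmath_{\mathrm{local}}$, linearity and the triangle inequality give
\begin{align}
|\mu_\Delta|\le \mathbb{E}_{P_{\mathrm{joint}}}|\jmath_{\mathrm{glob}}|+\mathbb{E}_{P_{\mathrm{joint}}}|\jmath_{\mathrm{local}}|, \notag
\end{align}
and $(a-b)^2\le 2a^2+2b^2$ gives
\begin{align}
V_\Delta\le 2\,\mathbb{E}_{P_{\mathrm{joint}}}[\jmath_{\mathrm{glob}}^2]+2\,\mathbb{E}_{P_{\mathrm{joint}}}[\jmath_{\mathrm{local}}^2]. \notag
\end{align}
It therefore suffices to show that the first absolute moment of each tilted information is $O(k)$ and the second moment is $O(k^2)$.

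\textbf{Local scheme.} We use the decomposition $\jmath_{\mathrm{local}}=\sum_{i=1}^k\jmath_W(W_i,T_i,H_i,d,\mathcal{A})$ derived above. Under $P_{\mathrm{joint}}$ the triples $(W_i,T_i,H_i)$ are i.i.d. with law $P^{\mathcal{A}}_{H|T}P^{\mathcal{S}^*}_{T|W}P_W$. Assumption (\Six) guarantees a finite absolute third moment, which by Lyapunov's inequality implies finite first and second moments. Hence
\begin{align}
\mathbb{E}|\jmath_{\mathrm{local}}|\le k\,\mathbb{E}|\jmath_W|=O(k), \qquad \mathbb{E}[\jmath_{\mathrm{local}}^2]=k\,V(d,\mathcal{A})+k^2\mathbb{R}_W(d,\mathcal{A})^2=O(k^2). \notag
\end{align}

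\textbf{Global scheme.} The distortion part is straightforward. By Assumption (I), $\mathsf{d}(w^k;h)$ is bounded, so $k\lambda^{*\mathrm{glob}}_{\mathcal{A}}(d)(\mathsf{d}-d)$ is $O(k)$ almost surely, provided $\lambda^{*\mathrm{glob}}_{\mathcal{A}}(d)$ is bounded uniformly in $k$.

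\textbf{Main obstacle: the information-density term.} It remains to control $\iota(W^k;T_{\mathrm{glob}})$, whose expectation $\mathbb{I}(W^k;T^{\mathcal{S}^*_{\mathrm{glob}}})$ is at most $kR(k,d,\mathcal{A})\le kR(1,d,\mathcal{A})<\infty$. Its second moment needs a separate argument. I would first try to bound the information density by codelength: since $T$ takes at most $2^{bn}$ values with $n=O(k)$, $\iota$ is bounded above by roughly $bn$ plus a term with controlled tails. Where the density is very negative, I would use the standard bound $\mathbb{E}[(\iota^-)^2]=O(1)$, which holds because $P_{WT}$ places little mass where $P_{WT}\ll P_W P_T$ fails to hold strongly.

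Two further points need care:
\begin{itemize}
\item Under $P_{\mathrm{joint}}$, $T_{\mathrm{glob}}$ is produced by $\mathcal{S}^{**}_{\mathrm{glob}}$ but the density is that of $\mathcal{S}^*_{\mathrm{glob}}$. The codelength argument must therefore be applied with the rate scaling linearly in $k$, without relying on the two strategies coinciding.
\item $\lambda^{*\mathrm{glob}}_{\mathcal{A}}(d)$ must remain bounded in $k$. I would argue this from convexity of $\frac{1}{k}\mathbb{I}$ in $d$ on $(d_{\min},d_{\max})$ together with Theorem~\ref{thm:rate_distortion_theorem}. This step is where extra regularity may be quietly assumed.
\end{itemize}

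Combining these bounds gives $\mathbb{E}|\jmath_{\mathrm{glob}}|=O(k)$ and $\mathbb{E}[\jmath_{\mathrm{glob}}^2]=O(k^2)$, and the claimed orders $\mu_\Delta=O(k)$ and $V_\Delta=O(k^2)$ follow.
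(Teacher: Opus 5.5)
Your reduction to first and second moments and your bounds for $\jmath_{\mathrm{local}}$ are fine. The gap is in the information-density part of $\jmath_{\mathrm{glob}}$, the step you flag as the main obstacle. Under $P_{\mathrm{joint}}$ the dataset $T_{\mathrm{glob}}$ is drawn by $\mathcal{S}^{**}_{\mathrm{glob}}(k,\epsilon)$, but $\iota$ is the density of $\mathcal{S}^*_{\mathrm{glob}}$. Both of your tools only work under the matched law $P_{W^k}P^{\mathcal{S}^*_{\mathrm{glob}}}_{T|W}$:
\begin{itemize}
\item The bound $\mathbb{E}[(\iota^-)^2]=O(1)$ comes from a change of measure to $P_{W^k}\times P^{\mathcal{S}^*_{\mathrm{glob}}}_T$. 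That change of measure is valid only when $(W^k,T)$ has the law that defines the density.
\item The codelength bound $\iota(w^k;t)\le -\log P^{\mathcal{S}^*_{\mathrm{glob}}}_T(t)$ has controlled tails only if $t$ is itself drawn from $P^{\mathcal{S}^*_{\mathrm{glob}}}_T$ on an alphabet of size $2^{bn}$. However, $\mathcal{S}^*_{\mathrm{glob}}$ is an unconstrained minimiser of $\mathbb{I}(W^k;T)$ with no size-$n$ restriction. The $2^{bn}$ count belongs to $\mathcal{S}^{**}_{\mathrm{glob}}$.
\end{itemize}
Nothing in your argument stops $P^{\mathcal{S}^*_{\mathrm{glob}}}_T(t)$ or $P^{\mathcal{S}^*_{\mathrm{glob}}}_{T|W}(t|w^k)$ from being arbitrarily small at the points $\mathcal{S}^{**}_{\mathrm{glob}}$ actually selects. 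So neither moment of $\iota$ under $P_{\mathrm{joint}}$ is controlled. Separately, the step $R(k,d,\mathcal{A})\le R(1,d,\mathcal{A})$ is unjustified in the global scheme. For the mean, $\mathbb{I}(W^k;T)\le k\,\mathbb{H}(W)$ would suffice.

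The missing idea is the Gibbs (exponential-tilting) form of the optimal kernel, which the paper uses. $\mathcal{S}^*_{\mathrm{glob}}$ solves a rate-distortion problem with effective distortion $\bar{\mathsf{d}}(w^k;t):=\mathbb{E}_{P^{\mathcal{A}}_{H|T=t}}[\mathsf{d}(w^k;H)]$. Writing $\lambda:=\lambda^{*\mathrm{glob}}_{\mathcal{A}}(d)$ and $\mathsf{d}\le d_{\max}$ (Assumption (I)), this gives
\[
\iota(w^k;t) = -k\lambda\,\bar{\mathsf{d}}(w^k;t) - \log Z(w^k), \qquad Z(w^k)=\mathbb{E}_{P^{\mathcal{S}^*_{\mathrm{glob}}}_T}\bigl[e^{-k\lambda\bar{\mathsf{d}}(w^k;T)}\bigr]\in\bigl[e^{-k\lambda d_{\max}},1\bigr].
\]
Hence $|\iota(w^k;t)|\le k\lambda d_{\max}$ pointwise, whatever strategy produced $t$. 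Strictly, this identity fixes $\iota$ only on the support of $P^{\mathcal{S}^*_{\mathrm{glob}}}_T$. Off that support one must adopt the Gibbs expression as the definition, as the paper does implicitly, so your worry about the mismatch is legitimate.

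Convexity gives the chord bound $\lambda\le R_k(d-\delta)/(k\delta)\le \mathbb{H}(W)/\delta$. Combined with the pointwise bound, this yields $|\jmath_{\mathrm{glob}}|=O(k)$ almost surely, and your moment inequalities then finish both claims.

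The paper uses exactly this pointwise bound for $\mu_\Delta$. For $V_\Delta$ it goes further: it splits $\jmath_{\mathrm{glob}}$ into $-\log Z(W^k)$ plus the conditionally centred term $k\lambda(\mathsf{d}-\bar{\mathsf{d}})$, and applies Efron--Stein to the first piece. That is finer than needed for $O(k^2)$; your cruder second-moment route works once the pointwise bound is in place.
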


From Theorem \ref{thm:V_delta_order}, the order of $\frac{\mu_{\Delta}}{k}$ is $O(1)$.
Furthermore, based on Theorem \ref{thm:V_delta_order}, the term $- \frac{3}{2 k^{\frac{5}{6}}}\left(2 V_{\Delta} \frac{\sqrt{V(d,\mathcal{A})}}{\phi(Q^{-1}(\epsilon))} \right)^{\frac{1}{3}}$ is of order $O(k^{-1/6})$, which implies that $\lim_{k\rightarrow \infty}- \frac{3}{2 k^{\frac{5}{6}}}\left(2 V_{\Delta} \frac{\sqrt{V(d,\mathcal{A})}}{\phi(Q^{-1}(\epsilon))} \right)^{\frac{1}{3}}=0$ holds.

On the other hand, since Theorem \ref{thm:rate_distortion_theorem} implies that the asymptotic minimum rate satisfies $\limsup_{k\rightarrow\infty}R(k,d,\mathcal{A})= R(d,\mathcal{A})=\mathbb{R}_W(d,\mathcal{A})$, meaning that the asymptotic minimum rate of the global scheme $R(d,\mathcal{A})$ coincides with that of the local scheme $\mathbb{R}_W(d)$, it is expected that there exist conditions under which $\lim_{k\rightarrow \infty}\frac{\mu_{\Delta}}{k}=0$.
Theorem~\ref{thm:eq:ML_lossy_sc_eps_conv_bound} also holds for the global scheme by treating $W^k$ as a single symbol, taking the form $\epsilon \ge \sup_{\gamma \ge 0} \{\mathbb{P}[\jmath_{\mathrm{glob}}(W^k,T,H,d,\mathcal{A}) \ge bn + \gamma] - e^{-\gamma} \}$.
From this, it follows that $\jmath_{\mathrm{glob}}(W^k,T,H,d,\mathcal{A})$ corresponds to the minimum number of training samples $bn^*(k,d,\epsilon,\mathcal{A})$ for the global scheme at finite $k$.
As discussed in the main body, each element $\jmath_{W}(w_i, t_i, h_i, d, \mathcal{A})$ in the summation of $\jmath_{\mathrm{local}}$ also corresponds to the minimum number of training samples for $k=1$. 
Therefore, it is highly plausible that their expectations coincide as $k\rightarrow \infty$; that is, the condition $\lim_{k\rightarrow\infty}\frac{\mu_{\Delta}}{k}=0$ is likely to hold.

\subsection{Rate Distortion Theorem for $R(k,d,\epsilon,\mathcal{A})$}
\label{apdx:subsec:rate_distortion_theorem_RkdeA}

In this section, we establish a rate-distortion theorem for $R(k,d,\epsilon,\mathcal{A})$ based on Theorem~\ref{thm:ML_lossy_sc_rate_conv_bound}.
As a preliminary step, and analogous to Assumption (\Two), we define $\widebar{R}(k,d,\epsilon,\mathcal{A})$ as the minimum rate achieved when the learning process in Definition~\ref{def:lc_k_n_d_e_A_sampling} is replaced by the local scheme $\{H_i = \mathcal{A}(\mathcal{S}(W_i))\}^k_{i=1}$. 
Under the assumption that the global scheme yields a smaller or equal asymptotic minimum rate compared to the local scheme, specifically, $\limsup_{k\rightarrow\infty}R(k,d,\epsilon,\mathcal{A}) \le \limsup_{k\rightarrow\infty}\widebar{R}(k,d,\epsilon,\mathcal{A})$, the following result holds.
The proof is provided in Appendix~\ref{apdx:subsec:proof_thm:rate_distortion_theorem_epsilon}.

\begin{theorem}
    Suppose that, in addition to Assumptions (I) through (\Six), $\limsup_{k\rightarrow\infty}R(k,d,\epsilon,\mathcal{A}) \le \limsup_{k\rightarrow\infty}\widebar{R}(k,d,\epsilon,\mathcal{A})$ and $\limsup_{k \rightarrow \infty}\Delta_k(d,\epsilon)=0$.
    Then, for any $\epsilon \in (0,1)$, $d \in (d_{\mathrm{min}}, d_{\mathrm{max}})$, and $\mathcal{A}$, the following holds:
    \begin{align}
        \limsup_{k\rightarrow\infty}R(k,d,\epsilon,\mathcal{A}) = \mathbb{R}_W(d,\mathcal{A}).
    \end{align}
\label{thm:rate_distortion_theorem_epsilon}
\end{theorem}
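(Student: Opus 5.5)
We prove the equality by establishing matching lower and upper bounds on $\limsup_{k\rightarrow\infty}R(k,d,\epsilon,\mathcal{A})$.

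\textbf{Converse.} Apply Theorem~\ref{thm:ML_lossy_sc_rate_conv_bound} in its detailed form \eqref{eq:ML_lossy_sc_rate_conv_bound_detail}. Fix $\epsilon\in(0,1)$ and $d\in(d_{\mathrm{min}},d_{\mathrm{max}})$. Then $Q^{-1}(\epsilon)$ is a finite constant and $V(d,\mathcal{A})<\infty$ by Assumption (\Six), so the dispersion term $\sqrt{V(d,\mathcal{A})/k}\,Q^{-1}(\epsilon)$ vanishes. The remainder $O(\log k/k)$ also vanishes. Taking $\limsup_{k\rightarrow\infty}$ on both sides and using the hypothesis $\limsup_{k\rightarrow\infty}\Delta_k(d,\epsilon)=0$ gives
\begin{align}
\limsup_{k\rightarrow\infty}R(k,d,\epsilon,\mathcal{A}) \ge R(d,\mathcal{A}) = \mathbb{R}_W(d,\mathcal{A}),
\end{align}
where the equality is Theorem~\ref{thm:rate_distortion_theorem}. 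The only point needing care is that a $\limsup$ of a sum is bounded below by the sum of the limits when all but one term converge. That holds here, because the other terms converge to $0$ (the $\Delta_k$ term only through its $\limsup$, so strictly we need $\liminf\Delta_k\ge 0$ or an actual limit). I would handle this by noting that the hypothesis is meant as $\Delta_k\to 0$, or by passing to a subsequence.

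\textbf{Achievability.} By the assumption $\limsup_k R(k,d,\epsilon,\mathcal{A})\le\limsup_k\widebar{R}(k,d,\epsilon,\mathcal{A})$, it suffices to show $\limsup_k\widebar{R}(k,d,\epsilon,\mathcal{A})\le\mathbb{R}_W(d,\mathcal{A})$ for the local scheme. The local scheme has product structure: the kernels $P^{\mathcal{S}}_{T^k|W^k}$ and $P^{\mathcal{A}}_{H^k|T^k}$ factor, and the distortion $\frac1k\sum_i\mathsf{d}(W_i;H_i)$ is an average of i.i.d.\ bounded terms by Assumption (I). The argument follows the classical random-coding achievability argument of \cite{kostina2012fixed}, but with the decoder fixed to $\mathcal{A}$ applied componentwise.

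Take $d'<d$ close to $d$, and let $\mathcal{S}^*_{d'}$ be the optimizer from Assumption (\Four) at level $d'$, so that $\mathbb{E}[\mathsf{d}(W;\mathcal{A}(\mathcal{S}^*_{d'}(W)))]=d'$. Applying $\mathcal{S}^*_{d'}$ to each $W_i$ and using the law of large numbers (or Hoeffding, by boundedness) gives $\mathbb{P}[\frac1k\sum_i\mathsf{d}(W_i;H_i)>d]\to 0\le\epsilon$. The information spent is $\sum_i\iota_{W;T}(W_i;T_i)$, which concentrates at $k\,\mathbb{R}_W(d',\mathcal{A})$ by the same Chebyshev/Berry--Esseen control from (\Six). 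To convert this information quantity into a training-data size $n$ with $bn/k\approx\mathbb{R}_W(d',\mathcal{A})+\delta$, I would use the identification $\iota\leftrightarrow|t|b$ adopted in the framework (the correspondence in Section~\ref{sec:lc_correspondence}). I would truncate the sampled dataset on the event that the information density exceeds $k(\mathbb{R}_W(d',\mathcal{A})+\delta)$; this event has vanishing probability and is absorbed into $\epsilon$. Finally, let $\delta\downarrow0$ and $d'\uparrow d$. Since $\mathbb{R}_W(\cdot,\mathcal{A})$ is convex and finite on $(d_{\mathrm{min}},d_{\mathrm{max}})$ (Theorem~\ref{thm:rate_distortion_theorem} and (\Three)), it is continuous there, which gives the bound.

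\textbf{Main obstacle.} I expect the hardest step to be the achievability: making rigorous that the accumulated information density of the local scheme translates into an actual sample count $n$. This is where the framework's identification of $\iota$ with $|t|b$ must be invoked. An alternative that avoids it is to reuse the argument behind the achievability half of Theorem~\ref{thm:rate_distortion_theorem}, whose proof already shows $\limsup_k\widebar R(k,d,\mathcal{A})\le\mathbb{R}_W(d,\mathcal{A})$ under the expected-distortion criterion. One would then upgrade that result to the excess-probability criterion by a Markov or concentration argument at a slightly smaller distortion $d'$, followed by the same continuity step.
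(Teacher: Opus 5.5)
Your converse is the paper's part (i). Your caveat about needing $\liminf_k\Delta_k(d,\epsilon)\ge 0$ is unnecessary. Write $c_k$ for the right-hand side of Theorem~\ref{thm:ML_lossy_sc_rate_conv_bound}. The pointwise bound $R(k,d,\epsilon,\mathcal{A})\ge c_k$ gives $\limsup_k R(k,d,\epsilon,\mathcal{A})\ge\limsup_k c_k=\mathbb{R}_W(d,\mathcal{A})+\limsup_k\Delta_k(d,\epsilon)$, because every other term of $c_k$ converges. So the $\limsup$ hypothesis is exactly what is needed.

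\textbf{The genuine gap is in your main achievability route.} Running the test channel $\mathcal{S}^*_{d'}$ on each $W_i$ does not give a strategy of rate about $\mathbb{R}_W(d',\mathcal{A})$.
\begin{itemize}
\item In this framework a strategy is charged by the log-number of datasets it can output. See $bn\ge\mathbb{H}(T)$ in the converse of Theorem~\ref{thm:rate_distortion_theorem}, and ``at most $M$ datasets'' in the paper's achievability.
\item For the product test channel this is about $k\mathbb{H}(T)=k(\mathbb{I}(W;T)+\mathbb{H}(T|W))$, and the optimizer is typically randomized, so $\mathbb{H}(T|W)>0$.
\item Discarding the vanishing-probability event $\{\sum_i\iota_{W;T}(W_i;T_i)>k(\mathbb{R}_W(d',\mathcal{A})+\delta)\}$ cannot repair this. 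The retained outputs still carry probability close to one under $P_{T^k}$, so roughly $e^{k\mathbb{H}(T)}$ of them remain.
\item The correspondence ``$\iota\leftrightarrow|t|b$'' is an interpretive remark in the paper, not a result that can carry this step.
\end{itemize}

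\textbf{The missing idea is the random-coding (covering) construction the paper uses.}
\begin{enumerate}
\item Take $\bar{\mathcal{S}}^*$ optimal at a level $d-\gamma-\nu<d$, and set $\bar R=\mathbb{R}_W(d,\mathcal{A})+\delta$.
\item Draw $M=\lceil e^{k\bar R}\rceil$ sequences i.i.d.\ from $\prod_iP^{\bar{\mathcal{S}}^*}_T$.
\item Let the strategy output a codeword $t^k$ with $(w^k,t^k)\in\mathcal{E}$. This means $\frac1k\sum_i\mathbb{E}_{P^{\mathcal{A}}_{H|T=t_i}}[\mathsf{d}(w_i;H)]\le d-\gamma$ and $\iota_{W;T}(w^k;t^k)\le k(\bar R-\eta)$.
\item The information density enters only through a change of measure: on $\mathcal{E}_{w^k}$ we have $P_{T^k}(t^k)=P_{T^k|W^k}(t^k|w^k)e^{-\iota_{W;T}(w^k;t^k)}\ge\frac{e^{k\eta}}{M}P_{T^k|W^k}(t^k|w^k)$.
\item This bounds the miss probability by $1-\mathbb{P}[\mathcal{E}_{w^k}\mid w^k]+\exp(-e^{k\eta})$. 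The WLLN drives the first term to zero.
\item Given a hit, Hoeffding over the independent runs of $\mathcal{A}$ on the $t_i$ controls the realized distortion.
\end{enumerate}
Only $M$ datasets are ever output, and that is what certifies the rate.

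\textbf{Your fallback is closer but, as phrased, also incomplete.} Used as a black box, part (ii) of Theorem~\ref{thm:rate_distortion_theorem} gives only an expected-distortion guarantee. Markov then yields $\mathbb{P}[\mathsf{d}>d]\le d'/d$, which is not below an arbitrary $\epsilon$ when $d'$ is near $d$. Forcing it below $\epsilon$ would cost rate $\mathbb{R}_W(\epsilon d,\mathcal{A})$ rather than $\mathbb{R}_W(d,\mathcal{A})$. Concentration becomes available only if you reopen that construction and apply Hoeffding conditionally on the selected codeword, using its distortion-typicality condition. That amounts to redoing the paper's proof.
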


\section{Alternative Version of Theorem~\ref{thm:ML_lossy_sc_rate_conv_bound}}
\label{apdx:sec:alternative_ver_rate_conv_bound}

This section demonstrates that an alternative lower bound for the minimum rate can be derived, where the terms $R(d,\mathcal{A})$ and $V(d,\mathcal{A})$ remain identical to those in Theorem~\ref{thm:ML_lossy_sc_rate_conv_bound}, while the $\Delta_k$ term is different.
First, we define the minimum rate for the learning scheme $\{H_i = \mathcal{A}(T_i), T_i=\mathcal{S}(W_i)\}_{i=1}^k$ (called local scheme).
Recall that $R(k,d,\epsilon,\mathcal{A})$ in Section~\ref{subsec:lc_def_key_concepts} was defined as the minimum rate such that the excess distortion probability satisfies $\mathbb{P}[\mathsf{d}(W^k; H) > d] \le \epsilon$ under the standard scheme $H=\mathcal{A}(\mathcal{S}(W^k))$, called global scheme (Definition~\ref{def:lc_k_n_d_e_A_sampling}). 
In contrast, we introduce the definition for the scheme $\{H_i = \mathcal{A}(\mathcal{S}(W_i))\}_{i=1}^k$ corresponding to Definition~\ref{def:lc_k_n_d_e_A_sampling} as follows:

\begin{definition}
  For sampling opportunities $k$, a $(k, n, d, \epsilon, \mathcal{A})$ local sampling in $\{\mathcal{W}^k, \mathcal{H}^k, P_{W^k}, \mathsf{d}:\mathcal{W}^k\times\mathcal{H}^k \rightarrow [0,+\infty]\}$ is defined as a sampling strategy $\mathcal{S}$ that satisfies the training data size $\sum^k_{i=1}|\mathcal{S}(W_i)|=n$ and the condition $\mathbb{P}[\sum^k_{i=1}\mathsf{d}(W_i; \mathcal{A}(\mathcal{S}(W_i))) > d] \le \epsilon$, given a distortion level $d$, an excess distortion probability $\epsilon$, and a learning algorithm $\mathcal{A}$.
  In the case of $k=1$, it is denoted as $(n, d, \epsilon, \mathcal{A})$ local sampling.
  We define the minimum training data size as $\bar{n}^*(k, d, \epsilon, \mathcal{A}) := \min\{ n: \exists (k, n, d, \epsilon, \mathcal{A}) \text{ local sampling} \}$ and the minimum rate as $\widebar{R}(k,d,\epsilon, \mathcal{A}) := \frac{b}{k}\bar{n}^*(k,d,\epsilon, \mathcal{A})$.
\label{def:lc_k_n_d_e_A_sampling_local}
\end{definition}

Under this definition, the following theorem holds.
The proof is provided in Appendix~\ref{apdx:subsec:proof_thm:ML_lossy_sc_rate_conv_bound_alternative}:

\begin{theorem}
  Under Assumptions (I)--(\Six), for any $k$, $\mathcal{A}$, $d \in (d_{\mathrm{min}},d_{\mathrm{max}})$ and $\epsilon \in (0,1)$:
  \begin{equation}
    \textstyle R(k,d,\epsilon, \mathcal{A}) \ge R(d,\mathcal{A}) + \sqrt{\frac{V(d,\mathcal{A})}{k}}Q^{-1}(\epsilon) + R_{\Delta}(k,d,\epsilon) + O\big(\frac{\log k}{k}\big).
  \label{eq:ML_lossy_sc_rate_conv_bound_alternative}
  \end{equation}
  Here,  $V(d, \mathcal{A}) := \mathrm{Var}_{=P^{\mathcal{A}}_{H|T}P^{\mathcal{S}^*}_{T|W} P_W}(\jmath_W(W,T,H,d,\mathcal{A}))$ and $R_{\Delta}(k,d,\epsilon):= R(k,d,\epsilon, \mathcal{A}) - \widebar{R}(k,d,\epsilon, \mathcal{A})$. 
  
\label{thm:ML_lossy_sc_rate_conv_bound_alternative}
\end{theorem}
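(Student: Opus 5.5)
The statement is equivalent to a converse for the \emph{local} scheme alone. Since $R_{\Delta}(k,d,\epsilon)=R(k,d,\epsilon,\mathcal{A})-\widebar{R}(k,d,\epsilon,\mathcal{A})$, the inequality \eqref{eq:ML_lossy_sc_rate_conv_bound_alternative} is the same as
\begin{align}
\widebar{R}(k,d,\epsilon,\mathcal{A}) \ge R(d,\mathcal{A}) + \sqrt{\frac{V(d,\mathcal{A})}{k}}Q^{-1}(\epsilon) + O\Big(\frac{\log k}{k}\Big). \notag
\end{align}
By Theorem~\ref{thm:rate_distortion_theorem}, $R(d,\mathcal{A})=\mathbb{R}_W(d,\mathcal{A})$. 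So the plan is to prove the Kostina--Verd\'u converse for the product problem in which both encoder and decoder act letter-wise. I read the distortion constraint in Definition~\ref{def:lc_k_n_d_e_A_sampling_local} as the normalized average $\frac1k\sum_i\mathsf{d}(W_i;\cdot)$, consistent with Appendix~\ref{apdx:subsec:detail_thm:ML_lossy_sc_rate_conv_bound_preparation}; otherwise $d$ is simply rescaled by $k$.

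\textbf{Step 1 (one-shot converse on the block).} Treat $W^k$ as a single symbol with source $P_{W^k}=P_W^{\otimes k}$, the fixed decoder $\mathcal{A}^{\otimes k}$, and distortion $\mathsf{d}(w^k;h^k)=\frac1k\sum_i\mathsf{d}(w_i;h_i)$. I would rerun the argument behind Theorem~\ref{thm:eq:ML_lossy_sc_eps_conv_bound} in this setting. For any $(k,n,d,\epsilon,\mathcal{A})$ local sampling it should give
\begin{align}
\epsilon \ge \sup_{\gamma\ge 0}\Big\{\mathbb{P}\big[\jmath_{\mathrm{local}}(W^k,T^k,H^k,d,\mathcal{A}) \ge bn+\gamma\big] - e^{-\gamma}\Big\}. \notag
\end{align}
This requires that $\jmath_{\mathrm{local}}$ be the correct $k$-letter $\mathcal{A}$-specified tilted information. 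Concretely, I need the $k$-letter rate-distortion problem $\inf \mathbb{I}(W^k;T^k)$, taken over kernels subject to $\mathbb{E}[\frac1k\sum_i\mathsf{d}(W_i;H_i)]\le d$, to satisfy three things: it tensorizes to $k\,\mathbb{R}_W(d,\mathcal{A})$, it is attained by $P^{\mathcal{S}^*}_{T|W}{}^{\otimes k}$, and it has slope $-\lambda^*_{\mathcal{A}}(d)$ per letter. Granting these, $\jmath_{\mathrm{local}}=\sum_{i=1}^k\jmath_W(W_i,T_i,H_i,d,\mathcal{A})$, as computed in Appendix~\ref{apdx:subsec:detail_thm:ML_lossy_sc_rate_conv_bound_preparation}.

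\textbf{Step 2 (tensorization).} I expect this to be the main obstacle. The lower bound $\mathbb{I}(W^k;T^k)\ge\sum_i\mathbb{I}(W_i;T_i)$ holds because the $W_i$ are independent and the decoder is letter-wise, so $H_i$ depends only on $T_i$. Convexity of $\mathbb{R}_W(\cdot,\mathcal{A})$ from Theorem~\ref{thm:rate_distortion_theorem} then gives $\sum_i\mathbb{R}_W(d_i,\mathcal{A})\ge k\,\mathbb{R}_W(d,\mathcal{A})$. The product kernel built from $\mathcal{S}^*$ (Assumption (\Four)) attains this value. The delicate point is verifying the tilted-information property itself, namely
\begin{align}
\mathbb{E}\big[\exp\big(\lambda^*(kd-\textstyle\sum_i\mathsf{d}) - \dots\big)\big]\le 1, \notag
\end{align}
which is what the change-of-measure step in Theorem~\ref{thm:eq:ML_lossy_sc_eps_conv_bound} uses. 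I would try to check this letter by letter and multiply the resulting bounds, using independence.

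\textbf{Step 3 (Berry--Esseen).} The summands $\jmath_W(W_i,T_i,H_i,d,\mathcal{A})$ are i.i.d. with mean $\mathbb{R}_W(d,\mathcal{A})$ and variance $V(d,\mathcal{A})$, and they have a finite third moment by Assumption (\Six). Set $\gamma=\frac12\log k$ and choose $bn=k\mathbb{R}_W+\sqrt{kV}\,Q^{-1}(\epsilon+\frac{B+1}{\sqrt k})-\gamma$, where $B$ is the Berry--Esseen constant. Then the probability in Step 1 is at least $\epsilon+\frac1{\sqrt k}$, which exceeds $\epsilon+e^{-\gamma}$, a contradiction; hence any valid $n$ satisfies $bn$ above this value. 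Taylor-expanding $Q^{-1}$ around $\epsilon$, which is valid since $\epsilon\in(0,1)$ by Assumption (V), gives
\begin{align}
\widebar{R}(k,d,\epsilon,\mathcal{A})\ge \mathbb{R}_W(d,\mathcal{A})+\sqrt{\frac{V(d,\mathcal{A})}{k}}Q^{-1}(\epsilon)+O\Big(\frac{\log k}{k}\Big). \notag
\end{align}
Adding $R_{\Delta}(k,d,\epsilon)$ to both sides yields the theorem.
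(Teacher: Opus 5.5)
Your route is the paper's proof. You reduce to a converse for $\widebar{R}(k,d,\epsilon,\mathcal{A})$ and apply the one-shot bound of Theorem~\ref{thm:eq:ML_lossy_sc_eps_conv_bound} to the block with $\jmath_{\mathrm{local}}=\sum_i\jmath_W(W_i,T_i,H_i,d,\mathcal{A})$. You then use Berry--Esseen with $\gamma=\frac12\log k$ and $\epsilon_k=\epsilon+e^{-\gamma}+B_k/\sqrt{k}$, Taylor-expand $Q^{-1}$, and add $R_\Delta$. However, the obstacle you single out in Step 2 is not the real one. The counting argument behind Theorem~\ref{thm:eq:ML_lossy_sc_eps_conv_bound} uses no Csisz\'ar-type tilting inequality. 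It needs only four things:
\begin{itemize}
\item $\lambda^*_{\mathcal{A}}(d)\ge 0$;
\item the normalization $\sum_w P_W(w)e^{\iota_{W;T^{\mathcal{S}^*}}(w;t)}=1$ for each $t$;
\item the bound $P^{\mathcal{S}}_{T|W}(t|w)\le 1$;
\item the fact that the strategy has $2^{bn}$ possible outputs.
\end{itemize}
All of these are immediate for the product law $\prod_i P_WP^{\mathcal{S}^*}_{T|W}$. So your Step 1 inequality holds without any tensorization of the rate-distortion problem.

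The genuine gap lies between Step 1 and Step 3. In that argument, $\mathbb{P}[\jmath_{\mathrm{local}}\ge bn+\gamma]$ is a probability over $(W^k,T^k,H^k)$ with $T^k$ produced by the \emph{local sampling under analysis}. That is exactly what lets you bound the event $\{\mathsf{d}>d\}$ by $\epsilon$ and restrict the sum to outputs of total size $n$. In Kostina--Verd\'u this is harmless, because $\jmath_X(x,d)$ depends on the source symbol alone, so the probability does not depend on the code. Here $\jmath_W(w,t,h,d,\mathcal{A})$ depends on $(t,h)$. Under the analyzed strategy, the summands in general do not have mean $\mathbb{R}_W(d,\mathcal{A})$ and variance $V(d,\mathcal{A})$. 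Those quantities are computed under $P_WP^{\mathcal{S}^*}_{T|W}P^{\mathcal{A}}_{H|T}$, whose datasets need not even have size $n/k$. Your Step 3 silently swaps one law for the other. The paper's proof makes the same swap, writing $T^{\mathcal{S}^*}_i$ in \eqref{eq:Mls_rcb_tmp1} right after invoking Theorem~\ref{thm:eq:ML_lossy_sc_eps_conv_bound} without justification, so following it does not close this step.

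A repair needs control of the law of $\jmath_W$ that holds uniformly over sampling strategies. The Gibbs form of $P^{\mathcal{S}^*}_{T|W}$, used in the proof of Theorem~\ref{thm:V_delta_order}, gets you partway. On $\mathrm{supp}\,P_{T^{\mathcal{S}^*}}$ it gives $\mathbb{E}[\jmath_W\mid W,T]=-\log Z(W)-\lambda^*_{\mathcal{A}}(d)\,d$, so the mean is $\mathbb{R}_W(d,\mathcal{A})$ under every strategy. But the remaining variance term, $(\lambda^*_{\mathcal{A}}(d))^2\,\mathbb{E}[\mathrm{Var}_{P^{\mathcal{A}}_{H|T}}(\mathsf{d}(W;H))]$, is averaged over the analyzed strategy's $T$. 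The dispersion you can certify this way is therefore strategy-dependent, not $V(d,\mathcal{A})$.

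A minor point: with $\gamma=\frac12\log k$, your choice of $n$ only guarantees the probability is at least $\epsilon+e^{-\gamma}$. That yields $\epsilon\ge\epsilon$ rather than a contradiction. Instead, solve $\epsilon+e^{-\gamma}+B_k/\sqrt{k}\ge Q(\alpha)$ for $\alpha$ directly, as in the proof of Theorem~\ref{thm:ML_lossy_sc_rate_conv_bound}.
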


$R_{\Delta}(k,d,\epsilon)$ shares the same interpretation as $\Delta_k(d,\epsilon)$ in Theorem~\ref{thm:ML_lossy_sc_rate_conv_bound}.
It takes a negative value when the minimum rate of the global scheme is lower than that of the local scheme.
Specifically, $R_{\Delta}(k,d,\epsilon)$ is negative when jointly sampling and learning from the entire sequence $W^k$ is more sample-efficient than performing these tasks independently for each $W_i$.
The greater this efficiency, the more significant the negative value of $R_{\Delta}(k,d,\epsilon)$.
On the other hand, unlike $\Delta_k(d,\epsilon)$, $R_{\Delta}(k,d,\epsilon)$ does not allow for a direct analysis of its order with respect to $k$. 
Consequently, Theorem~\ref{thm:ML_lossy_sc_rate_conv_bound} provides a more detailed characterization of the lower bound on the minimum rate.

\section{Relationship between Derived Lower Bounds and Existing Theoretical Frameworks}
\label{apdx:sec:bridge_ours_others}

In Section \ref{subsec:lc_sample_comp_bound}, we derived a lower bound on sample complexity and outlined the key terms appearing in it: the rate-distortion function $R(d,\mathcal{A})$ and the rate-dispersion function $V(d,\mathcal{A})~ (=V_{\mathrm{in}} + V_{\mathrm{bet}})$.
This section provides a detailed interpretation of $R(d,\mathcal{A})$, $V_{\mathrm{in}}$, and $V_{\mathrm{bet}}$, which were only briefly introduced in Section \ref{subsec:lc_sample_comp_bound}, and elucidates their relationships with existing theoretical frameworks.

As a preliminary step, we define the i.i.d. sampling $\mathcal{S}^{\mathrm{iid}}$ as follows:
\begin{definition}
    We define the i.i.d. sampling $\mathcal{S}^{\mathrm{iid}}$ as a sampling strategy satisfying the following conditions:
    (i) For a given number of sampling opportunities $k$, the size $n$ of the generated training dataset $T$ is equal to $k$.
    (ii) When $k>1$, the dataset $T=\{(\bm{X}_i,Y_i)\}^k_{i=1}$ is sampled according to:
    \begin{align}
        (\bm{X}_i,Y_i) \sim P^*_{\bm{X}Y|W=w_i}, ~~~ \forall i \in \{1,\dots,k\}.
    \end{align}
    We assume the generation of $T$ by $\mathcal{S}^{\mathrm{iid}}$ is represented by the Markov kernel $P^{\mathcal{S}^{\mathrm{iid}}}_{T|W}$.
    We denote the training dataset as $T^{\mathcal{S}^{\mathrm{iid}}}$ when explicitly indicating that $T$ is generated from $P^{\mathcal{S}^{\mathrm{iid}}}_{T|W}$.
    Similarly, we use the notation $H^{\mathcal{S}^{\mathrm{iid}}}_{T|W}$ to specify that $H$ is generated according to the distribution $P^{\mathcal{A}}_{H|T}P^{\mathcal{S}^{\mathrm{iid}}}_{T|W}$.
\label{def:iid_sampling}
\end{definition}
Due to the i.i.d. property of $W$, the dataset $T=\{(\bm{X}_i,Y_i)\}^n_{i=1}$ sampled via $\mathcal{S}^{\mathrm{iid}}$ can be regarded as a collection of samples obtained independently and identically from $P^*_{\bm{X}Y}$.
Furthermore, we denote the i.i.d. sampling of $n$ samples without explicit reference to the sub-distribution $W$ as $T \sim P^*_{\bm{X}^nY^n}=P^*_{\bm{X}Y} \times\cdots\times P^*_{\bm{X}Y}$.

\subsection{Rate-Distortion Function $R(d,\mathcal{A})$}
\label{apdx:subsec_bridge_rate_dist_others}

The interpretation of $R(d,\mathcal{A})$ was detailed in Section \ref{apdx:subsec:detail_interpret_lc_rate_dist_func}.
Here, we focus on explaining the relationship between $R(d,\mathcal{A})$ and other theoretical frameworks.
Combining Eq. \eqref{eq:lc_R_W} in Theorem~\ref{thm:rate_distortion_theorem} with Eq. \eqref{eq:I-WT_expansion} in Theorem \ref{thm:rate_dist_MI_transform}, $R(d,\mathcal{A})$ can be expressed as:
\begin{align}
    R(d,\mathcal{A})=\mathbb{I}(W;T^{\mathcal{S}^*}) = \mathbb{I}(T^{\mathcal{S}^*};H) + \mathbb{I}(W;T^{\mathcal{S}^*}|H) - \mathbb{I}(T^{\mathcal{S}^*};H|W).
\label{eq:rep_R_MI}
\end{align}
where $T^{\mathcal{S}^*}$ denotes a training dataset distributed according to $P^{\mathcal{S}^*}_{T|W}$.

In summary, this expression links $R(d,\mathcal{A})$ to $\mathbb{I}(T^{\mathcal{S}^{\mathrm{iid}}};H)$, a quantity that governs the generalization error upper bound in the IT-bound established by \cite{xu2017information}:
\begin{theorem}[Theorem 1 in \cite{xu2017information}]
    Let the training dataset $T$ consist of $n$ samples drawn i.i.d. from $P^*_{\bm{X}Y}$.
    Assuming that $l(h(\bm{X}),Y)$ is $\sigma$-subgaussian for any $h \in \mathcal{H}$ under $P^*_{\bm{X}Y}$, the following holds:
    \begin{align}
        |\mathrm{average~ test~ error} - \mathrm{average~ train~ error}| \le \sqrt{\frac{2\sigma^2}{n}\mathbb{I}(T^{\mathcal{S}^{\mathrm{iid}}};H)},
    \end{align}
    where $\mathbb{I}(T^{\mathcal{S}^{\mathrm{iid}}};H)$ is computed with respect to the joint distribution $P^{\mathcal{A}}_{H|T}P^*_{\bm{X}^n Y^n}$.
\end{theorem}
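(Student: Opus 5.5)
The plan is the standard decoupling argument via the Donsker--Varadhan variational representation of relative entropy. I would take $T^{\mathcal{S}^{\mathrm{iid}}} = \{(\bm{X}_i,Y_i)\}_{i=1}^n \sim P^*_{\bm{X}^nY^n}$ and $H \sim P^{\mathcal{A}}_{H|T}$. I would also introduce an independent copy $\bar{H}$ with the same marginal $P_H$ but drawn independently of $T$, so that $(T,\bar{H}) \sim P^*_{\bm{X}^nY^n} \times P_H$. The key function is the empirical risk
\begin{align*}
f(t,h) := \frac{1}{n}\sum_{i=1}^n l(h(\bm{x}_i),y_i).
\end{align*}
Under the joint law $P_{TH}$, $\mathbb{E}[f(T,H)]$ is the average train error. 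Under the product law, $\mathbb{E}[f(T,\bar{H})] = \mathbb{E}_{P_H}\big[\mathbb{E}_{P^*_{\bm{X}Y}}[l(H(\bm{X}),Y)]\big]$ is the average test error, since for fixed $\bar{h}$ each summand has mean equal to the population risk of $\bar{h}$. The quantity to bound is therefore $|\mathbb{E}_{P_{TH}} f - \mathbb{E}_{P_T\times P_H} f|$, and we may assume $\mathbb{I}(T;H)<\infty$, since otherwise the bound is trivial.

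\textbf{Step 1 (subgaussianity of $f$ under the product measure).} Fix $h$. The summands $l(h(\bm{X}_i),Y_i)$ are i.i.d.\ and each is $\sigma$-subgaussian, so their average is $\sigma/\sqrt{n}$-subgaussian:
\begin{align*}
\log \mathbb{E}_{P_T}\big[e^{\lambda (f(T,h) - \mathbb{E} f(T,h))}\big] \le \frac{\lambda^2\sigma^2}{2n} \quad \text{for all } \lambda\in\mathbb{R}.
\end{align*}
Averaging over $\bar{h}\sim P_H$ independently of $T$ preserves this bound when $f$ is centered conditionally on $\bar{h}$. This conditional centering is exactly what the decoupling needs, because $\mathbb{E}_{P_T\times P_H}$ integrates over $\bar{h}$ last.

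\textbf{Step 2 (Donsker--Varadhan).} For any $\lambda$, with $P=P_{TH}$ and $Q=P_T\times P_H$,
\begin{align*}
\mathbb{I}(T;H) = D(P\|Q) \ge \lambda\,\mathbb{E}_P[g] - \log \mathbb{E}_Q[e^{\lambda g}],
\end{align*}
where $g(t,h) := f(t,h) - \mathbb{E}_{P_T}[f(T,h)]$. Then $\mathbb{E}_P[g]$ equals train error minus test error, because $\mathbb{E}_{P_T}[f(T,h)]$ depends only on $h$ and the $H$-marginals of $P$ and $Q$ coincide. Step 1 gives $\log\mathbb{E}_Q[e^{\lambda g}] \le \lambda^2\sigma^2/(2n)$. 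Hence
\begin{align*}
\lambda\,(\mathrm{train}-\mathrm{test}) - \frac{\lambda^2\sigma^2}{2n} \le \mathbb{I}(T;H) \quad \text{for all } \lambda.
\end{align*}
This is a nonnegative quadratic inequality in $\lambda$. Requiring a nonpositive discriminant (equivalently, optimizing over $\lambda$ of both signs) yields $(\mathrm{train}-\mathrm{test})^2 \le \frac{2\sigma^2}{n}\mathbb{I}(T;H)$, which is the claim.

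The main obstacle is Step 1. Subgaussianity is assumed only for each fixed $h$ under $P^*_{\bm{X}Y}$, so one must check that the centering and the exponential moment bound survive mixing over $\bar{h}$. This is why I center $f$ conditionally on $h$ rather than globally. It also requires measurability of $(t,h)\mapsto f$ and that the Donsker--Varadhan inequality be applied to a $Q$-integrable exponential, which the conditional bound guarantees. Independence of the $n$ samples, supplied by $\mathcal{S}^{\mathrm{iid}}$, is what produces the $1/n$ factor.
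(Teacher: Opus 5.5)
Your argument is correct. This paper does not prove the statement; it imports it as Theorem 1 of \cite{xu2017information}, and your Donsker--Varadhan decoupling of $P_{TH}$ against $P_T\times P_H$ applied to the empirical risk, followed by optimizing the resulting quadratic in $\lambda$, is essentially the proof given there. Centering $f$ conditionally on $h$ is the right way to use the per-hypothesis subgaussianity hypothesis, because it gives $\log\mathbb{E}_{P_T\times P_H}[e^{\lambda g}]\le \lambda^2\sigma^2/(2n)$ without requiring the population risk to be constant across $h$.
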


Specifically, Theorem~\ref{thm:rate_distortion_theorem} implies that $R(d,\mathcal{A})$ is equal to $\mathbb{I}(W;T^{\mathcal{S}^*})$, which is the infimum value of $\mathbb{I}(W;T)$ among all sampling schemes that can achieve an average distortion of at most $d$ using $\mathcal{A}$.
Thus, it can be upper-bounded as follows:
\begin{align}
    R(d,\mathcal{A}) = \mathbb{I}(W;T^{\mathcal{S}^*}) \le \mathbb{I}(W; T^{\mathcal{S}^{\mathrm{iid}}}).
\end{align}
Here, the number of samples $n$ drawn by $\mathcal{S}^{\mathrm{iid}}$ is required to be sufficiently large such that the average distortion can be maintained below $d$ using $\mathcal{A}$.
Then, from Theorem~\ref{thm:rate_dist_MI_transform}, the following holds:
\begin{align}
    R(d,\mathcal{A}) \le \mathbb{I}(T^{\mathcal{S}^{\mathrm{iid}}};H) + \mathbb{I}(W;T^{\mathcal{S}^{\mathrm{iid}}}|H) - \mathbb{I}(T^{\mathcal{S}^{\mathrm{iid}}};H|W)
\end{align}
Consequently, the upper bound of $R(d,\mathcal{A})$ includes $\mathbb{I}(T^{\mathcal{S}^{\mathrm{iid}}};H)$, a term that appears in the IT-bound proposed by \cite{xu2017information}. 
This indicates that the lower bound in Theorem~\ref{thm:ML_lossy_sc_rate_conv_bound}, which involves $R(d,\mathcal{A})$, has a direct connection to the IT-bound of \cite{xu2017information}.

Based on the discussion in Appendix~\ref{apdx:subsec:detail_interpret_lc_rate_dist_func}, $\mathbb{I}(W;T^{\mathcal{S}^{\mathrm{iid}}}|H)$ represents the amount of information regarding $W$ that $H$ failed to capture from $T^{\mathcal{S}^{\mathrm{iid}}}$.
Meanwhile, $\mathbb{I}(T^{\mathcal{S}^{\mathrm{iid}}};H|W)$ quantifies the extent to which $H$ memorized noise within $T^{\mathcal{S}^{\mathrm{iid}}}$ that is irrelevant to $W$.
Therefore, $R(d, \mathcal{A})$ can be upper-bounded by the IT-bound term $\mathbb{I}(T^{\mathcal{S}^{\mathrm{iid}}};H)$ and the two aforementioned intuitive terms that characterize the properties of $\mathcal{A}$.

\subsection{The First Term of the Rate-Dispersion Function: $V_{\mathrm{in}}$}
\label{apdx:subsec_bridge_Vin_others}

In Section \ref{subsec:lc_sample_comp_bound}, we briefly outlined the constituents of $V_{\mathrm{in}}$ and interpreted it as a metric evaluating the degree of overfitting exhibited by $\mathcal{A}$.
This section elaborates on the relationship between $V_{\mathrm{in}}$ and existing theoretical frameworks.

We begin by restating the decomposition of $V_{\mathrm{in}}$:
\begin{align}
\begin{split}
  &\textstyle V_{\mathrm{in}}(d,\mathcal{A}) = \mathbb{E}_{P_W}\Big[  
    \underbracket[0.5pt]{\mathrm{Var}_{P^{\mathcal{S}^*}_{T|W}} \Big(\iota_{W;T^{\mathcal{S}^*}}(W;T) \Big)}_{=:V_{\mathrm{in}, \mathcal{S}}^{ \iota}} \\
  &\textstyle~~~~~~~~~~~~~~~~~~ 
    + (\lambda^{*}_{\mathcal{A}}(d))^2\underbracket[0.5pt]{\mathrm{Var}_{P^{\mathcal{S}^*}_{T|W}} \Big( \mathbb{E}_{P^{\mathcal{A}}_{H|T}} \Big[ \mathsf{d}(W;H) \Big]  \Big)  }_{=:V_{\mathrm{in}, \mathcal{S}}^{ \mathsf{d}}}
    + (\lambda^{*}_{\mathcal{A}}(d))^2 \underbracket[0.5pt]{    
      \mathbb{E}_{P^{\mathcal{S}^*}_{T|W}} \Big[ \mathrm{Var}_{P^{\mathcal{A}}_{H|T}} \Big( \mathsf{d}(W;H) \Big)  \Big]}_{=:V_{\mathrm{in}, \mathcal{A}}^{ \mathsf{d}}} \\
  &\textstyle~~~~~~~~~~~~~~~~~~
        +  2\lambda^{*}_{\mathcal{A}}(d) \underbracket[0.5pt]{  \mathrm{Cov}_{P^{\mathcal{A}}_{H|T}P^{\mathcal{S}^*}_{T|W}} \Big(\iota_{W;T^{\mathcal{S}^*}}(W;T), \mathsf{d}(W;H) \Big) }_{=:V_{\mathrm{in}}^{ \mathrm{cov}}}
  \Big].
\end{split}
\label{eq:ML_lossy_sc_V_in_5terms}
\end{align}

In the following, we examine each of the components: $V_{\mathrm{in}, \mathcal{S}}^{ \iota}$, $V_{\mathrm{in}, \mathcal{S}}^{ \mathsf{d}}$, and $V_{\mathrm{in}, \mathcal{A}}^{ \mathsf{d}}$.
Since our analysis focuses on the terms inside the expectation $\mathbb{E}_{P_W}[\cdot]$, we assume the subsequent arguments hold for an arbitrary fixed $w \in \mathcal{W}$.

\subsection{$V_{\mathrm{in}, \mathcal{S}}^{ \iota}$}
\label{apdx:subsec:V_in_S_A_density}

\textbf{Meaning of $V_{\mathrm{in}, \mathcal{S}}^{ \iota}$.}
$V_{\mathrm{in}, \mathcal{S}}^{ \iota}$ measures the variance of $\iota_{W;T^{\mathcal{S}^*}}(w;t)$ with respect to $T\sim P^{\mathcal{S}^*}_{T|=w}$.
In other words, it quantifies the degree of variability in the size of the training dataset constructed via the optimal sampling scheme $\mathcal{S}^*$, which aims to maintain an average distortion below $d$ using $\mathcal{A}$.

\textbf{Relationship between $V_{\mathrm{in}, \mathcal{S}}^{ \iota}$ and Existing Frameworks.}
The variance $V_{\mathrm{in}, \mathcal{S}}^{ \iota}$ is related to the term $\mathrm{Var}(T^{\mathcal{S}^{\mathrm{iid}}};H)$, which measures the degree of overfitting, appearing in the IT-bound derived by \cite{hellstrom2020generalization}
\footnote{
Note that $\mathrm{Var}(T^{\mathcal{S}^{\mathrm{iid}}};H)$ is a reformulation of the term $M_2(W;Z^n)$ in \cite{hellstrom2020generalization}.
}:
\begin{theorem}[A special case of Corollary 6 in \cite{hellstrom2020generalization}]
    Let the training dataset $T$ consist of $n$ i.i.d. samples drawn from $P^*_{\bm{X}Y}$.
    Assume that for any $h \in \mathcal{H}$, the loss $l(h(\bm{X}),Y)$ is $\sigma$-subgaussian under $P^*_{\bm{X}Y}$.
    Furthermore, assume $P^{\mathcal{A}}_{H|T}P^*_{\bm{X}^nY^n}$ is absolutely continuous with respect to $P^{\mathcal{A}}_{H}P^*_{\bm{X}^nY^n}$, where $P^{\mathcal{A}}_{H}$ is the marginal distribution of $P^{\mathcal{A}}_{H|T}$ over $T\sim P^*_{\bm{X}^nY^n}$.
    Then, for any $\delta \in (0,1)$, the following holds with probability at least $1-\delta$:
    \begin{align}
        |\mathrm{average~ test~ error} - \mathrm{average~ train~ error}| \le \sqrt{\frac{2\sigma^2}{n}\bigg( \mathbb{I}(T^{\mathcal{S}^{\mathrm{iid}}};H) + \frac{\mathrm{Var}(T^{\mathcal{S}^{\mathrm{iid}}};H)}{(\delta/2)^{1/2}}  + \log \frac{2}{\delta}\bigg)}
    \end{align}
    where $\mathrm{Var}(T^{\mathcal{S}^{\mathrm{iid}}};H):=\mathrm{Var}_{P^{\mathcal{A}}_{H|T}P^*_{\bm{X}^nY^n} }\Big(\iota_{T^{\mathcal{S}^{\mathrm{iid}}};H}(T;H) \Big)$.
\label{thm:hellstrom2020thm6}
\end{theorem}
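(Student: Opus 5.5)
This is a cited result (a special case of Corollary~6 of \cite{hellstrom2020generalization}), so I would re-derive it in the paper's notation by a single-draw change-of-measure argument. The plan has three steps: a Markov-inequality bound on an exponential moment, a Chebyshev bound controlling the random information density $\iota_{T;H}(T;H)$ by its mean and variance, and then a choice of a deterministic tilting parameter on the intersection of the two good events. Throughout, write $\mathrm{gen}(t,h)$ for the difference between test and train error of $h$ on dataset $t$ (with $n$ i.i.d. samples from $P^*_{\bm{X}Y}$), and let $\iota=\iota_{T^{\mathcal{S}^{\mathrm{iid}}};H}(T;H)$.

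\textbf{Step 1 (exponential moment under the product measure).} Fix $h$. Since $l(h(\bm{X}),Y)$ is $\sigma$-subgaussian and the $n$ samples are independent, the average train error is $\sigma/\sqrt{n}$-subgaussian around the test error. Hence for every fixed $\lambda\in\mathbb{R}$,
\[
\mathbb{E}_{P^*_{\bm{X}^nY^n}}\big[\exp(\lambda\,\mathrm{gen}(T,h)-\lambda^2\sigma^2/(2n))\big]\le 1 .
\]
Averaging over $h\sim P^{\mathcal{A}}_{H}$ gives the same bound under the product measure $P^{\mathcal{A}}_{H}\times P^*_{\bm{X}^nY^n}$.

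\textbf{Step 2 (move to the joint measure and apply Markov).} The assumed absolute continuity lets me write $\mathrm{d}P_{TH}=e^{\iota}\,\mathrm{d}(P_T\times P_H)$. Substituting into Step~1 yields
\[
\mathbb{E}_{P^{\mathcal{A}}_{H|T}P^*_{\bm{X}^nY^n}}\big[\exp(\lambda\,\mathrm{gen}-\lambda^2\sigma^2/(2n)-\iota)\big]\le 1 .
\]
Markov's inequality then gives, for any fixed $\lambda$, with probability at least $1-\delta'$,
\[
\lambda\,\mathrm{gen}\le \frac{\lambda^2\sigma^2}{2n}+\iota+\log\frac{1}{\delta'} .
\]
I will use this for $+\lambda$ and for $-\lambda$ to handle the absolute value, allotting failure probability $\delta/4$ to each sign. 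This produces the $\log\frac{2}{\delta}$-type term; the constant inside the logarithm will be tuned at the end.

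\textbf{Step 3 (control $\iota$, then fix $\lambda$; the main obstacle).} The difficulty is that the optimal $\lambda$ depends on $\iota$, which is random, while Step~2 requires $\lambda$ to be fixed in advance. I would resolve this by replacing $\iota$ with a deterministic threshold.

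First, $\mathbb{E}[\iota]=\mathbb{I}(T^{\mathcal{S}^{\mathrm{iid}}};H)$, so Chebyshev's inequality gives, with probability at least $1-\delta/2$,
\[
\iota\le c:=\mathbb{I}(T^{\mathcal{S}^{\mathrm{iid}}};H)+\frac{\sqrt{\mathrm{Var}(T^{\mathcal{S}^{\mathrm{iid}}};H)}}{(\delta/2)^{1/2}} .
\]
The footnote identifies the statement's $\mathrm{Var}$ term with $M_2$ of \cite{hellstrom2020generalization}. If $M_2$ there is normalized as a standard deviation, this matches the displayed form. Otherwise I would note that the square-root version is what Chebyshev gives, and the statement should be read with that normalization.

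Second, I fix the deterministic value $\lambda=\sqrt{2n(c+\log\frac{2}{\delta})/\sigma^2}$ and take a union bound over the Chebyshev event and the two Markov events, for a total failure probability of at most $\delta$. On the intersection, $\iota\le c$, and the choice of $\lambda$ balances the two sides of the Step~2 inequality. This gives
\[
|\mathrm{gen}|\le\sqrt{\frac{2\sigma^2}{n}\Big(c+\log\frac{2}{\delta}\Big)} ,
\]
which is the claimed bound.
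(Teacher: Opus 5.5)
\paragraph{Review of the proposal.} The paper does not prove this statement; it quotes it from \cite{hellstrom2020generalization}. Your route is the standard one: change of measure to $P^{\mathcal{A}}_{H}\times P^*_{\bm{X}^nY^n}$, Markov at a fixed $\lambda$, a Chebyshev bound on $\iota$, then a deterministic $\lambda$. Steps~1 and~2 are fine. Step~3, however, does not deliver the displayed inequality.

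\textbf{The logarithmic term.} You spend $\delta/2$ on the Chebyshev event and $\delta/4$ on each sign. The two Markov events then read $\pm\lambda\,\mathrm{gen}\le\frac{\lambda^2\sigma^2}{2n}+\iota+\log\frac{4}{\delta}$. With your $\lambda$, tuned to $K:=c+\log\frac{2}{\delta}$, this gives $|\mathrm{gen}|\le\sqrt{2\sigma^2K/n}+\log 2\,\sqrt{\sigma^2/(2nK)}$, which is not the claim. The optimal $\lambda$ gives $\log\frac{4}{\delta}$, not $\log\frac{2}{\delta}$. This cannot be ``tuned at the end''. The factor $(\delta/2)^{1/2}$ forces the $\iota$-tail to cost $\delta/2$ (still $\delta/(2+\delta)$ with Cantelli). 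Two Markov events at level $\log\frac{2}{\delta}$ cost $\delta$ between them, so the total exceeds $\delta$. What your argument actually proves is a two-sided bound with $\log\frac{4}{\delta}$, or a one-sided bound with $\log\frac{2}{\delta}$.

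\textbf{Variance versus standard deviation.} Chebyshev yields $\sqrt{\mathrm{Var}(T^{\mathcal{S}^{\mathrm{iid}}};H)}/(\delta/2)^{1/2}$. The statement, however, explicitly defines $\mathrm{Var}(T^{\mathcal{S}^{\mathrm{iid}}};H)$ as the variance of $\iota$. Your hedge has this backwards. In the cited work $M_2=(\mathbb{E}|\iota-\mathbb{I}(T;H)|^2)^{1/2}$ is a standard deviation, which is exactly why the display does not match your bound. Your bound implies the display only when the variance is at least $1$.

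In fact the display as written is false, so no proof can close this gap. Take the following instance:
\begin{itemize}
\item $n=1$, $\sigma=1$, and $Y=(U,V)$ with $U,V$ i.i.d., each equal to $\pm a$ with probability $q/2$ each and $0$ otherwise;
\item two constant hypotheses with losses $U+a$ and $V+a$;
\item the deterministic rule $H=h_1$ iff $|V|=a$;
\item $\delta=2\times10^{-5}$, $q=1.2\times10^{-5}$, $a^2=24.03$.
\end{itemize}
One checks that $1-q+q\cosh(\lambda a)\le e^{\lambda^2/2}$ for all $\lambda$, so the losses are $1$-subgaussian. The binding regime is $\lambda\approx a$, which needs $a^2<2\log\frac{2}{q}\approx 24.05$.

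Here $\iota$ is two-valued, with $\mathbb{I}(T;H)\approx 1.5\times10^{-4}$ and $\mathrm{Var}(\iota)\approx 1.54\times10^{-3}$. The squared right-hand side is therefore $2\big(\mathbb{I}(T;H)+\mathrm{Var}(\iota)/(\delta/2)^{1/2}+\log\frac{2}{\delta}\big)\approx 24.0005<a^2$. Meanwhile $\mathbb{P}[|\mathrm{gen}|=a]=2q-q^2>\delta$, so the display fails.

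You should therefore state and prove the corrected bound, with $\sqrt{\mathrm{Var}(T^{\mathcal{S}^{\mathrm{iid}}};H)}$ (i.e.\ $M_2$) in place of the variance and $\log\frac{4}{\delta}$, rather than assert the display.
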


We now explain this relationship.
As a preliminary step, we present the following theorem.
\begin{theorem}
    For any $P^{\mathcal{A}}_{H|T}$, $P^{\mathcal{S}}_{T|W}$, $w \in \mathcal{W}$, $t \in \mathcal{T}$, and $h \in \mathcal{H}$, the following holds:
    \begin{align}
        \iota_{W;T}(w;t) = \iota_{T;H}(t;h) + \iota_{W;T}(w;t|h) - \iota_{T;H}(t;h|w). \label{eq:iota_WT_expansion}
    \end{align}
\label{thm:iota_WT_expansion}
\end{theorem}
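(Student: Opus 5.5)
The proof will be a pointwise version of the chain-rule argument used for Theorem~\ref{thm:rate_dist_MI_transform}. The identity \eqref{eq:iota_WT_expansion} is what one obtains by removing the expectation from \eqref{eq:I-WT_expansion}. I therefore work directly with the Radon--Nikodym derivatives (or densities/pmfs) of the joint law $P_{WTH} = P_W P^{\mathcal{S}}_{T|W} P^{\mathcal{A}}_{H|T}$. The conditional information densities are defined in the natural way:
\begin{align}
\iota_{W;T}(w;t|h) &:= \log \frac{\mathrm{d}P_{WT|H=h}}{\mathrm{d}(P_{W|H=h}\times P_{T|H=h})}(w,t), \notag\\
\iota_{T;H}(t;h|w) &:= \log \frac{\mathrm{d}P_{TH|W=w}}{\mathrm{d}(P_{T|W=w}\times P_{H|W=w})}(t,h). \notag
\end{align}
These satisfy $\mathbb{E}[\iota_{W;T}(W;T|H)] = \mathbb{I}(W;T|H)$ and $\mathbb{E}[\iota_{T;H}(T;H|W)] = \mathbb{I}(T;H|W)$.

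The key step is to establish two pointwise chain rules for the joint density $\iota_{T;WH}(t;w,h) := \log \frac{\mathrm{d}P_{TWH}}{\mathrm{d}(P_T\times P_{WH})}$. For the first, I factor the density ratio through $H$:
\[
\frac{p(t,w,h)}{p(t)\,p(w,h)} = \frac{p(t,h)}{p(t)\,p(h)}\cdot\frac{p(w,t|h)}{p(w|h)\,p(t|h)}.
\]
This gives $\iota_{T;WH} = \iota_{T;H}(t;h) + \iota_{W;T}(w;t|h)$. For the second, I factor through $W$:
\[
\frac{p(t,w,h)}{p(t)\,p(w,h)} = \frac{p(w,t)}{p(w)\,p(t)}\cdot\frac{p(t,h|w)}{p(t|w)\,p(h|w)}.
\]
This gives $\iota_{T;WH} = \iota_{W;T}(w;t) + \iota_{T;H}(t;h|w)$. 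Each factorization follows by writing every conditional as a ratio of joints and cancelling terms. Equating the two expressions and rearranging yields \eqref{eq:iota_WT_expansion}. This mirrors \eqref{eq:chain1}--\eqref{eq:chain2} exactly.

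The main obstacle is measure-theoretic rather than algebraic. The densities must exist, and the manipulations must be legitimate for general (non-discrete) $\mathcal{W},\mathcal{T},\mathcal{H}$. The constraint $P^{\mathcal{S}}_{T|W}(t|w)=0$ for unobtainable $t$ is a further complication, because it can make some ratios $0/0$. I will handle this in three steps:
\begin{enumerate}
\item Assume the relevant absolute continuities, e.g.\ $P_{WTH} \ll P_W\times P_T\times P_H$. This is implicit in defining $\iota$ at all.
\item Use the chain rule for Radon--Nikodym derivatives, so that products of derivatives are derivatives of the composite measure ratio $P_{WTH}$-almost surely.
\item Interpret the identity as holding $P_{WTH}$-a.s., i.e.\ on the support. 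At points outside it, adopt the convention that both sides are evaluated on the support, which is the only place the identity is used later under expectations.
\end{enumerate}
The Markov property $W\to T\to H$ is not needed for the identity itself. It becomes relevant only when simplifying terms later: it gives $p(h|t,w)=p(h|t)$, which is useful for rewriting $\iota_{T;H}(t;h|w)$ in the subsequent variance analysis.
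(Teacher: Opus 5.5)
Your proof is correct, but it takes a different route from the paper's.

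\textbf{Your route.} You expand $\iota_{T;WH}(t;w,h)$ in two ways, once through $H$ and once through $W$. This is exactly the pointwise version of the two chain rules used for Theorem~\ref{thm:rate_dist_MI_transform}. Both factorizations check out algebraically, and you never use $P_{H|T,W}=P_{H|T}$.

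\textbf{The paper's route.} It expands $\iota_{W;TH}(w;t,h)$ instead, in three steps:
\begin{itemize}
\item It uses the Markov chain $W\to T\to H$ to collapse $\iota_{W;TH}(w;t,h)$ to $\iota_{W;T}(w;t)$.
\item It applies the chain rule $\iota_{W;TH}(w;t,h)=\iota_{W;H}(w;h)+\iota_{W;T}(w;t|h)$.
\item It rewrites $\iota_{W;H}(w;h)=\iota_{T;H}(t;h)-\iota_{T;H}(t;h|w)$, which again needs $P_{H|T=t,W=w}=P_{H|T=t}$.
\end{itemize}
In effect, the paper uses the Markov property twice to eliminate the term $\iota_{W;H}(w;h|t)$. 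That term appears in both of its chain rules with the same sign, so it would cancel anyway, as your argument shows.

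\textbf{What each buys.} Your route gives generality and economy. The identity holds for an arbitrary joint law of $(W,T,H)$, on its support, so the ``for any $P^{\mathcal{A}}_{H|T}$, $P^{\mathcal{S}}_{T|W}$'' quantifier is safe without appealing to the kernel structure, and the link to Theorem~\ref{thm:rate_dist_MI_transform} is exact. The paper's route yields the interpretable intermediate identity $\iota_{W;T}(w;t)=\iota_{W;H}(w;h)+\iota_{W;T}(w;t|h)$. This says the information about $w$ in $t$ splits into what $h$ captures and what it leaves behind, matching the reading of $\mathbb{I}(W;T|H)$ given in the appendix. That intermediate identity, however, is valid only because $H$ is generated from $T$ alone.

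Your explicit restriction to the support and your absolute-continuity assumptions are no weaker than the paper's proof, which leaves these issues implicit.
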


\begin{proof}[Proof of Theorem \ref{thm:iota_WT_expansion}]

First, regarding $\iota_{W;TH}(w;t,h)$, the following two equations hold:
\begin{align}
    \textstyle \iota_{W;TH}(w;t,h) 
    &\textstyle= \log \frac{\mathrm{d} P^{\mathcal{A},\mathcal{S}}_{HTW}}{\mathrm{d}P_W \mathrm{d} P^{\mathcal{A},\mathcal{S}}_{HT}}(w,t,h) 
    = \log \frac{\mathrm{d} P^{\mathcal{S}}_{T|W=w} P^{\mathcal{A}}_{H|T}}{\mathrm{d} P^{\mathcal{S}}_T  \mathrm{d} P^{\mathcal{A}}_{H|T}}(t,h) 
    =  \log \frac{\mathrm{d} P^{\mathcal{S}}_{T|W=w} }{\mathrm{d} P^{\mathcal{S}}_T}(t) \notag \\ 
    &= \iota_{W;T}(w;t),
\end{align}
\begin{align}
    \iota_{W;TH}(w;t,h) 
    &\textstyle= \log \frac{\mathrm{d} P^{\mathcal{A},\mathcal{S}}_{HW} \mathrm{d} P^{\mathcal{A},\mathcal{S}}_{T|W,H} }{\mathrm{d} P_W \mathrm{d} P^{\mathcal{A}}_H \mathrm{d} P^{\mathcal{A},\mathcal{S}}_{T|H}}(w,t,h) 
    = \log \frac{\mathrm{d} P^{\mathcal{A},\mathcal{S}}_{HW}}{\mathrm{d} P_W \mathrm{d} P^{\mathcal{A}}_H }(w,h) + \log \frac{\mathrm{d} P^{\mathcal{A},\mathcal{S}}_{T|W=w,H}}{\mathrm{d} P^{\mathcal{A},\mathcal{S}}_{T|H}}(t,h) \notag \\
    &=\iota_{W;H}(w;h) + \iota_{W;T}(w;t|h).
\end{align}
From these, it follows that:
\begin{align}
    \iota_{W;T}(w;t) = \iota_{W;H}(w;h) + \iota_{W;T}(w;t|h). \label{eq:thm:iota_WT_expansion_tmp1} 
\end{align}
Next, expanding $\iota_{W;H}(w;h)$ yields:
\begin{align}
    \iota_{W;H}(w;h) 
    &\textstyle= \log \frac{\mathrm{d} P^{\mathcal{A}, \mathcal{S}}_{H|W=w} }{\mathrm{d} P^{\mathcal{A},\mathcal{S}}_{H}}(h)
    = \log \frac{\mathrm{d} P^{\mathcal{A}}_{H|T=t} }{\mathrm{d} P^{\mathcal{A},\mathcal{S}}_{H}}(h) + \log \frac{\mathrm{d} P^{\mathcal{A}, \mathcal{S}}_{H|W=w}}{\mathrm{d} P^{\mathcal{A}}_{H|T=t, W=w}}(h) \notag \\
    &= \iota_{T;H}(t;h) - \iota_{T;H}(t;h|w).
\end{align}
Substituting this result into Eq.~\eqref{eq:thm:iota_WT_expansion_tmp1}, we obtain the following:
\begin{align}
    \iota_{W;T}(w;t) = \iota_{T;H}(t;h) + \iota_{W;T}(w;t|h) - \iota_{T;H}(t;h|w). \notag
\end{align}

\end{proof}

Using Theorem~\ref{thm:iota_WT_expansion}, The term $\iota_{W;T^{\mathcal{S}^*}}(w;t)$ appearing in $\mathrm{Var}_{P^{\mathcal{S}^*}_{T|W=w}} (\iota_{W;T^{\mathcal{S}^*}}(w;T) )$ can be decomposed as follows:
\begin{align}
    \iota_{W;T^{\mathcal{S}^*}}(w;t) = \iota_{T^{\mathcal{S}^*};H}(t;h) + \iota_{W;T^{\mathcal{S}^*}}(w;t|h) - \iota_{T^{\mathcal{S}^*};H}(t;h|w). \label{eq:iota_WT_expansion_Tstar}
\end{align}

Using Eq.~\eqref{eq:iota_WT_expansion_Tstar}, we expand $\mathrm{Var}_{P^{\mathcal{S}^*}_{T|W=w}} (\iota_{W;T^{\mathcal{S}^*}}(w;T) )$ as follows:
\begin{align}
    V_{\mathrm{in}, \mathcal{S}}^{ \iota}
    &=\mathrm{Var}_{P^{\mathcal{S}^*}_{T|W=w}} (\iota_{W;T^{\mathcal{S}^*}}(w;T) ) \notag \\
    &=\mathrm{Var}_{P^{\mathcal{A}}_{H|T} P^{\mathcal{S}^*}_{T|W=w}}(\iota_{W;T^{\mathcal{S}^*}}(w;T) ) \notag \\
    &=\mathrm{Var}_{P^{\mathcal{A}}_{H|T} P^{\mathcal{S}^*}_{T|W=w}}(\iota_{T^{\mathcal{S}^*};H}(T;H) + \iota_{W;T^{\mathcal{S}^*}}(w;T|H) - \iota_{T^{\mathcal{S}^*};H}(T;H|w) ) \notag \\
    &=\mathrm{Var}_{P^{\mathcal{A}}_{H|T} P^{\mathcal{S}^*}_{T|W=w}}(\iota_{T^{\mathcal{S}^*};H}(T;H))
        + \mathrm{Var}_{P^{\mathcal{A}}_{H|T} P^{\mathcal{S}^*}_{T|W=w}}( \iota_{W;T^{\mathcal{S}^*}}(w;T|H) - \iota_{T^{\mathcal{S}^*};H}(T;H|w)) \notag \\
    &~~~~~+ \mathrm{Cov}_{P^{\mathcal{A}}_{H|T} P^{\mathcal{S}^*}_{T|W=w}}(\iota_{T^{\mathcal{S}^*};H}(T;H), \iota_{W;T^{\mathcal{S}^*}}(w;T|H) - \iota_{T^{\mathcal{S}^*};H}(T;H|w)).
\label{eq:V_in_iota_expansion}
\end{align}

The first term on the right-hand side of Eq.~\eqref{eq:V_in_iota_expansion}, $\mathrm{Var}_{P^{\mathcal{A}}_{H|T} P^{\mathcal{S}^*}_{T|W=w}}(\iota_{T^{\mathcal{S}^*};H}(T;H))$, differs from $\mathrm{Var}(T^{\mathcal{S}^{\mathrm{iid}}};H)$ in Theorem \ref{thm:hellstrom2020thm6} only in the sampling scheme over which the expectation is taken (i.e., $\mathcal{S}^*$ versus $\mathcal{S}^{\mathrm{iid}}$). 
Therefore, $\mathrm{Var}_{P^{\mathcal{A}}_{H|T}P^{\mathcal{S}^*}_{T|W}=w} (\iota_{W;T^{\mathcal{S}^*}}(w;T) )$ can be regarded as the counterpart to $\mathrm{Var}(T^{\mathcal{S}^{\mathrm{iid}}};H)$ in \cite{hellstrom2020generalization}, evaluated under the sampling scheme $\mathcal{S}^*$.

Furthermore, following the discussion in Appendix~\ref{apdx:subsec:detail_interpret_lc_rate_dist_func}, $\iota_{W;T}(w;t|h)$ represents the amount of information regarding $w$ that $h$ failed to capture from $t$, while $\iota_{T;H}(t;h|w)$ quantifies the extent to which $h$ memorized noise within $t$ that is irrelevant to $w$.
Based on this interpretation, we discuss the second and third terms on the right-hand side of Eq.~\eqref{eq:V_in_iota_expansion}.
First, regarding the second term, $\mathrm{Var}_{P^{\mathcal{A}}_{H|T} P^{\mathcal{S}^*}_{T|W=w}}( \iota_{W;T^{\mathcal{S}^*}}(w;T|H) - \iota_{T^{\mathcal{S}^*};H}(T;H|w))$, we note that since $\mathcal{S}^*$ is an optimal sampling strategy capable of freely selecting samples from the support, the variance arising from $T \sim P^{\mathcal{S}^*}_{T|W=w}$ is expected to be small.
Thus, the variance represented by this term primarily stems from the variability of the learning algorithm $P^{\mathcal{A}}_{H|T}$.
Specifically, this variance signifies the degree to which the information omission $\iota_{W;T}(w;t|h)$ and the degree of fit to the inherent noise $\iota_{T;H}(t;h|w)$ fluctuate due to minor perturbations in $T$ or the inherent stochastic behavior of $\mathcal{A}$ (e.g., randomness in initial values).

Next, we discuss the third term on the right-hand side of Eq.~\eqref{eq:V_in_iota_expansion}, $\mathrm{Cov}_{P^{\mathcal{A}}_{H|T} P^{\mathcal{S}^*}_{T|W=w}}(\iota_{T^{\mathcal{S}^*};H}(T;H), \iota_{W;T^{\mathcal{S}^*}}(w;T|H) - \iota_{T^{\mathcal{S}^*};H}(T;H|w))$. 
The dependency of $h$ on $t$, denoted by $\iota_{T^{\mathcal{S}^*};H}(t;h)$, can be considered higher when the amount of omitted information $\iota_{W;T^{\mathcal{S}^*}}(w;t|h)$ is small and the degree of fit to the inherent noise $\iota_{T^{\mathcal{S}^*};H}(t;h|w)$ is large. 
Consequently, this covariance is generally expected to be negative.

In summary, $V_{\mathrm{in}, \mathcal{S}}^{ \iota}$ can be upper-bounded by the combination of two components: the term $\mathrm{Var}(T;H^{\mathcal{S}^{\mathrm{iid}}}_{\mathcal{A}})$ from the error bound in \cite{hellstrom2020generalization} with the expectation taken over $\mathcal{S}^*$ instead of $\mathcal{S}^{\mathrm{iid}}$, and terms representing the instability of $\mathcal{A}$. 

\subsection{$V_{\mathrm{in}, \mathcal{S}}^{ \mathsf{d}}$ and $V_{\mathrm{in}, \mathcal{A}}^{ \mathsf{d}}$}
\label{apdx:subsec:V_in_S_A_distortion}

\textbf{Meaning of $V_{\mathrm{in}, \mathcal{S}}^{ \mathsf{d}}$ and $V_{\mathrm{in}, \mathcal{A}}^{ \mathsf{d}}$.}
The term $V_{\mathrm{in}, \mathcal{S}}^{ \mathsf{d}}=\mathrm{Var}_{P^{\mathcal{S}^*}_{T|W}} ( \mathbb{E}_{P^{\mathcal{A}}_{H|T}} [ \mathsf{d}(w;H) ]  )$ is calculated as the variance, with respect to $T \sim P^{\mathcal{S}^*}_{T|W=w}$, of the expected distortion $\mathbb{E}_{P^{\mathcal{A}}_{H|T}} [ \mathsf{d}(w;H) ]$, which is averaged over the randomness of $\mathcal{A}$ for a fixed $T$.
This term can be interpreted as measuring the extent to which the distortion of the hypothesis $h$ produced by $\mathcal{A}$ fluctuates due to variations in the training set $T \sim P^{\mathcal{S}^*}_{T|W=w}$.
Conversely, $V_{\mathrm{in}, \mathcal{A}}^{ \mathsf{d}}= \mathbb{E}_{P^{\mathcal{S}^*}_{T|W=w}} [ \mathrm{Var}_{P^{\mathcal{A}}_{H|T}} ( \mathsf{d}(w;H) )]$ is computed as the expectation, over $T \sim P^{\mathcal{S}^*}_{T|W=w}$, of the variance $\mathrm{Var}_{P^{\mathcal{A}}_{H|T}} ( \mathsf{d}(w;H) )$, which captures the fluctuation of $\mathsf{d}(w;H)$ due to the randomness of $\mathcal{A}$ for a fixed $T$.
Accordingly, this measures the variability in distortion attributable solely to the inherent randomness of the algorithm $\mathcal{A}$.

\textbf{Relationship to Existing Theoretical Frameworks.}
The quantity evaluated by $V_{\mathrm{in}, \mathcal{S}}^{ \mathsf{d}}$ is closely related to {\it uniform stability} $\beta_n$ in stability theory \cite{bousquet2002stability,elisseeff2005stability}, which measures the sensitivity of the learned hypothesis's loss to variations in the training dataset $T$.
In this framework, $\beta_n$ provides an upper bound on the fluctuation of the error when a single sample is removed from the dataset $T=\{(\bm{x}_i,y_i)\}^n_{i=1}$.
\begin{definition}[Definition 13 in \cite{elisseeff2005stability}]
   A randomized learning algorithm $\mathcal{A}$ is said to have uniform stability $\beta_n$ with respect to the loss function $l:\mathcal{H}\times\mathcal{X}\times\mathcal{Y}\rightarrow \mathbb{R}_+$ if:
    \begin{align}
    \forall i \in \{1,\dots,n\}, \sup_{t,(\bm{x},y)}\Big|\mathbb{E}_{P^{\mathcal{A}}_{H|T=t}}[l(H,(\bm{x},y))] - \mathbb{E}_{P^{\mathcal{A}}_{H|T=t^{\setminus i}}}[l(H,(\bm{x},y))]  \Big| \le \beta_n,
    \end{align}
    where, $t^{\setminus i}=t \setminus \{(\bm{x}_i, y_i)\}$.
\label{def:uniform_stability_data}
\end{definition}
We note that the definition of stability varies across the literature; some studies define it based on the difference between $t$ and a dataset $t^{i}$ where the $i$-th sample $(\bm{x}_i,y_i)$ is {\it replaced} by a new independent sample $(\bm{x}',y')\sim P^*_{\bm{X}Y}$, rather than removed \cite{kuzbnorskij2018data}.
We will clarify later that this distinction does not fundamentally alter the relationship between $V_{\mathrm{in}, \mathcal{S}}^{ \mathsf{d}}$ and $\beta_n$.

Similarly, the quantity evaluated by $V_{\mathrm{in}, \mathcal{A}}^{ \mathsf{d}}$ relates to the parameter $\rho \in \mathbb{R}_+$ in the same theoretical framework \cite{elisseeff2005stability}, which measures the fluctuation in the loss of the learned hypothesis due to the randomness of $\mathcal{A}$.
In this theory, the random component of $\mathcal{A}$ is represented as a random variable $\bm{r} = (r_1,\dots,r_{B}) \in \mathbb{R}^{B}$ ($B \in \mathbb{N}_+$), where each element is independently and identically distributed according to a distribution $P_r$.
Let $h_{T,(r_1,\dots,r_B)}$ denote the hypothesis learned from the dataset $T$ and the random component $\bm{r}$.
The parameter $\rho$ bounds the maximum fluctuation in the loss value when one component of $\bm{r}$ is replaced by a new independent draw $r'_b$, as follows:
\begin{align}
    \sup_{r_1,\dots,r_B, r'_b} \sup_{(\bm{x},y)} \Big| l(h_{T, (r_1,\dots,r_B)}, (\bm{x},y)) - l(h_{T, (r_1,\dots,r_{b-1}, r'_b, r_{b+1},\dots,r_B)}, (\bm{x},y))  \Big| \le \rho.
\label{eq:uniform_stability_random}
\end{align}
Using $\beta_n$ and $\rho$, the generalization error is upper-bounded as shown below:
\begin{theorem}[Theorem 15 in \cite{elisseeff2005stability}]
    Assume that $\mathcal{A}$ has uniform stability $\beta_n$ with respect to a loss function $l$ satisfying $0\le l(h,(\bm{x},y))\le L$, and that there exists a $\rho$ satisfying Eq. \eqref{eq:uniform_stability_random} for any $b$.
    Here, $L \in \mathbb{R}_+$ represents the maximum value of $l$.
    Then, for any $n \ge 1$ and $\delta \in (0,1)$, the following holds with probability at least $1-\delta$:
    \begin{align}
        \mathrm{average~ test~ error} - \mathrm{average~ train~ error} \le 2\beta_n +\bigg(\frac{L+4n \beta_n}{\sqrt{2n}} + \sqrt{2B}\rho \bigg)\sqrt{\log 2 /\delta}.
    \end{align}
\end{theorem}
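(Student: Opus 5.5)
The plan is to prove this via McDiarmid's bounded-differences inequality, applied twice: once over the training sample and once over the algorithm's internal randomness. I then combine the two tail bounds by a union bound, each at level $\delta/2$. This split is the reason $\log(2/\delta)$ appears.

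Let $t=\{z_i\}_{i=1}^n$ with $z_i=(\bm{x}_i,y_i)$, and let $\bm{r}=(r_1,\dots,r_B)$. Define the generalization gap
\begin{align}
\Phi(t,\bm{r}) := \mathbb{E}_{Z\sim P^*_{\bm{X}Y}}[l(h_{t,\bm{r}},Z)] - \frac{1}{n}\sum_{i=1}^n l(h_{t,\bm{r}},z_i).
\end{align}
Let $\bar\Phi(t):=\mathbb{E}_{\bm{r}}[\Phi(t,\bm{r})]$. We then have
\begin{align}
\Phi(T,\bm{r}) = \big(\Phi(T,\bm{r})-\bar\Phi(T)\big) + \big(\bar\Phi(T)-\mathbb{E}_T\bar\Phi(T)\big) + \mathbb{E}_T\bar\Phi(T).
\end{align}
I bound the three pieces in the following order.

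\textbf{Step 1 (mean term).} I will show $\mathbb{E}_T\bar\Phi(T)\le 2\beta_n$ by the standard renaming argument. Replace $z_i$ by an independent copy $z'$. Exchangeability gives
\begin{align}
\mathbb{E}\big[l(h_{t},z')\big]-\mathbb{E}\big[l(h_{t},z_i)\big] = \mathbb{E}\big[l(h_{t^{i}},z_i)-l(h_{t},z_i)\big],
\end{align}
where each loss is averaged over $\bm{r}$. Next I route through the dataset $t^{\setminus i}$, in which sample $i$ is removed. By the triangle inequality and Definition~\ref{def:uniform_stability_data}, this difference is at most $2\beta_n$. Averaging over $i$ finishes the step.

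\textbf{Step 2 (sample fluctuation).} I will show that $\bar\Phi$ has bounded differences $c=L/n+4\beta_n$ in each coordinate $z_i$. Replacing one sample changes the expected test loss by at most $2\beta_n$, again via the removed set and the triangle inequality. It changes the empirical average by at most $2\beta_n$ from the other $n-1$ terms, plus $L/n$ from the replaced term itself. McDiarmid then gives, with probability at least $1-\delta/2$,
\begin{align}
\bar\Phi(T)-\mathbb{E}\bar\Phi \le \sqrt{\tfrac{n c^2}{2}\log(2/\delta)} = \tfrac{L+4n\beta_n}{\sqrt{2n}}\sqrt{\log(2/\delta)}.
\end{align}

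\textbf{Step 3 (algorithmic randomness).} Condition on $T$. Changing a single $r_b$ moves every loss value by at most $\rho$, by Eq.~\eqref{eq:uniform_stability_random}. Hence both the test-loss term and the empirical term of $\Phi$ move by at most $\rho$, so $\Phi$ has bounded differences $2\rho$ in each $r_b$. Conditional McDiarmid then gives, with probability at least $1-\delta/2$,
\begin{align}
\Phi-\bar\Phi \le \sqrt{\tfrac{B(2\rho)^2}{2}\log(2/\delta)} = \sqrt{2B}\,\rho\sqrt{\log(2/\delta)}.
\end{align}
Integrating this conditional bound over $T$ keeps the same probability.

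A union bound over Steps 2 and 3, added to Step 1, yields the claimed inequality.

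The main obstacle is a mismatch in how stability is defined. $\beta_n$ controls only the loss already averaged over $P^{\mathcal{A}}_{H|T}$, so it cannot be used to bound sample-wise differences of $\Phi(t,\bm{r})$ directly. This is exactly why I separate the two concentration steps and apply the $T$-concentration to $\bar\Phi$ rather than to $\Phi$. The remaining care is the removal-versus-replacement bookkeeping that produces the constants $2\beta_n$ and $4\beta_n$.
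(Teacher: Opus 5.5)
The paper does not prove this statement; it only quotes it from \cite{elisseeff2005stability}. Your argument is correct and is essentially the standard proof from that source: the $2\beta_n$ mean bound via renaming and remove-one stability, McDiarmid over the sample applied to the $\bm{r}$-averaged gap with differences $L/n+4\beta_n$, conditional McDiarmid over $\bm{r}$ with differences $2\rho$, and a union bound at level $\delta/2$ for each. You also rely, correctly, on the usual standing assumptions (i.i.d.\ training samples, $\bm{r}$ independent of $T$, and Eq.~\eqref{eq:uniform_stability_random} holding for every realization of $T$); the last is what justifies your conditional application of McDiarmid, since the paper's rendering of that equation fixes $T$ without taking a supremum over it.
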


Next, we explain the relationship between $V_{\mathrm{in}, \mathcal{S}}^{ \mathsf{d}}$ and $\beta_n$.
Since stability theory assumes that each sample in the training dataset $T$ follows the same distribution $P^*_{\bm{X}Y}$ independently, in this context, we consider the variance under $\mathcal{S}^{\mathrm{iid}}$, denoted by $\mathrm{Var}_{P^{\mathcal{S}^{\mathrm{iid}}}_{T|W=w}} ( \mathbb{E}_{P^{\mathcal{A}}_{H|T}} [ \mathsf{d}(w;H) ])$, with $T \sim P^*_{\bm{X}^nY^n}$.
Based on the main result (Eq. (2.1)) of \cite{steele1986efron} (the Efron-Stein inequality), this variance is upper-bounded as follows:
\begin{align}
    \mathrm{Var}_{P^{\mathcal{S}^{\mathrm{iid}}}_{T|W=w}} ( \mathbb{E}_{P^{\mathcal{A}}_{H|T}} [ \mathsf{d}(w;H) ]) 
    \le \frac{1}{2}\sum^n_{i=1}\mathbb{E}_{P^{\mathcal{S}^{\mathrm{iid}}}_{T|W=w}P^*_{\bm{X}Y}}\Big[\Big(\mathbb{E}_{P^{\mathcal{A}}_{H|T}} [ \mathsf{d}(w;H) ] - \mathbb{E}_{P^{\mathcal{A}}_{H|T^{i}}} [ \mathsf{d}(w;H)] \Big)^2 \Big],
\label{eq:var_3_iid_dist_steele}
\end{align}
where $T^{i}$ is the dataset obtained by replacing the $i$-th sample $(\bm{X}_i,Y_i)$ in $T$ with a new sample $(\bm{X}',Y') \sim P^*_{\bm{X}Y}$.
We redefine uniform stability by replacing $l(H,(\bm{x},y))$ in Definition \ref{def:uniform_stability_data} with the distortion $\mathsf{d}(w;H)$:
\begin{align}
    \forall i \in \{1,\dots,n\}, \sup_{t,w}\Big|\mathbb{E}_{P^{\mathcal{A}}_{H|T=t}}[\mathsf{d}(w;H)] - \mathbb{E}_{P^{\mathcal{A}}_{H|T=t^{\setminus i}}}[\mathsf{d}(w;H)]  \Big| \le \beta_n.
\end{align}
By the triangle inequality, the following holds for any $t \in \mathcal{X}^n\times\mathcal{Y}^n$ and $(\bm{x}',y') \in \mathcal{X}\times\mathcal{Y}$:
\begin{align}
    &\Big|\mathbb{E}_{P^{\mathcal{A}}_{H|T}} [ \mathsf{d}(w;H) ] - \mathbb{E}_{P^{\mathcal{A}}_{H|T^{i}}} [ \mathsf{d}(w;H)]\Big| \notag \\
    &\le \Big|\mathbb{E}_{P^{\mathcal{A}}_{H|T}} [ \mathsf{d}(w;H) ] - \mathbb{E}_{P^{\mathcal{A}}_{H|T^{\setminus i}}} [ \mathsf{d}(w;H)]\Big| 
    + \Big|\mathbb{E}_{P^{\mathcal{A}}_{H|T^{i}}} [ \mathsf{d}(w;H) ] - \mathbb{E}_{P^{\mathcal{A}}_{H|T^{\setminus i}}} [ \mathsf{d}(w;H)] \Big| \notag \\
    &\le 2\beta_n.
\label{eq:stability_triangle_transformation}
\end{align}
Applying this inequality to Eq. \eqref{eq:var_3_iid_dist_steele}, we obtain an upper bound on $\mathrm{Var}_{P^{\mathcal{S}^{\mathrm{iid}}}_{T|W=w}} ( \mathbb{E}_{P^{\mathcal{A}}_{H|T}} [ \mathsf{d}(w;H) ])$, thereby demonstrating its relationship with $\beta_n$:
\begin{align}
    \mathrm{Var}_{P^{\mathcal{S}^{\mathrm{iid}}}_{T|W=w}} ( \mathbb{E}_{P^{\mathcal{A}}_{H|T}} [ \mathsf{d}(w;H) ])
    \le 2n(\beta_n)^2.
\end{align}
Note that if $\beta_n$ is defined based on the difference between $T$ and $T^{i}$ (replace-one stability) rather than $T$ and $T^{\setminus i}$ (remove-one stability), the left-hand side of \eqref{eq:stability_triangle_transformation} is directly bounded by $\beta_n$.
In that case, we have $\mathrm{Var}_{P^{\mathcal{S}^{\mathrm{iid}}}_{T|W=w}} ( \mathbb{E}_{P^{\mathcal{A}}_{H|T}} [ \mathsf{d}(w;H) ]) \le \frac{n}{2}\beta_n^2$.
The distinction between $V_{\mathrm{in}, \mathcal{S}}^{ \mathsf{d}}$ in our framework and $\mathrm{Var}_{P^{\mathcal{S}^{\mathrm{iid}}}_{T|W=w}} ( \mathbb{E}_{P^{\mathcal{A}}_{H|T}} [ \mathsf{d}(w;H) ])$, evaluated by $\beta_n$, lies solely in the process of obtaining $T$: our framework employs $\mathcal{S}^*$, whereas stability theory assumes $\mathcal{S}^{\mathrm{iid}}$.
Consequently, $V_{\mathrm{in}, \mathcal{S}}^{ \mathsf{d}}$ can be viewed as the counterpart under $\mathcal{S}^*$ to the quantity $\mathrm{Var}_{P^{\mathcal{S}^{\mathrm{iid}}}_{T|W=w}} ( \mathbb{E}_{P^{\mathcal{A}}_{H|T}} [ \mathsf{d}(w;H) ])$ essentially measured by $\beta_n$.

Finally, since the optimal sampling strategy $\mathcal{S}^*$ can freely select samples from the sub-distribution, it is expected to generate a training set $T$ with smaller variance compared to i.i.d. sampling $\mathcal{S}^{\mathrm{iid}}$.
Based on this intuition, assuming $\mathrm{Var}_{P^{\mathcal{S}^{*}}_{T|W=w}}( \mathbb{E}_{P^{\mathcal{A}}_{H|T}} [ \mathsf{d}(w;H) ]) \le \mathrm{Var}_{P^{\mathcal{S}^{\mathrm{iid}}}_{T|W=w}}( \mathbb{E}_{P^{\mathcal{A}}_{H|T}} [ \mathsf{d}(w;H) ])$, we obtain:
\begin{align}
    V_{\mathrm{in}, \mathcal{S}}^{ \mathsf{d}}=\mathrm{Var}_{P^{\mathcal{S}^{*}}_{T|W=w}}( \mathbb{E}_{P^{\mathcal{A}}_{H|T}} [ \mathsf{d}(w;H) ]) \le \mathrm{Var}_{P^{\mathcal{S}^{\mathrm{iid}}}_{T|W=w}}( \mathbb{E}_{P^{\mathcal{A}}_{H|T}} [ \mathsf{d}(w;H) ]) \le 2n(\beta_n)^2.
\end{align}
The assumption that $\mathcal{S}^*$ yields a lower variance for $T$ is intuitively sound.
Thus, under this natural assumption, we have demonstrated a direct relationship between $V_{\mathrm{in}, \mathcal{S}}^{ \mathsf{d}}$ and $\beta_n$.

Uniform stability $\beta_n$ is widely used in theoretical analyses of algorithms such as SGD \cite{hardt2016train,kuzbnorskij2018data,bassily2020stability,lei2021stability,lei2023stability} and SGLD \cite{mou2018generalization}.
These studies analyze the generalization performance of SGD by deriving upper bounds on $\beta_n$.
Consequently, the results from these analyses could potentially serve as upper bounds for $V_{\mathrm{in}, \mathcal{S}}^{ \mathsf{d}}$.

Next, we elucidate the relationship between $V_{\mathrm{in}, \mathcal{A}}^{ \mathsf{d}}$ and $\rho$.
Analogous to the case of $V_{\mathrm{in}, \mathcal{S}}^{ \mathsf{d}}$, we consider the variance under i.i.d. sampling, $\mathbb{E}_{P^{\mathcal{S}^{\mathrm{iid}}}_{T|W=w}} [ \mathrm{Var}_{P^{\mathcal{A}}_{H|T}} ( \mathsf{d}(w;H) )]$, with $T \sim P^*_{\bm{X}^nY^n}$.
According to the main result (Eq. (2.1)) of \cite{steele1986efron}, this variance is upper-bounded as:
\begin{align}
\begin{split}
    &\mathbb{E}_{P^{\mathcal{S}^{\mathrm{iid}}}_{T|W=w}} [ \mathrm{Var}_{P^{\mathcal{A}}_{H|T}} ( \mathsf{d}(w;H) )] \\
    &\le \frac{1}{2}\sum^n_{i=1}\mathbb{E}_{P^{\mathcal{S}^{\mathrm{iid}}}_{T|W=w}}\Big[
    \mathbb{E}_{(P_r)^{B+1}}
    \Big[\Big( l(h_{T, (r_1,\dots,r_B)}, (\bm{x},y)) - l(h_{T, (r_1,\dots,r_{b-1}, r'_b, r_{b+1},\dots,r_B)}, (\bm{x},y)) \Big)^2 \Big] \Big].
\end{split}
\label{eq:var_4_iid_dist_steele}
\end{align}
Applying \eqref{eq:uniform_stability_random} to this inequality yields:
\begin{align}
    \mathbb{E}_{P^{\mathcal{S}^{\mathrm{iid}}}_{T|W=w}} [ \mathrm{Var}_{P^{\mathcal{A}}_{H|T}} ( \mathsf{d}(w;H) )]
    \le \frac{n}{2}\rho^2.
\end{align}

The distinction between $\mathbb{E}_{P^{\mathcal{S}^{\mathrm{iid}}}_{T|W=w}} [ \mathrm{Var}_{P^{\mathcal{A}}_{H|T}} ( \mathsf{d}(w;H) )]$, measured by $\rho$ and $V_{\mathrm{in}, \mathcal{A}}^{ \mathsf{d}}$ lies solely in the process of obtaining $T$: stability theory assumes $\mathcal{S}^{\mathrm{iid}}$, whereas our framework employs $\mathcal{S}^*$.
Therefore, $V_{\mathrm{in}, \mathcal{A}}^{ \mathsf{d}}$ can be viewed as the counterpart under $\mathcal{S}^*$ to the quantity essentially measured by $\rho$.

Furthermore, an attempt to relate these quantities via an inequality yields the following.
Similar to the reasoning for $V_{\mathrm{in}, \mathcal{S}}^{ \mathsf{d}}$, based on the intuition that $\mathcal{S}^*$ produces a lower variance in $T$ than $\mathcal{S}^{\mathrm{iid}}$, we assume $ \mathbb{E}_{P^{\mathcal{S}^{*}}_{T|W=w}} [ \mathrm{Var}_{P^{\mathcal{A}}_{H|T}} ( \mathsf{d}(w;H) )] \le  \mathbb{E}_{P^{\mathcal{S}^{\mathrm{iid}}}_{T|W=w}} [ \mathrm{Var}_{P^{\mathcal{A}}_{H|T}} ( \mathsf{d}(w;H) )]$. This leads to:
\begin{align}
    V_{\mathrm{in}, \mathcal{A}}^{ \mathsf{d}} = \mathbb{E}_{P^{\mathcal{S}^{*}}_{T|W=w}} [ \mathrm{Var}_{P^{\mathcal{A}}_{H|T}} ( \mathsf{d}(w;H) )] \le \frac{n}{2}\rho^2.
\end{align}
Therefore, under the assumption, we have established a connection between $V_{\mathrm{in}, \mathcal{A}}^{ \mathsf{d}}$ and $\rho$ via an inequality.

\section{Proof of Section~\ref{sec:lc_analysis}}
\label{apdx:sec:lc_proof}

\subsection{Proof of Theorem~\ref{thm:rate_distortion_theorem}}
\label{apdx:proof:thm:rate_distortion_theorem}

First, from Lemma 10.4.1 in \cite{cover1999elements}, $\mathbb{R}_W(d,\mathcal{A})$ is non-increasing and convex in $d$.

To establish \eqref{eq:lc_R_W}, we prove the following two points:

(i) For any finite $d>0$, $R(d,\mathcal{A}) \ge \mathbb{R}_W(d, \mathcal{A})$, 

(ii) $\limsup_{k \rightarrow \infty}\widebar{R}(k, d,\mathcal{A}) \le \mathbb{R}_W(d, \mathcal{A})$.

Once (ii) is established, Assumption (\Two) implies that $R(d, \mathcal{A}) \le \mathbb{R}_W(d, \mathcal{A})$. 
Combining this with the result of (i) yields \eqref{eq:lc_R_W}.

\textbf{Proof of (i):}

This proof follows the converse of Theorem 10.2.1 in \cite{cover1999elements}.
Consider an arbitrary pair of $\langle k,n,d,\mathcal{A} \rangle$-sampling $(\mathcal{S}, \mathcal{A})$.
In this case, the total number of possible training datasets output by $\mathcal{S}$ is $2^{bn}$, and the rate $R$ is given by $R = \frac{bn}{k}$.
For this rate, the following holds:
\begin{align}
    kR =bn
    &\ge \mathbb{H}(T) \notag \\
    &\ge \mathbb{H}(T) - \mathbb{H}(T|W^k) \notag \\
    &= \mathbb{I}(W^k; T) \notag \\
    &= \mathbb{H}(W^k) - \mathbb{H}(W^k | T) \notag \\
    &= \sum^k_{i=1}\mathbb{H}(W_i) - \sum^k_{i=1} \mathbb{H}(W_i | T, W_{i-1},\dots, W_1) ~~ \because W^k ~\text{is i.i.d.} \notag \\
    &\ge \sum^k_{i=1}\mathbb{H}(W_i) - \sum^k_{i=1} \mathbb{H}(W_i | T) \notag \\
    &= \sum^k_{i=1}\mathbb{I}(W_i; T) \notag \\
    &\ge \sum^k_{i=1} \mathbb{R}_W(d_i, \mathcal{A}) ~~~ \text{where~} d_i=\mathbb{E}_{P^{\mathcal{A}}_{H|T}P^{\mathcal{S}}_{T|W}P_W}[d(W_i;H)] \notag \\
    &= k \bigg(\frac{1}{k} \sum^k_{i=1} \mathbb{R}_W(d_i, \mathcal{A}) \bigg) \notag \\
    &\ge k \mathbb{R}_W \bigg( \frac{1}{k}\sum^k_{i=1} d_i, \mathcal{A} \bigg) ~~~ \because~ \text{Jensen's inequality} \notag \\
    &\ge k \mathbb{R}_W(d, \mathcal{A}).
\end{align}
In the first inequality, we utilized the fact that the maximum value of the entropy $\mathbb{H}(T)$ is $bn$, as the total number of possible sequences $T$ is $bn$.
In addition, the fourth inequality is obtained by applying the definition of $\mathbb{R}_W$, which characterizes $\mathbb{R}_W$ as the infimum of $\mathbb{I}(W;T)$ for any sampling strategy.
Therefore, the rate $R$ for any $\langle k,n,d, \mathcal{A} \rangle$-sampling is lower-bounded by $\mathbb{R}_W(d, \mathcal{A})$. 
Consequently, by taking the limit as $k \rightarrow \infty$, we establish (i).

We provide an intuitive explanation for the validity of the fourth inequality in the above derivation.
First, $\mathbb{R}_W(d_i, \mathcal{A})$ is defined as the minimum value of $\mathbb{I}(W; T)$ achievable across all possible sampling strategies.
Given a sampling strategy $\mathcal{S}$ that constructs $T$ from $W^k$, one can always derive a localized sampling strategy $\mathcal{S}_i$ by isolating the process of acquiring data specifically from $W_i$.
Since $\mathcal{S}_i$ is necessarily contained within the set of all possible sampling strategies, $\mathbb{R}_W(d_i, \mathcal{A})$ is inherently less than or equal to the mutual information $\mathbb{I}(W; T)$ achieved by $\mathcal{S}_i$.
It should be noted that even if $T$ does not directly contain samples from $W_i$, the information $\mathbb{I}(W_i; T)$ does not vanish; this is because samples obtained from other variables $W_j$ can serve to compensate for the prediction of $W_i$.
An alternative perspective is as follows.
The target distortion $d_i$ is defined as the average distortion realized when the sampling strategy $\mathcal{S}$ (which assumes $T=\mathcal{S}(W^k)$) is employed within the setting $\{H_i = \mathcal{A}(\mathcal{S}(W_i))\}$.
Under such a configuration, $\mathcal{S}$ is not necessarily optimal. Consequently, $d_i$ can be viewed as being larger than the distortion $d$ that would satisfy $\mathbb{I}(W_i; T) = \mathbb{R}_W(d, \mathcal{A})$.

We present two lemmas from \cite{cover1999elements} used in the proof of (ii).

\begin{lemma}[Lemma 10.5.2 in \cite{cover1999elements}]

Let $p(x,\hat{x})$ be a joint probability distribution on the space $\mathcal{X} \times \widehat{\mathcal{X}}$.
For any $\epsilon >0$, suppose there exist pairs of sequences $(x^k, \hat{x}^k)$ satisfying the following conditions:
\begin{align}
    \textstyle | -\frac{1}{k}\log p(x^k) - \mathbb{H}(X) | &< \epsilon, \notag \\
    \textstyle | -\frac{1}{k}\log p(\hat{x}^k) - \mathbb{H}(\widehat{X}) | &< \epsilon, \notag \\
    \textstyle | -\frac{1}{k}\log p(x^k, \hat{x}^k) - \mathbb{H}(X,\widehat{X}) | &< \epsilon. \notag \\
\end{align}
For any such pair $(x^k, \hat{x}^k)$, the following holds:
\begin{align}
    p(\hat{x}^k) \ge p(\hat{x}^k | x^k) 2^{-k (\mathbb{I}(X;\widehat{X}) + 3\epsilon)}.
\end{align}
    
\end{lemma}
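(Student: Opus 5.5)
The plan is the standard joint-typicality argument behind this lemma (Cover and Thomas, Lemma 10.5.2). We express $p(\hat{x}^k|x^k)$ as a ratio of the joint and marginal probabilities, then bound numerator and denominator using the three typicality inequalities. The statement implicitly assumes the i.i.d. product structure $p(x^k,\hat{x}^k)=\prod_{i=1}^k p(x_i,\hat{x}_i)$, with marginals $p(x^k)$ and $p(\hat{x}^k)$ also of product form. Throughout, logarithms are base $2$, consistent with the $2^{-k(\cdot)}$ in the conclusion.

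\textbf{Step 1.} Write
\begin{align}
    p(\hat{x}^k|x^k) = \frac{p(x^k,\hat{x}^k)}{p(x^k)} = p(\hat{x}^k)\,\frac{p(x^k,\hat{x}^k)}{p(x^k)\,p(\hat{x}^k)}. \notag
\end{align}
This is legitimate because each of the three probabilities is strictly positive; the typicality conditions force each $-\frac{1}{k}\log p(\cdot)$ to be finite.

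\textbf{Step 2.} Translate the three conditions into exponential bounds:
\begin{align}
    p(x^k,\hat{x}^k) &\le 2^{-k(\mathbb{H}(X,\widehat{X})-\epsilon)}, \notag\\
    p(x^k) &\ge 2^{-k(\mathbb{H}(X)+\epsilon)}, \notag\\
    p(\hat{x}^k) &\ge 2^{-k(\mathbb{H}(\widehat{X})+\epsilon)}. \notag
\end{align}
Substituting these, the ratio in Step 1 is at most
\begin{align}
    2^{k(\mathbb{H}(X)+\mathbb{H}(\widehat{X})-\mathbb{H}(X,\widehat{X})+3\epsilon)} = 2^{k(\mathbb{I}(X;\widehat{X})+3\epsilon)}, \notag
\end{align}
using the identity $\mathbb{I}(X;\widehat{X})=\mathbb{H}(X)+\mathbb{H}(\widehat{X})-\mathbb{H}(X,\widehat{X})$.

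\textbf{Step 3.} Combining Steps 1 and 2 gives
\begin{align}
    p(\hat{x}^k|x^k)\le p(\hat{x}^k)\,2^{k(\mathbb{I}(X;\widehat{X})+3\epsilon)}. \notag
\end{align}
Rearranging yields the claimed inequality
\begin{align}
    p(\hat{x}^k) \ge p(\hat{x}^k|x^k)\,2^{-k(\mathbb{I}(X;\widehat{X})+3\epsilon)}. \notag
\end{align}

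The only real obstacle is bookkeeping. Each $\epsilon$ must be taken in the right direction: the upper bound on the joint probability and the lower bounds on the two marginals are exactly what is needed, and they contribute one $\epsilon$ each, giving $3\epsilon$. One must also make explicit that the product form is what allows $\mathbb{H}(\cdot)$ and $\mathbb{I}(\cdot;\cdot)$ to be single-letter quantities.
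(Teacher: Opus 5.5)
Your argument is correct and is essentially the proof of Lemma 10.5.2 in Cover and Thomas, which the paper cites without reproducing. That proof writes $p(\hat{x}^k|x^k)=p(\hat{x}^k)\,p(x^k,\hat{x}^k)/\bigl(p(x^k)p(\hat{x}^k)\bigr)$, bounds the joint from above and the two marginals from below with one $\epsilon$ each, and uses the product form only to ensure $p(x^k)$ and $p(\hat{x}^k)$ are the marginals of $p(x^k,\hat{x}^k)$.
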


\begin{lemma}[Lemma 10.5.3 in \cite{cover1999elements}]
    For $0 \le x$, $y \le 1$, $k >0$,
    \begin{align}
        (1 -xy)^k \le 1 - x + e^{-yk}.
    \end{align}
\end{lemma}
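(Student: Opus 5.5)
The plan is to split on whether $k\ge 1$ or $0<k<1$. In each case I will bound $(1-xy)^k$ by something linear in $x$, and then use the elementary inequality $1-u\le e^{-u}$ for all real $u$ to produce the exponential term.

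\textbf{Case $k\ge 1$.} For fixed $y\in[0,1]$, the map $x\mapsto f(x):=(1-xy)^k$ on $[0,1]$ is convex. This holds because $1-xy\ge 0$ is affine in $x$ and $u\mapsto u^k$ is convex and nondecreasing on $[0,\infty)$ when $k\ge 1$. Convexity gives the chord bound
\begin{align}
(1-xy)^k \le (1-x)f(0)+x f(1) = 1-x + x(1-y)^k \le 1-x+(1-y)^k .\notag
\end{align}
Since $0\le 1-y\le e^{-y}$, raising to the power $k>0$ gives $(1-y)^k\le e^{-yk}$, which completes this case.

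\textbf{Case $0<k<1$.} Here convexity fails, so I will use Bernoulli's inequality in its concave form, $(1+u)^k\le 1+ku$ for $u\ge -1$ and $k\in(0,1)$. With $u=-xy$ this gives $(1-xy)^k\le 1-kxy$. It then suffices to show $x(1-ky)\le e^{-yk}$. Since $0\le x\le 1$, we have $x(1-ky)\le \max\{0,1-ky\}$. Both $0$ and $1-ky$ are at most $e^{-ky}$, the second again by $1-u\le e^{-u}$.

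\textbf{Main obstacle.} The only delicate point is that the statement allows any real $k>0$ and not just integer blocklengths, so the chord argument cannot cover all $k$. The case split handles this. In our application $k$ is the number of sampling opportunities, which is a positive integer, so the first case alone would suffice there. The remaining steps are routine.
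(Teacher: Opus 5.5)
Your proof is correct. The paper gives no proof of its own for this lemma; it imports it from \cite{cover1999elements}. The textbook argument there is exactly your case $k\ge 1$: convexity of $x\mapsto(1-xy)^k$, the chord between $x=0$ and $x=1$, and then $(1-y)^k\le e^{-yk}$ from $1-y\le e^{-y}$.

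What you add is the case $0<k<1$. The textbook asserts convexity ``by differentiation'' for all $n>0$. However, $\frac{\mathrm{d}^2}{\mathrm{d}x^2}(1-xy)^k = k(k-1)y^2(1-xy)^{k-2}\le 0$ when $k<1$, so its chord argument only covers $k\ge 1$ (implicitly, integer blocklengths). Your concave Bernoulli bound $(1-xy)^k\le 1-kxy$, together with $x(1-ky)\le\max\{0,1-ky\}\le e^{-ky}$, closes that gap. It makes the statement as literally written, for any real $k>0$, fully justified.

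One small correction to your closing remark. In the paper's only use of the lemma, in the proof of part (ii) of Theorem~\ref{thm:rate_distortion_theorem}, the exponent is the number $M$ of candidate training datasets, not the number $k$ of sampling opportunities. Since $M\ge 1$ there, your case $k\ge 1$ still suffices, because it works for every real exponent $\ge 1$, not just integers.
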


\textbf{Proof of (ii):}

This proof is based on the achievability of Theorem 10.2.1 in \cite{cover1999elements}.
Recall that $\widebar{R}(d,\mathcal{A})$ is the minimum rate as $k \rightarrow \infty$ when hypotheses are constructed via the scheme $\{H_i = \mathcal{A}(\mathcal{S}(W_i))\}_{i=1}^k$.
We now derive an upper bound for this rate.
First, let $\widebar{\mathcal{S}}^*$ be a sampling strategy that achieves the infimum in the definition of $\mathbb{R}_W(d, \mathcal{A})$. 
We define $P^{\widebar{\mathcal{S}}^*,\mathcal{A}}_H(h)$ as the marginal distribution of the hypothesis when employing $\widebar{\mathcal{S}}^*$ and $\mathcal{A}$.

For any $\delta > 0$, we set the target rate to $\widebar{R} = \mathbb{R}_W(d,\mathcal{A}) + \delta$ and construct $M = 2^{k\widebar{R}}$ training datasets using $\widebar{\mathcal{S}}^*$.
Specifically, we independently generate $M$ sequences of datasets $T(m) = (T_1(m), \dots, T_k(m))$ for $m = 1, \dots, M$, according to the marginal distribution $P^{\widebar{\mathcal{S}}^*}_T$ of a training dataset under $\widebar{\mathcal{S}}^*$.
We denote the resulting collection as $\mathcal{C}_T = \{T(1), \dots, T(M)\}$.

Following the approach in \cite{cover1999elements}, we define the distortion typical set $A^{(k)}_{\mathsf{d},\epsilon}$ for any $\epsilon > 0$. 
The set $A^{(k)}_{\mathsf{d},\epsilon}$ is the collection of sequence pairs $(w^k, t^k)$ satisfying the following:
\begin{align}
    \bigg|-\frac{1}{k}\log \prod^k_{i=1} P_W(w_i) - \mathbb{H}(W) \bigg| &< \epsilon, \notag \\
    \bigg|-\frac{1}{k}\log \prod^k_{i=1} P^{\widebar{\mathcal{S}}^*}_T(t_i) - \mathbb{H}(T) \bigg| &< \epsilon, \notag \\
    \bigg|-\frac{1}{k}\log \prod^k_{i=1} P^{\widebar{\mathcal{S}}^*}_{T|W}P_W(w_i,t_i) - \mathbb{H}(W, T) \bigg| &< \epsilon, \notag \\
    \bigg|\frac{1}{k}\sum^k_{i=1}\mathbb{E}_{P^{\mathrm{A}}_{H|T=t_i}}[\mathsf{d}(w_i, H)] - \mathbb{E}_{P^{\mathrm{A}}_{H|T} P^{\widebar{\mathcal{S}}^*}_{T|W}}[\mathsf{d}(W, H)]  \bigg| &\le \epsilon.
\end{align}
Next, for a given $W^k$, we consider a sampling strategy $\mathcal{S}$ that searches for an index $m^*$ such that $(W^k, T^k(m^*)) \in A^{(k)}_{\mathsf{d},\epsilon}$ among $\mathcal{C}_T$.
If such an index is found, the strategy outputs $T^k(m^*)$; otherwise, it outputs $T^k(1)$. 
The probability of error $P_e$, defined as the probability that no such $m^*$ exists (i.e., none of the $M$ datasets fall within $A^{(k)}_{\mathsf{d},\epsilon}$), is expressed as follows:
\begin{align}
    P_e = \sum_{w^k \in \mathcal{W}^k} P(w^k) \bigg\{ 1 - \sum_{t^k}P^{\widebar{\mathcal{S}}^*}_{T}(t^k) \mathbbm{1}_{[(w^k, t^k) \in A^{(k)}_{\mathsf{d},\epsilon}]}  \bigg\}^M. 
\end{align}
By Lemma 10.5.2 of \cite{cover1999elements}, the inequality $P^{\widebar{\mathcal{S}}^*}_{T}(t^k) \ge P^{\widebar{\mathcal{S}}^*}_{T|W}(t^k|w^k)2^{-k (\mathbb{I}(W;T^{\widebar{\mathcal{S}}^*}) + 3\epsilon)}$ holds.
Here, the notation $T^{\widebar{\mathcal{S}}^*}$ is used to explicitly indicate that $T$ was obtained via $\widebar{\mathcal{S}}^*$.
Furthermore, from Lemma 10.5.3 of \cite{cover1999elements}, for any $0 \le a_1, a_2 \le 1$ and $a_3 > 0$, we have $(1-a_1a_2)^{a_3} \le 1 - a_1 + e^{-a_2 a_3}$. 
Applying these results, we obtain:
\begin{align}
    P_e 
    &\le \sum_{w^k \in \mathcal{W}^k} P(w^k) \bigg\{ 1 - \sum_{t^k}P^{\widebar{\mathcal{S}}^*}_{T|W}(t^k|w^k) 2^{-k \mathbb{I}(W;T^{\widebar{\mathcal{S}}^*}) + 3\epsilon} \mathbbm{1}_{[(w^k, t^k) \in A^{(k)}_{\mathsf{d},\epsilon}]}  \bigg\}^M \notag \\
    &\le \sum_{w^k \in \mathcal{W}^k} P(w^k) \bigg\{ 1 - \sum_{t^k}P^{\widebar{\mathcal{S}}^*}_{T|W}(t^k|w^k)\mathbbm{1}_{[(w^k, t^k) \in A^{(k)}_{\mathsf{d},\epsilon}]} + e^{-2^{-k \mathbb{I}(W;T^{\widebar{\mathcal{S}}^*}) + 3\epsilon}M} \bigg\} \notag \\
    &= \bigg\{ 1 - \sum_{w^k \in \mathcal{W}^k}\sum_{t^k}P^{\widebar{\mathcal{S}}^*}_{T|W}(t^k|w^k)P(w^k)\mathbbm{1}_{[(w^k, t^k) \in A^{(k)}_{\mathsf{d},\epsilon}]} \bigg\} + e^{-2^{-k \mathbb{I}(W;T^{\widebar{\mathcal{S}}^*}) + 3\epsilon}2^{k\widebar{R}}} \notag \\
    &= \bigg\{ 1 - \sum_{w^k \in \mathcal{W}^k}\sum_{t^k}P^{\widebar{\mathcal{S}}^*}_{T|W}(t^k|w^k)P(w^k)\mathbbm{1}_{[(w^k, t^k) \in A^{(k)}_{\mathsf{d},\epsilon}]} \bigg\} + e^{-2^{k( \mathbb{R}_W(d,\mathcal{A}) - \mathbb{I}(W;T^{\widebar{\mathcal{S}}^*})) +k (\delta - 3\epsilon)}} \notag \\
    &= \bigg\{ 1 - \sum_{w^k \in \mathcal{W}^k}\sum_{t^k}P^{\widebar{\mathcal{S}}^*}_{T|W}(t^k|w^k)P(w^k)\mathbbm{1}_{[(w^k, t^k) \in A^{(k)}_{\mathsf{d},\epsilon}]} \bigg\} + e^{-2^{k (\delta - 3\epsilon)}}. 
\end{align}
According to Lemma 10.5.1 of \cite{cover1999elements}, the first term on the right-hand side, $1 - \sum_{w^k \in \mathcal{W}^k}\sum_{t^k}P^{\widebar{\mathcal{S}}^*}_{T|W}(t^k|w^k)P(w^k)\mathbbm{1}_{[(w^k, t^k) \in A^{(k)}_{\mathsf{d},\epsilon}]}$, converges to $0$ as $k \rightarrow \infty$.
Moreover, by choosing $\epsilon < \delta/3$, the second term $e^{-2^{k (\delta - 3\epsilon)}}$ also vanishes as $k \rightarrow \infty$. 
Consequently, $\lim_{k \rightarrow \infty} P_e = 0$, implying that as $k \rightarrow \infty$, the strategy $\mathcal{S}$ can output $T^k(m^*)$ with probability $1$.

Now, consider the case where $\mathcal{S}$ outputs $T^k(m^*)$ for a finite $k$.
Under this condition, the definition of $A^{(k)}_{\mathsf{d},\epsilon}$ implies the following:
\begin{align}
    \frac{1}{k}\sum^k_{i=1}\mathbb{E}_{P^{\mathcal{A}}_{H|T=T_i(m^*)}}[\mathsf{d}(W_i, H)] \le \mathbb{E}_{P^{\mathcal{A}}_{H|T}P^{\widebar{\mathcal{S}}^*}_{T|W}P_W}[\mathsf{d}(W,H)] + \epsilon  \le d + \epsilon. \notag
\end{align}
Given the scheme $\{H_i = \mathcal{A}(\mathcal{S}(W_i))\}_{i=1}^k$, the algorithm $\mathcal{A}$ is executed independently for each $T_i(m^*)$, producing hypotheses $H_i \sim P^{\mathcal{A}}_{H|T=T_i(m^*)}$.
By the law of large numbers, the average distortion $\frac{1}{k}\sum_{i=1}^k \mathsf{d}(W_i; H_i)$ converges to $\mathbb{E}_{P^{\mathcal{A}}_{H|T} P^{\widebar{\mathcal{S}}^*}_{T|W}}[\mathsf{d}(W;H)]$ as $k \rightarrow \infty$. 
Consequently, the inequality $\mathbb{E}_{P^{\mathcal{A}}_{H|T} P^{\widebar{\mathcal{S}}^*}_{T|W}}[\mathsf{d}(W;H)] \le d + \epsilon$ holds in the limit $k \rightarrow \infty$.
Since $\delta$ and $\epsilon$ can be made arbitrarily small, taking the limits $\delta \rightarrow 0$ and $\epsilon \rightarrow 0$ yields $\widebar{R} \le \mathbb{R}_{W}(d,\mathcal{A})$.
This implies the existence of a sampling strategy that achieves an asymptotic rate no greater than $\mathbb{R}_{W}(d,\mathcal{A})$ while maintaining an average distortion within $d$; thus, we obtain $\widebar{R}(d,\mathcal{A}) \le \mathbb{R}_{W}(d, \mathcal{A})$.

\subsection{Proof of Theorem~\ref{thm:eq:ML_lossy_sc_eps_conv_bound}}
\label{apdx:proof:thm:eq:ML_lossy_sc_eps_conv_bound}

\begin{proof}[Proof of Theorem~\ref{thm:eq:ML_lossy_sc_eps_conv_bound}]

This proof follows the methodology used in the proof of Theorem 1 in \cite{kostina2016nonasymptotic}.
Let $P^{\mathcal{A}}_{H|T}$ be a learning algorithm, and let $P^{\mathcal{S}}_{T|W}$ denote a sampling strategy that constitutes an $(n,d,\epsilon,\mathcal{A})$ sampling.
Here, we define the random variable $T$ to take values in $\mathcal{T}_n$, the set of all possible training datasets of size $n$.
We define the set $B_d(w)$ as follows:
\begin{align}
  B_d(w) := \{ h \in \mathcal{H} : \mathsf{d}(w,h) \le d \}.
\end{align}

For any $\gamma \ge 0$, the following holds:

\begin{align}
  &\mathbb{P}[\jmath_W(W,T,H,d,\mathcal{A}) \ge bn + \gamma] \notag \\
  &= \mathbb{P}[\jmath_W(W,T,H,d,\mathcal{A}) \ge bn + \gamma, \mathsf{d}(W;H) > d] \notag \\
  &~~~~~~~~~~+ \mathbb{P}[\jmath_W(W,T,H,d,\mathcal{A}) \ge bn + \gamma, \mathsf{d}(W;H) \le d] \notag \\
  &\le \epsilon + \sum_{w\in\mathcal{W}} P_W(w) \sum_{t \in \mathcal{T}_n} P^{\mathcal{S}}_{T|W}(t|w) \int_{B_d(w)}  \mathbbm{1}_{[e^{bn} \le \exp(\jmath_W(w,t,h,d,\mathcal{A}) - \gamma)]}\mathrm{d}P^{\mathcal{A}}_{H|T}(h|t) \\
  &\le \epsilon + e^{-\gamma} \sum_{w\in\mathcal{W}} P_W(w) \sum_{t \in \mathcal{T}_n} \frac{1}{2^{bn}} \int_{B_d(w)}  e^{\jmath_W(w,t,h,d,\mathcal{A})} \mathrm{d}P^{\mathcal{A}}_{H|T}(h|t)  \label{eq:Mls_ecb_tmp2} \\
  &\le \epsilon + e^{-\gamma} \sum_{w\in\mathcal{W}} P_W(w) \sum_{t \in \mathcal{T}_n} \frac{1}{2^{bn}} \int_{\mathcal{H}} e^{\lambda^*_{\mathcal{A}}(d)d - \lambda^*_{\mathcal{A}}(d)\mathsf{d}(w;h)}  e^{\jmath_W(w,t,h,d,\mathcal{A})} \mathrm{d}P^{\mathcal{A}}_{H|T}(h|t) \label{eq:Mls_ecb_tmp3} \\
  &= \epsilon + e^{-\gamma} \sum_{w\in\mathcal{W}} P_W(w) \sum_{t \in \mathcal{T}_n} \frac{1}{2^{bn}} e^{\iota_{W;T^{\mathcal{S}^*}}(w;t) } \int_{\mathcal{H}}   \mathrm{d}P^{\mathcal{A}}_{H|T}(h|t) \label{eq:Mls_ecb_tmp4} \\
  &= \epsilon + e^{-\gamma}   \sum_{t \in \mathcal{T}_n} \frac{1}{2^{bn}} \int_{\mathcal{H}} \mathrm{d} P^{\mathcal{A}}_{H|T}(h|t) \sum_{w\in\mathcal{W}} P_{W|T^{\mathcal{S}^*}}(w|t) \label{eq:Mls_ecb_tmp5} \\
  &= \epsilon + e^{-\gamma}. \label{eq:Mls_ecb_tmp6}
\end{align}

Here, the first inequality follows from the definition of an $(n,d,\epsilon, \mathcal{A})$-sampling, which satisfies $\mathbb{P}[\mathsf{d}(W;\mathcal{A}(\mathcal{S}(W))) > d] \le \epsilon$.
Equation \eqref{eq:Mls_ecb_tmp2} utilizes the fact that:
\begin{align}
  P^{\mathcal{S}}_{T|W}(t|w) \mathbbm{1}_{[e^{bn} \le \exp(\jmath_W(w,t,h,d,\mathcal{A}) - \gamma)]} 
  \le \frac{e^{-\gamma}}{e^{bn}}e^{\jmath_W(w,t,h,d,\mathcal{A})} 
  \le \frac{e^{-\gamma}}{2^{bn}}e^{\jmath_W(w,t,h,d,\mathcal{A})}.
\end{align}
In Eq. \eqref{eq:Mls_ecb_tmp3}, we applied:
\begin{align}
  \int_{B_d(w)} \mathrm{d}P^{\mathcal{A}}_{H|T}(h|t) 
  = \int_{\mathcal{H}} \mathbbm{1}_{[\mathsf{d}(w;h) \le d]}  \mathrm{d}P^{\mathcal{A}}_{H|T}(h|t)
  \le \int_{\mathcal{H}}  e^{\lambda^*_{\mathcal{A}}(d)d - \lambda^*_{\mathcal{A}}(d)\mathsf{d}(w;h)} \mathrm{d}P^{\mathcal{A}}_{H|T}(h|t).
\end{align}
Equation \eqref{eq:Mls_ecb_tmp4} follows from the definition of $\jmath_W$, while Eq. \eqref{eq:Mls_ecb_tmp5} uses the definition of information density $\iota_{W;T^{\mathcal{S}^*}}(w;t) = \log \frac{\mathrm{d} P_{W|T^{\mathcal{S}^*}}(w|t) }{\mathrm{d} P_W(w)}$.
Finally, Eq. \eqref{eq:Mls_ecb_tmp6} holds because each sample is represented by $b$ bits, implying $|\mathcal{T}_n| = 2^{bn}$.

\end{proof}

\subsection{Proof of Theorem~\ref{thm:ML_lossy_sc_rate_conv_bound}}
\label{apdx:proof:thm:ML_lossy_sc_rate_conv_bound}

The proof of Theorem~\ref{thm:ML_lossy_sc_rate_conv_bound} relies on the following theorem.
\begin{theorem}[Berry-Esseen CLT \cite{erokhin1958CLT,kostina2012fixed}]

Let $k$ be a positive integer, and let $Z_1, \dots, Z_k$ be a sequence of independent random variables.
Then, for any real value $\alpha$, the following holds:
\begin{align}
  \bigg| \mathbb{P}\bigg[ \sum^k_{i=1} Z_i >  k \bigg( \mu_k + \alpha \sqrt{\frac{V_k}{k}} \bigg)  \bigg] - Q(\alpha)  \bigg| \le \frac{B_k}{\sqrt{k}},
\label{eq:berry_essen_clt}
\end{align}
where,
\begin{align}
& \mu_k = \frac{1}{k}\sum^k_{i=1}\mathbb{E}[Z_i], \\
& V_k = \frac{1}{k}\sum^k_{i=1}\mathrm{Var}(Z_i), \\
& A_k = \frac{1}{k}\sum^k_{i=1}\mathbb{E}[|Z_i - \mu_i |^3], \\
& B_k = 6 \frac{A_k}{V^{3/2}_k}.
\end{align}
\label{thm:berry_essen_clt}
\end{theorem}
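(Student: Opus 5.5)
The event in \eqref{eq:berry_essen_clt} can be rewritten in standardized form, which reduces the theorem to the classical Berry--Esseen inequality for independent, non-identically distributed summands. The plan is to prove that inequality by Esseen's smoothing lemma combined with a Taylor estimate on characteristic functions. Implicitly we need $V_k>0$ and finite third moments; otherwise $B_k=\infty$ and the statement is vacuous. Set $X_i := Z_i-\mathbb{E}[Z_i]$, $s_k^2 := \sum_{i}\mathrm{Var}(Z_i)=kV_k$ and $S:=\sum_i X_i/s_k$. Then
\begin{align}
\sum_{i=1}^k Z_i > k\Big(\mu_k+\alpha\sqrt{V_k/k}\Big) \iff S>\alpha, \notag
\end{align}
and the Lyapunov ratio is
\begin{align}
L_k := \frac{\sum_i\mathbb{E}|X_i|^3}{s_k^3} = \frac{kA_k}{(kV_k)^{3/2}} = \frac{A_k}{\sqrt{k}\,V_k^{3/2}}. \notag
\end{align}
So the claim is $\sup_\alpha |\mathbb{P}[S>\alpha]-Q(\alpha)| \le 6L_k$. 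Since the left side never exceeds $1$, the case $6L_k\ge 1$ is trivial, and we may assume $L_k<1/6$.

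\textbf{Step 1 (smoothing).} Apply Esseen's smoothing inequality to the distribution function $F$ of $S$ and to $\Phi$:
\begin{align}
\sup_x|F(x)-\Phi(x)| \le \frac{1}{\pi}\int_{-T}^{T}\frac{|\varphi_S(t)-e^{-t^2/2}|}{|t|}\,\mathrm{d}t + \frac{24}{\pi\sqrt{2\pi}\,T}. \notag
\end{align}
We take $T = 1/(4L_k)$, or a comparable multiple of $1/L_k$.

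\textbf{Step 2 (characteristic function estimate).} We aim to show that for $|t|\le T$,
\begin{align}
|\varphi_S(t)-e^{-t^2/2}| \le C\,L_k|t|^3e^{-t^2/3}. \notag
\end{align}
Write $\varphi_S(t)=\prod_i\varphi_{X_i}(t/s_k)$ and expand each factor by Taylor's theorem with third-order remainder:
\begin{align}
\varphi_{X_i}(u) = 1-\tfrac{\sigma_i^2u^2}{2}+\theta_i\tfrac{|u|^3\mathbb{E}|X_i|^3}{6}, \qquad |\theta_i|\le 1. \notag
\end{align}
Lyapunov's inequality gives $\sigma_i^3\le \mathbb{E}|X_i|^3$, hence $\sigma_i|t|/s_k \le (L_k)^{1/3}|t|$. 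On a range of the form $|t|\le c\,L_k^{-1/3}$ each factor is close to $1$, so we can take logarithms, bound the quadratic remainder of $\log(1+z)$, and sum; using $\sum_i\sigma_i^2=s_k^2$ this yields $\log\varphi_S(t) = -t^2/2 + \vartheta\, C' L_k|t|^3$. Then $|e^{a}-e^{b}|\le |a-b|e^{\max(\Re a,\Re b)}$ gives the target bound on this range. For $c\,L_k^{-1/3}<|t|\le T$ we use the symmetrization trick: $|\varphi_{X_i}(u)|^2$ is the characteristic function of $X_i-X_i'$, which gives $|\varphi_S(t)|\le e^{-t^2/3}$ directly. The same bound holds for $e^{-t^2/2}$, so on this range the integrand is dominated by $2e^{-t^2/3}/|t|$, which in turn is at most a constant times $L_k t^2e^{-t^2/3}$ there.

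\textbf{Step 3 (assembly).} Integrating, the first term of the smoothing inequality is at most $(C/\pi)L_k\int t^2e^{-t^2/3}\,\mathrm{d}t = O(L_k)$, and the second term equals $\frac{96}{\pi\sqrt{2\pi}}L_k\approx 12\,L_k$ at $T=1/(4L_k)$. The numerical constant is the main obstacle: it depends on choosing $T$ and the split point between the two ranges jointly, and on tracking the Taylor and logarithm remainders carefully. This bookkeeping is routine but tedious, and the smoothing term alone is already about $12L_k$ at $T=1/(4L_k)$, so reaching $6$ needs a larger $T$ at the cost of a larger integral term. If the constant $6$ proves hard to reach by this self-contained argument, we will instead appeal to the sharper known bounds for non-identical summands (Esseen, Shevtsova), whose constants are below $1$. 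Those bounds imply the stated inequality, since $\sum_i\mathbb{E}|X_i|^3/s_k^3 = B_k/(6\sqrt{k})$. This is the route taken in \cite{kostina2012fixed}, and it is the one we will rely on for the final constant.
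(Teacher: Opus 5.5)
Your standardization is correct. Assuming $V_k>0$ and reading $\mu_i$ as $\mathbb{E}[Z_i]$, one has $B_k/\sqrt{k}=6\sum_i\mathbb{E}|X_i|^3/s_k^3$, so the statement is exactly the Berry--Esseen inequality for independent non-identical summands with constant $6$. The paper gives no proof of its own and simply cites this classical result (the constant $6$ matches Feller, Vol.~II, Thm.~XVI.5.2; Shevtsova's $0.56$ also suffices), so your final step of invoking it is essentially the paper's approach.

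Note that your self-contained smoothing sketch only delivers some larger absolute constant, so it cannot replace the citation. At $T=1/(4L_k)$ the smoothing term alone is already about $12L_k$. Enlarging $T$ is more than bookkeeping: the symmetrization bound $|\varphi_S(t)|^2\le\exp(-t^2+\tfrac43L_k|t|^3)$ gives no decay beyond $|t|\approx 3/(4L_k)$. Also, to my knowledge the sub-$1$ constants come from later work such as van Beek's and Shevtsova's, not Esseen's.
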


As a preliminary step, we establish the following lemma.

\begin{lemma}
    $Q^{-1}(x)$ is a strictly convex function on the interval $x \in (0, 0.5)$.
\label{lem:Q_inv_convex}
\end{lemma}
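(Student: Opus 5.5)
We show that the second derivative of $Q^{-1}$ is strictly positive on $(0,0.5)$, using the inverse function theorem.

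\textbf{First derivative.} Recall $Q(\alpha)=\int_\alpha^\infty \phi(u)\,\mathrm{d}u$, where $\phi(u)=\frac{1}{\sqrt{2\pi}}e^{-u^2/2}$ is the standard normal density. The map $Q$ is smooth and strictly decreasing, with $Q'(\alpha)=-\phi(\alpha)<0$, so $Q^{-1}$ is smooth on $(0,1)$. Write $\alpha(x)=Q^{-1}(x)$. Differentiating $Q(\alpha(x))=x$ gives
\begin{align}
\alpha'(x) = -\frac{1}{\phi(\alpha(x))}.
\end{align}

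\textbf{Second derivative.} Differentiate once more and use $\phi'(u)=-u\,\phi(u)$:
\begin{align}
\alpha''(x) = \frac{\phi'(\alpha(x))\,\alpha'(x)}{\phi(\alpha(x))^2} = \frac{-\alpha(x)\phi(\alpha(x))\cdot\big(-1/\phi(\alpha(x))\big)}{\phi(\alpha(x))^2} = \frac{\alpha(x)}{\phi(\alpha(x))^2}.
\end{align}

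\textbf{Sign.} Because $Q$ is strictly decreasing with $Q(0)=1/2$, any $x\in(0,0.5)$ satisfies $\alpha(x)=Q^{-1}(x)>0$. The denominator $\phi(\alpha(x))^2$ is always positive, so $\alpha''(x)>0$ on the whole interval. A twice-differentiable function with strictly positive second derivative on an open interval is strictly convex there, which proves the lemma.

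The only step requiring care is the sign of $\phi'$ together with the sign convention for $Q$ as the complementary CDF. The same computation shows that $Q^{-1}$ is concave on $(0.5,1)$. This is consistent with the lemma being applied later only for $\epsilon<1/2$, for example through Jensen-type or tangent-line bounds on $Q^{-1}(\epsilon+\text{small})$ in the Berry--Esseen step.
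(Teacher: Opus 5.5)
Your proof is correct and follows essentially the same route as the paper. Both differentiate the inverse twice to get $(Q^{-1})''(x) = Q^{-1}(x)/\phi(Q^{-1}(x))^2$, then use $Q(0)=1/2$ and monotonicity of $Q$ to conclude $Q^{-1}(x)>0$ on $(0,0.5)$.
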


\begin{proof}[Proof of Lemma~\ref{lem:Q_inv_convex}]

Let $\phi(y) = \frac{1}{\sqrt{2\pi}} e^{-y^2/2}$ denote the probability density function (PDF) of the standard normal distribution. 
Since the Q-function is defined as $Q(y) = \int^{\infty}_{y}\phi(y') \mathrm{d}y'$, its derivative is given by $Q'(y)=-\phi(y)$.
Now, let $y=Q^{-1}(x)$.
Given that $Q(0)=0.5$ and $\lim_{y\rightarrow \infty}Q(y)=0$, the range of $y$ corresponding to $x \in (0, 0.5)$ is $(0, \infty)$.
We now derive the first and second derivatives of $Q^{-1}(x)$. The first derivative is obtained as:
\begin{align}
    \frac{\mathrm{d}}{\mathrm{d}x}Q^{-1}(x) = \frac{\mathrm{d}y}{\mathrm{d}x} = \frac{1}{Q'(y)}=-\frac{1}{\phi(y)}.
\end{align}
Differentiating the above with respect to $x$ yields the second derivative:
\begin{align}
    \frac{\mathrm{d}^2}{\mathrm{d}x^2}Q^{-1}(x) = -\frac{\mathrm{d}}{\mathrm{d}x}\frac{1}{\phi(y)}
    = -\frac{\mathrm{d}}{\mathrm{d}y}\frac{1}{\phi(y)}\cdot \frac{\mathrm{d}y}{\mathrm{d}x}.
\end{align}
Note that $-\frac{\mathrm{d}}{\mathrm{d}y}\frac{1}{\phi(y)}=\phi(y)^{-2}\phi'(y)$.
Furthermore, since $\phi'(y)=-y\phi(y)$, we have:
\begin{align}
    -\frac{\mathrm{d}}{\mathrm{d}y}\frac{1}{\phi(y)}=\phi(y)^{-2}(-y\phi(y))=-\frac{y}{\phi(y)}
\end{align}
Consequently, the second derivative is expressed as:
\begin{align}
    \frac{\mathrm{d}^2}{\mathrm{d}x^2}Q^{-1}(x) = \bigg(-\frac{y}{\phi(y)} \bigg) \cdot \bigg( - \frac{1}{\phi(y)} \bigg)
    = \frac{y}{\phi(y)^2}.
\end{align}
Evaluating the sign of the second derivative, we observe that $\frac{\mathrm{d}^2}{\mathrm{d}x^2}Q^{-1}(x) > 0$ for all $x \in (0, 0.5)$, which corresponds to $y \in (0,\infty)$. 
Therefore, $Q^{-1}(x)$ is strictly convex on the interval $x \in (0, 0.5)$.
\end{proof}

\begin{proof}[Proof of Theorem~\ref{thm:ML_lossy_sc_rate_conv_bound}]

In the global scheme, by treating $w^k$ as a single symbol and applying Theorem~\ref{thm:eq:ML_lossy_sc_eps_conv_bound}, the following holds for any $\gamma \ge 0$ regarding the sampling strategy $\mathcal{S}_{\mathrm{glob}}^{**}(k,\epsilon)$ that achieves $R(k,d,\epsilon,\mathcal{A})$:
\begin{align}
    \epsilon \ge \mathbb{P}_{\mathrm{joint}}[\jmath_{\mathrm{glob}}(W^k,T_{\mathrm{glob}}, H_{\mathrm{glob}}, d,\mathcal{A}) \ge k R(k,d,\epsilon,\mathcal{A}) + \gamma] - e^{-\gamma}.
\label{eq:pr_thm:ML_lossy_sc_rate_conv_bound_tmp_eps_bound}
\end{align}
where $\mathbb{P}_{\mathrm{joint}}$ denotes the probability measure under $P_{\mathrm{joint}}$.

Next, we decompose the tilted information as $\jmath_{\mathrm{glob}}(W^k,T_{\mathrm{glob}}, H_{\mathrm{glob}}, d,\mathcal{A}) = \jmath_{\mathrm{local}}(W^k,T^k,H^k,d,\mathcal{A}) + \Delta \jmath$. 
For any $\nu > 0$, we define the following two events:
\begin{align}
    E_1 &:= \{\jmath_{\mathrm{local}}(W^k,T^k,H^k,d,\mathcal{A}) \ge k R(k,d,\epsilon,\mathcal{A}) + \gamma - \mu_{\Delta} + \nu \}, \\
    E_2 &:= \{ \Delta \jmath \ge \mu_{\Delta} - \nu \}.
\end{align}
By definition, if $E_1 \cap E_2$ occurs, then the event $\{\jmath_{\mathrm{local}}(W^k,T^k,H^k,d,\mathcal{A}) + \Delta \jmath \ge k R(k,d,\epsilon,\mathcal{A}) + \gamma \}$ holds. Consequently, we obtain:
\begin{align}
     \mathbb{P}_{\mathrm{joint}}[\jmath_{\mathrm{glob}}(W^k,T_{\mathrm{glob}}, H_{\mathrm{glob}}, d,\mathcal{A}) 
     \ge k R(k,d,\epsilon,\mathcal{A}) + \gamma]
     &\ge \mathbb{P}_{\mathrm{joint}}[E_1 \cap E_2] \notag \\
     &\ge \mathbb{P}_{\mathrm{joint}}[E_1] + \mathbb{P}_{\mathrm{joint}}[E_2] -1.
\label{eq:pr_thm:ML_lossy_sc_rate_conv_bound_tmp_PboundE}
\end{align}
From Cantelli’s inequality, the following relationship is established:
\begin{align}
    \mathbb{P}_{\mathrm{joint}}[\Delta \jmath \le \mu_{\Delta} - \nu] \le \frac{V_{\Delta}}{V_{\Delta} + \nu^2}.
\end{align}
Therefore, regarding $\mathbb{P}_{\mathrm{joint}}[E_2] = \mathbb{P}_{\mathrm{joint}}[\Delta \jmath \ge \mu_{\Delta} - \nu]$, it follows that:
\begin{align}
    \mathbb{P}_{\mathrm{joint}}[E_2] \ge 1 - \frac{V_{\Delta}}{V_{\Delta} + \nu^2}.
\end{align}
Substituting this into Eq.~\eqref{eq:pr_thm:ML_lossy_sc_rate_conv_bound_tmp_PboundE} yields:
\begin{align}
    \mathbb{P}_{\mathrm{joint}}[\jmath_{\mathrm{glob}}(W^k,T_{\mathrm{glob}}, H_{\mathrm{glob}}, d,\mathcal{A}) 
     \ge k R(k,d,\epsilon,\mathcal{A}) + \gamma] \ge \mathbb{P}_{\mathrm{joint}}[E_1] - \frac{V_{\Delta}}{V_{\Delta} + \nu^2}.
\end{align}
Furthermore, applying Eq.~\eqref{eq:pr_thm:ML_lossy_sc_rate_conv_bound_tmp_eps_bound}, we have:
\begin{align}
    \epsilon \ge \mathbb{P}_{\mathrm{joint}}[E_1] - \frac{V_{\Delta}}{V_{\Delta} + \nu^2} -e^{-\gamma}. \notag
\end{align}
Rearranging this leads to:
\begin{align}
    \mathbb{P}_{\mathrm{joint}}[E_1] \le \epsilon + e^{-\gamma} + \frac{V_{\Delta}}{V_{\Delta} + \nu^2}.
\end{align}
We define the right-hand side as $\epsilon'(\nu) := \epsilon + e^{-\gamma} + \frac{V_{\Delta}}{V_{\Delta} + \nu^2}$.
Thus, the inequality can be expressed as:
\begin{align}
    \mathbb{P}_{\mathrm{local}}[\jmath_{\mathrm{local}}(W^k,T^k,H^k,d,\mathcal{A}) \ge k R(k,d,\epsilon,\mathcal{A}) + \gamma - \mu_{\Delta} + \nu]
    \le \epsilon'(\nu),
\label{eq:pr_thm:ML_lossy_sc_rate_conv_bound_tmp_epilon_prime_bound}
\end{align}
where $\mathbb{P}_{\mathrm{local}}$ is the probability measure under $P^{\mathcal{A}}_{H^k|T^k}P^{\mathcal{S}^*}_{T^k|W^k}P_{W^k}$.
Note that since $E_1$ depends solely on $P^{\mathcal{A}}_{H^k|T^k}P^{\mathcal{S}^*}_{T^k|W^k}P_{W^k}$, we have replaced $\mathbb{P}_{\mathrm{joint}}$ with $\mathbb{P}_{\mathrm{local}}$ in Eq.~\eqref{eq:pr_thm:ML_lossy_sc_rate_conv_bound_tmp_epilon_prime_bound}. 
Here, we let $\theta := k R(k,d,\epsilon,\mathcal{A}) + \gamma - \mu_{\Delta} + \nu$.

Since $\jmath_{\mathrm{local}}(w^k,t^k,h^k,d,\mathcal{A})= \sum^k_{i=1} \jmath_{W}(w_i, t_i, h_i, d, \mathcal{A})$ with $\mathbb{E}_{P^{\mathcal{A}}_{H|T} P^{\mathcal{S}^*}_{T|W} P_W}[\jmath_W(W,T,H,d,\mathcal{A})] = R(d,\mathcal{A})$ and $\mathrm{Var}_{P^{\mathcal{A}}_{H|T} P^{\mathcal{S}^*}_{T|W} P_W}(\jmath_W(W,T,H,d,\mathcal{A})) = V(d,\mathcal{A})$, the Berry-Esseen Theorem (Theorem~\ref{thm:berry_essen_clt}) ensures that for any $\alpha \in \mathbb{R}$:
\begin{align}
    \mathbb{P}_{\mathrm{local}}\bigg[\sum^k_{i=1} \jmath_{W}(W_i, T_i, H_i, d, \mathcal{A}) 
        > k \bigg( R(d,\mathcal{A}) + \alpha \sqrt{\frac{V(d,\mathcal{A})}{k}} \bigg) \bigg]
    \ge Q(\alpha) - \frac{B_k}{\sqrt{k}}.
\label{eq:pr_thm:ML_lossy_sc_rate_conv_bound_tmp_clt}
\end{align}
Here, we have defined $A_k = \frac{1}{k}\sum^k_{i=1}\mathbb{E}_{P^{\mathcal{A}}_{H|T} P^{\mathcal{S}^*}_{T|W} P_W}[|\jmath_W(W_i,T_i, H_i,d,\mathcal{A}) - \mathbb{R}_W(d,\mathcal{A}) |^3]$ and $B_k = 6\frac{A_k}{(V(d,\mathcal{A}))^{3/2}}$.

We choose $\alpha$ such that $k ( R(d,\mathcal{A}) + \alpha \sqrt{\frac{V(d,\mathcal{A})}{k}} ) = \theta$.
 Solving this equation for $\alpha$ yields:
\begin{align}
    \alpha = \frac{\theta - kR(d,\mathcal{A})}{\sqrt{k V(d,\mathcal{A})}}.
\label{eq:pr_thm:ML_lossy_sc_rate_conv_bound_tmp_alpha}
\end{align}
Substituting this into Eq.~\eqref{eq:pr_thm:ML_lossy_sc_rate_conv_bound_tmp_clt} gives:
\begin{align}
    \mathbb{P}_{\mathrm{local}}\bigg[\sum^k_{i=1} \jmath_{W}(W_i, T_i, H_i, d, \mathcal{A}) > \theta \bigg]
    \ge Q(\alpha) - \frac{B_k}{\sqrt{k}}.
\end{align}
Combining the above with Eq.~\eqref{eq:pr_thm:ML_lossy_sc_rate_conv_bound_tmp_epilon_prime_bound}, we obtain:
\begin{align}
    \epsilon'(\nu) \ge \mathbb{P}_{\mathrm{local}}\bigg[ \sum^k_{i=1} \jmath_{W}(W_i, T_i, H_i, d, \mathcal{A}) > \theta \bigg] \ge Q(\alpha) - \frac{B_k}{\sqrt{k}}.
\end{align}
Rearranging this inequality with respect to $\alpha$ leads to:
\begin{align}
    \alpha \ge Q^{-1}\bigg(\epsilon'(\nu) + \frac{B_k}{\sqrt{k}} \bigg).
\end{align}
where we have used the fact that $Q(\cdot)$ is strictly monotonically decreasing.
Let $\phi(x) = \frac{1}{\sqrt{2\pi}}e^{-x^2/2}$ be the probability density function of the standard normal distribution.
Using the equation $\frac{\mathrm{d}}{\mathrm{d}x}Q^{-1}(x) = -\frac{1}{\phi(Q^{-1}(x))}$, the Taylor expansion of $Q^{-1}\big(\epsilon'(\nu) + \frac{B_k}{\sqrt{k}} \big)$ around $\epsilon'(\nu)$ is given by:
\begin{align}
    Q^{-1}\bigg(\epsilon'(\nu) + \frac{B_k}{\sqrt{k}} \bigg)
    = Q^{-1}(\epsilon'(\nu)) - \frac{1}{\phi(Q^{-1}(\epsilon'(\nu)))} \frac{B_k}{\sqrt{k}} + O\bigg(\frac{1}{k}\bigg).
\end{align}
Thus, we have $\alpha \ge Q^{-1}(\epsilon'(\nu)) + O(\frac{1}{\sqrt{k}})$. 
Substituting Eq.~\eqref{eq:pr_thm:ML_lossy_sc_rate_conv_bound_tmp_alpha} into this inequality, we get:
\begin{align}
    \frac{\theta - kR(d,\mathcal{A})}{\sqrt{k V(d,\mathcal{A})}}
    &\ge Q^{-1}(\epsilon'(\nu)) + O\bigg(\frac{1}{\sqrt{k}}\bigg) \notag \\
    \theta  &\ge kR(d,\mathcal{A}) + \sqrt{k V(d,\mathcal{A})}Q^{-1}(\epsilon'(\nu)) + O(1).
\end{align}
From the definition of $\theta$, it follows that:
\begin{align}
    k R(k,d,\epsilon, \mathcal{A}) +\gamma - \mu_{\Delta} + \nu 
    \ge kR(d,\mathcal{A}) + \sqrt{k V(d,\mathcal{A})}Q^{-1}(\epsilon'(\nu)) + O(1).
\end{align}
Finally, solving for $R(k,d,\epsilon, \mathcal{A})$ yields:
\begin{align}
    R(k,d,\epsilon, \mathcal{A})
    \ge R(d,\mathcal{A}) + \sqrt{\frac{V(d,\mathcal{A})}{k}}Q^{-1}(\epsilon'(\nu)) 
        + \frac{\mu_{\Delta}}{k} - \frac{\nu}{k} - \frac{\gamma}{k} + O\bigg(\frac{1}{k} \bigg).
\label{eq:pr_thm:ML_lossy_sc_rate_conv_bound_tmp_Rglobal_lower1}
\end{align}

From Lemma~\ref{lem:Q_inv_convex}, $Q^{-1}(x)$ is a strictly convex function for $x \in (0, 0.5)$, which implies the following:
\begin{align}
    Q^{-1}(\epsilon'(\nu)) 
    \ge Q^{-1}(\epsilon) - \frac{1}{\phi(Q^{-1}(\epsilon))}(\epsilon'(\nu) - \epsilon).
\end{align} 
By the definition of $\epsilon'(\nu)$, we have $\epsilon'(\nu) - \epsilon = e^{-\gamma} + \frac{V_{\Delta}}{V_{\Delta}+\nu^2}$, and thus:
\begin{align}
    Q^{-1}(\epsilon'(\nu)) \ge Q^{-1}(\epsilon) - \frac{1}{\phi(Q^{-1}(\epsilon))} \bigg(e^{-\gamma} + \frac{V_{\Delta}}{V_{\Delta}+\nu^2} \bigg).
\end{align}
Substituting this result into Eq.~\eqref{eq:pr_thm:ML_lossy_sc_rate_conv_bound_tmp_Rglobal_lower1} yields:
\begin{align}
\begin{split}
    R(k,d,\epsilon, \mathcal{A})
    &\ge R(d,\mathcal{A}) + \sqrt{\frac{V(d,\mathcal{A})}{k}} \bigg\{ Q^{-1}(\epsilon) - \frac{1}{\phi(Q^{-1}(\epsilon))} \bigg(e^{-\gamma} + \frac{V_{\Delta}}{V_{\Delta}+\nu^2} \bigg) \bigg\}  \\
    &+ \frac{\mu_{\Delta}}{k} - \frac{\nu}{k} - \frac{\gamma}{k} + O\bigg(\frac{1}{k} \bigg).
\end{split}
\end{align}
Let $C = \frac{\sqrt{V(d,\mathcal{A})}}{\phi(Q^{-1}(\epsilon))}$. We can then rearrange the above as follows:
\begin{align}
\begin{split}
    R(k,d,\epsilon, \mathcal{A})
    &\ge R(d,\mathcal{A}) + \sqrt{\frac{V(d,\mathcal{A})}{k}} Q^{-1}(\epsilon)
     - \frac{C}{\sqrt{k}}e^{-\gamma} - \frac{C}{\sqrt{k}}\frac{V_{\Delta}}{V_{\Delta}+\nu^2}  \\
    &+ \frac{\mu_{\Delta}}{k} - \frac{\nu}{k} - \frac{\gamma}{k} + O\bigg(\frac{1}{k} \bigg).
\end{split}
\end{align}
By setting $\gamma = \frac{1}{2}\log k$, this inequality can be simplified to:
\begin{align}
\begin{split}
    R(k,d,\epsilon, \mathcal{A})
    &\ge R(d,\mathcal{A}) + \sqrt{\frac{V(d,\mathcal{A})}{k}} Q^{-1}(\epsilon)
        \\
    &- \frac{C}{\sqrt{k}}\frac{V_{\Delta}}{V_{\Delta}+\nu^2}- \frac{\nu}{k} + \frac{\mu_{\Delta}}{k} + O\bigg(\frac{\log k}{k} \bigg).
\end{split}
\label{eq:pr_thm:ML_lossy_sc_rate_conv_bound_tmp_Rglobal_lower2}
\end{align}

Here, let $L(\nu) := \frac{C}{\sqrt{k}}\frac{V_{\Delta}}{V_{\Delta}+\nu^2} + \frac{\nu}{k}$.
Since $V_{\Delta}$ is non-negative, the relation $V_{\Delta} + \nu^2 \ge \nu^2$ holds, yielding the following upper bound:
\begin{align}
    L(\nu) \le \frac{\nu}{k} + \frac{V_{\Delta}C}{\nu^2 \sqrt{k}} =: \widebar{L}(\nu).
\end{align}
The lower bound in Eq.~\eqref{eq:pr_thm:ML_lossy_sc_rate_conv_bound_tmp_Rglobal_lower2} remains valid even if $L(\nu)$ is replaced by $\widebar{L}(\nu)$.
To set a specific value for $\nu$, we determine the optimal $\nu^*$ that minimizes $\widebar{L}(\nu)$.
By setting the derivative $\widebar{L}'(\nu) = \frac{1}{k} - \frac{2V_{\Delta}C}{\nu^3 \sqrt{k}} = 0$ and solving for $\nu$, we obtain $\nu^* = (2V_{\Delta}C)^{\frac{1}{3}}k^{\frac{1}{6}}$. 
Substituting this back into $\widebar{L}(\nu)$ gives:
\begin{align}
    \widebar{L}(\nu^*) 
    &= \frac{(2V_{\Delta}C)^{\frac{1}{3}}k^{\frac{1}{6}}}{k} + \frac{V_{\Delta} C}{\sqrt{k} ((2V_{\Delta}C)^{\frac{1}{3}}k^{\frac{1}{6}})^2} \notag\\
    &=\frac{(2V_{\Delta}C)^{\frac{1}{3}}}{k^{\frac{5}{6}}} + \frac{V_{\Delta} C}{(2 V_{\Delta} C)^{\frac{2}{3}} k^{\frac{5}{6}} } \notag \\
    &=\bigg(1 + \frac{1}{2} \bigg) \frac{(2V_{\Delta}C)^{\frac{1}{3}}}{k^{\frac{5}{6}}} \notag \\
    &= \frac{3}{2}\frac{(2V_{\Delta}C)^{\frac{1}{3}}}{k^{\frac{5}{6}}}.
\end{align}
Consequently, applying the relation $L(\nu^*) \le \widebar{L}(\nu^*) = \frac{3}{2}\frac{(2V_{\Delta}C)^{\frac{1}{3}}}{k^{\frac{5}{6}}}$ to Eq.~\eqref{eq:pr_thm:ML_lossy_sc_rate_conv_bound_tmp_Rglobal_lower2} yields:
\begin{align}
    R(k,d,\epsilon, \mathcal{A})
    &\ge R(d,\mathcal{A}) + \sqrt{\frac{V(d,\mathcal{A})}{k}} Q^{-1}(\epsilon)
        + \frac{\mu_{\Delta}}{k} - \frac{3}{2 k^{\frac{5}{6}}}\bigg(2 V_{\Delta} \frac{\sqrt{V(d,\mathcal{A})}}{\phi(Q^{-1}(\epsilon))} \bigg)^{\frac{1}{3}} + O\bigg(\frac{\log k}{k} \bigg).
\end{align}
\end{proof}

\subsection{Proof of Theorem~\ref{thm:ML_lossy_sc_distortion_conv_bound}}
\label{apdx:proof:thm:ML_lossy_sc_distortion_conv_bound}

\begin{proof}[Proof of Theorem~\ref{thm:ML_lossy_sc_distortion_conv_bound}]

This proof follows the methodology used in the proof of Theorem 14 in \cite{kostina2012fixed}.
In this section, we derive Eq. \eqref{eq:ML_lossy_sc_distortion_conv_bound} from Eq. \eqref{eq:ML_lossy_sc_rate_conv_bound}.
First, let $(d_{\infty}, R_{\infty})$ be a fixed point on the rate-distortion curve $R(d,\mathcal{A})$, where $d_{\infty} \in (\underline{d}, \bar{d})$ and $R_{\infty}=R(d_{\infty},\mathcal{A})$.
We define $d_k = D(k,R_{\infty}, \epsilon, \mathcal{A})$.
According to the results in \cite{kostina2012fixed} (Appendix E), it follows that $|d_k - d_{\infty}| \le O(\frac{1}{\sqrt{k}})$.

Since $R(k, \cdot, \epsilon, \mathcal{A})$ and $D(k, \cdot, \epsilon, \mathcal{A})$ are inverse functions, the definition of $d_k$ implies $R_{\infty} = R(k, d_k, \epsilon, \mathcal{A})$.
Consequently, from Eq. \eqref{eq:ML_lossy_sc_rate_conv_bound} in Theorem \ref{thm:ML_lossy_sc_rate_conv_bound}, we have:
\begin{align}
  R_{\infty} \ge R(d_k, \mathcal{A}) + \sqrt{\frac{V(d_k,\mathcal{A})}{k}}Q^{-1}(\epsilon) +\Delta_k(d_k,\epsilon)+ O\bigg(\frac{\log k}{k}\bigg).
\label{eq:Mls_dcb_tmp1}
\end{align}
Taylor expanding $R(d_k, \mathcal{A})$, $V(d_k,\mathcal{A})$ and $\Delta_k(d_k,\epsilon))$ around $d_{\infty}$ yields:
\begin{align}
  & R(d_k, \mathcal{A}) = R(d_{\infty},\mathcal{A}) + R'(d_{\infty},\mathcal{A})(d_k - d_{\infty}) + O \bigg( \frac{1}{k} \bigg), \\
  & V(d_k, \mathcal{A}) = V(d_{\infty},\mathcal{A}) + O\bigg( \frac{1}{\sqrt{k}} \bigg), \\
  & \Delta_k(d_k,\epsilon) = \Delta_k(d_{\infty}, \epsilon) + O\bigg(\frac{1}{\sqrt{k}}\bigg).
\end{align}
In the above derivation, we have utilized the fact that $|d_k - d_{\infty}| \le O(1/\sqrt{k})$ and that the order of $\Delta_k(d,\epsilon)$ is at most $O(1)$ (Detailed in Appendix~\ref{apdx:subsec:detail_thm:ML_lossy_sc_rate_conv_bound_statement_interpretation}).
Substituting these expansions into Eq. \eqref{eq:Mls_dcb_tmp1} gives:
\begin{align}
  R_{\infty} \ge R(d_{\infty},\mathcal{A}) + R'(d_{\infty},\mathcal{A})(d_k - d_{\infty}) + \sqrt{\frac{V(d_{\infty},\mathcal{A})}{k}}Q^{-1}(\epsilon) + \Delta_k(d_{\infty},\epsilon) + O\bigg(\frac{\log k}{k}\bigg).
\label{eq:Mls_dcb_tmp2}
\end{align}
Using the definition $R_{\infty}=R(d_{\infty},\mathcal{A})$ and the property $R'(d_{\infty},\mathcal{A}) < 0$, we rearrange Eq. \eqref{eq:Mls_dcb_tmp2} to obtain:
\begin{align}
  d_k - d_{\infty} \ge - \frac{1}{R'(d_{\infty},\mathcal{A})}\sqrt{\frac{V(d_{\infty},\mathcal{A})}{k}}Q^{-1}(\epsilon)
    - \frac{1}{R'(d_{\infty},\mathcal{A})}\Delta_k(d_{\infty},\epsilon)+ O\bigg(\frac{\log k}{k}\bigg).
  \label{eq:Mls_dcb_tmp3}
\end{align}
By the derivative of the inverse function, $D'(R_{\infty}, \mathcal{A}) = \frac{1}{R'(d_{\infty}, \mathcal{A})}$ holds.
Furthermore, since $R(d,\mathcal{A}) =\mathbb{R}_W(d,\mathcal{A})$, we have $R'(d,\mathcal{A}) <0$ for any $d$.
Combining this with the definition of $\mathcal{V}(R,\mathcal{A})$, we obtain:
\begin{align}
  \sqrt{\mathcal{V}(R, \mathcal{A})} = |D'(R,\mathcal{A})|\sqrt{V(D(R, \mathcal{A}), \mathcal{A})} = -D'(R,\mathcal{A}) \sqrt{V(D(R,\mathcal{A}), \mathcal{A})}.
\end{align}
Applying these relations to Eq. \eqref{eq:Mls_dcb_tmp3} results in:
\begin{align}
  &d_k - d_{\infty} \ge  \sqrt{\frac{\mathcal{V}(R_{\infty}, \mathcal{A})}{k}}Q^{-1}(\epsilon) - D'(R_{\infty},\mathcal{A})\Delta_k(d_{\infty},\epsilon)  + O\bigg(\frac{\log k}{k}\bigg) \notag \\
  &D(k, R_{\infty}, \epsilon, \mathcal{A}) \notag \\
  &~~~~~~\ge D(R_{\infty}, \mathcal{A}) + \sqrt{\frac{\mathcal{V}(d_{\infty}, \mathcal{A})}{k}}Q^{-1}(\epsilon) - D'(R_{\infty},\mathcal{A})\Delta_k(D(R_{\infty}, \mathcal{A}),\epsilon)  + O\bigg(\frac{\log k}{k}\bigg). \notag
\end{align}
Here, we used the identities $d_k = D(k, R_{\infty}, \epsilon, \mathcal{A})$ and $d_{\infty}=D(R_{\infty}, \mathcal{A})$.
Since $(d_{\infty}, R_{\infty})$ is an arbitrary point on the rate-distortion curve $R(d,\mathcal{A})$ satisfying $d_{\infty} \in (\underline{d}, \bar{d})$, this completes the proof of \eqref{eq:ML_lossy_sc_distortion_conv_bound}.

\end{proof}

\subsection{Decomposition of the Rate-Dispersion Function $V(d,\mathcal{A})$}
\label{apdx:subsec:derive_V_decomposition}

In this section, we detail the decomposition $V(d,\mathcal{A})=V_{\mathrm{in}}(d,\mathcal{A})+V_{\mathrm{bet}}(d,\mathcal{A})$, along with the further decompositions of $V_{\mathrm{in}}$ and $V_{\mathrm{bet}}$.
First, the following identity is derived by applying the variance decomposition with respect to $P_W$:
\begin{align}
\begin{split}
  &V(d,\mathcal{A})= \underbracket[0.5pt]{\mathbb{E}_{P_W}\big[ \mathrm{Var}_{P^{\mathcal{A}}_{H|T}P^{\mathcal{S}^*}_{T|W}} (\jmath_{W}) \big]  }_{=: V_{\mathrm{in}}(d,\mathcal{A})}
    + \underbracket[0.5pt]{\mathrm{Var}_{P_W}\big( \mathbb{E}_{P^{\mathcal{A}}_{H|T}P^{\mathcal{S}^*}_{T|W}}[\jmath_{W}] \big)   }_{=: V_{\mathrm{bet}}(d,\mathcal{A})}. 
\end{split}
\end{align}

Similarly, the expansion of $V_{\mathrm{in}}$ follows from the definition of $\jmath_{W}$ and the variance decomposition:
\begin{align}
&\mathbb{E}_{P_W}\Big[ \mathrm{Var}_{P^{\mathcal{A}}_{H|T}P^{\mathcal{S}^*}_{T|W}} (\jmath_{W}(W,T, H,d,\mathcal{A})|W) \Big] \notag \\
&= \mathbb{E}_{P_W}\Big[ 
     \mathrm{Var}_{P^{\mathcal{A}}_{H|T}P^{\mathcal{S}^*}_{T|W}} \Big(\iota_{W;T^{\mathcal{S}^*}}(W;T) + \lambda^*_{\mathcal{A}}(d)(\mathsf{d}(W;H) -d) |W \Big) 
     \Big]  \notag \\
&= \mathbb{E}_{P_W}\Big[ 
  \mathrm{Var}_{P^{\mathcal{A}}_{H|T}P^{\mathcal{S}^*}_{T|W}} \Big(\iota_{W;T^{\mathcal{S}^*}}(W;T) + \lambda^*_{\mathcal{A}}(d)(\mathsf{d}(W;H)) |W \Big) 
      \Big] \notag \\
\begin{split}
  &= \mathbb{E}_{P_W}\Big[ 
    \mathrm{Var}_{P^{\mathcal{S}^*}_{T|W}} \Big(\iota_{W;T^{\mathcal{S}^*}}(W;T) \Big) 
    + (\lambda^*_{\mathcal{A}}(d))^2 \mathrm{Var}_{P^{\mathcal{A},\mathcal{S}^*}_{H|W}} \Big(\mathsf{d}(W;H) \Big) \\
  &~~~~~~+ 2\lambda^*_{\mathcal{A}}(d) \mathrm{Cov}_{P^{\mathcal{A},\mathcal{S}^*}_{H|W}} \Big(\iota_{W;H^{\mathcal{S}^*}_{\mathcal{A}}}(W;H), \mathsf{d}(W;H) \Big)
      \Big]
\end{split}
 \notag \\
\begin{split}
  &= \mathbb{E}_{P_W}\Big[ 
      \mathrm{Var}_{P^{\mathcal{S}^*}_{T|W}} \Big(\iota_{W;T^{\mathcal{S}^*}}(W;T) \Big)  \\
  &~~~~~ + (\lambda^*_{\mathcal{A}}(d))^2 \Big\{ 
       \mathrm{Var}_{P^{\mathcal{S}^*}_{T|W}} \Big( \mathbb{E}_{P^{\mathcal{A}}_{H|T}} \Big[ \mathsf{d}(W;H) \Big]  \Big) 
      +  \mathbb{E}_{P^{\mathcal{S}^*}_{T|W}} \Big[ \mathrm{Var}_{P^{\mathcal{A}}_{H|T}} \Big( \mathsf{d}(W;H) \Big)  \Big]\Big\} \\
    &~~~~~ +  2\lambda^*_{\mathcal{A}}(d) \mathrm{Cov}_{P^{\mathcal{A}}_{H|T}P^{\mathcal{S}^*}_{T|W}} \Big( \iota_{W;H^{\mathcal{S}^*}_{\mathcal{A}}}(W;H), \mathsf{d}(W;H) \Big)
  \Big].
\end{split}
\end{align}

Likewise, the expansion of $V_{\mathrm{bet}}$ is obtained using the definition of $\jmath_{W}$:
\begin{align}
&\mathrm{Var}_{P_W}\Big( \mathbb{E}_{P^{\mathcal{A}}_{H|T}P^{\mathcal{S}^*}_{T|W}}\Big[\jmath_{W}(W,T,H,d,\mathcal{A})|W \Big] \Big)  \notag \\
&=\mathrm{Var}_{P_W}\Big( \mathbb{E}_{P^{\mathcal{A}}_{H|T}P^{\mathcal{S}^*}_{T|W}}\Big[ \iota_{W;T^{\mathcal{S}^*}}(W;T) + \lambda^*_{\mathcal{A}}(d)(\mathsf{d}(W;H) -d) |W \Big] \Big) \notag \\
&=\mathrm{Var}_{P_W}\Big( 
    \mathbb{E}_{P^{\mathcal{A}}_{H|T}P^{\mathcal{S}^*}_{T|W}}\Big[\iota_{W;T^{\mathcal{S}^*}}(W;T) |W \Big]  
    + \mathbb{E}_{P^{\mathcal{A}}_{H|T}P^{\mathcal{S}^*}_{T|W}}\Big[\lambda^*_{\mathcal{A}}(d)\mathsf{d}(W;H) |W \Big]  
    \Big) \notag \\
\begin{split}
&= \mathrm{Var}_{P_W}\Big(  \mathbb{E}_{P^{\mathcal{S}^*}_{T|W}}\Big[\iota_{W;T^{\mathcal{S}^*}}(W;T) |W \Big] \Big) 
  + (\lambda^*_{\mathcal{A}}(d))^2\mathrm{Var}_{P_W}\Big(  \mathbb{E}_{P^{\mathcal{A}}_{H|T}P^{\mathcal{S}^*}_{T|W}}\Big[\mathsf{d}(W;H) |W \Big] \Big)  \\
&~~~~~ + 2\lambda^*_{\mathcal{A}}(d) \mathrm{Cov}_{P_W}\Big( 
            \mathbb{E}_{P^{\mathcal{S}^*}_{T|W}}\Big[\iota_{W;T^{\mathcal{S}^*}}(W;T) |W \Big], 
            \mathbb{E}_{P^{\mathcal{A}}_{H|T}P^{\mathcal{S}^*}_{T|W}}\Big[\mathsf{d}(W;H) |W \Big] 
         \Big).
\end{split}
\end{align}

\section{Proof of Appendix}
\label{apdx:sec:proof_for_appendix}

\subsection{Proof of Theorem~\ref{thm:V_delta_order} in Appendix~\ref{apdx:subsec:rate_distortion_theorem_RkdeA}}
\label{apdx:subsec:proof_thm:V_delta_order}

\begin{proof}[Proof of Theorem~\ref{thm:V_delta_order}]

First, we evaluate the order of $V_{\Delta}$.
$V_{\Delta}$ can be upper-bounded as follows by applying the variance inequality $\mathrm{Var}(X-Y) \le 2\mathrm{Var}(X) + 2\mathrm{Var}(Y)$:
\begin{align}
    V_{\Delta} \le 2\mathrm{Var}(\jmath_{\mathrm{glob}}(W^k,T,H,d,\mathcal{A})) + 2\sum^k_{i=1}\mathrm{Var}(\jmath_{\mathrm{local}}(W,T,H,d,\mathcal{A})).
\label{eq:V_Delta_bound_tmp1}
\end{align}
Since $\mathrm{Var}(\jmath_{\mathrm{local}}(W,T,H,d,\mathcal{A}))=V(d,\mathcal{A})$, the second term on the right-hand side of Eq. \eqref{eq:V_Delta_bound_tmp1} becomes $2\sum^k_{i=1}\mathrm{Var}(\jmath_{\mathrm{local}}(W,T,H,d,\mathcal{A}))=2kV(d,\mathcal{A})$, which is of order $O(k)$.

Next, we examine the order of the first term on the right-hand side of Eq. \eqref{eq:V_Delta_bound_tmp1} with respect to $k$.
Since $P^{\mathcal{S}^*_{\mathrm{glob}}}_{T|W}$ is the minimizer of $\mathbb{I}(W^k;T)$ under the constraint $\mathbb{E}[\mathsf{d}(W^k;\mathcal{A}(\mathcal{S}(W^k))] \le d$, it is given by the following Gibbs distribution form according to \cite{cover1999elements} (Chapter 10.7):
\begin{align}
    P^{\mathcal{S}^*_{\mathrm{glob}}}_{T|W}(t|w^k) = \frac{P^{\mathcal{S}^*_{\mathrm{glob}}}_{T}(t) e^{-k\lambda^{*\mathrm{glob}}_{\mathcal{A}}(d)\bar{\mathsf{d}}(W^k;t)} }{ Z(W^k)},
\end{align}
where $P^{\mathcal{S}^*_{\mathrm{glob}}}_{T}(t)=\mathbb{E}_{P_W}[P^{\mathcal{S}^*_{\mathrm{glob}}}_{T|W}(t|W^k)]$, $\bar{\mathsf{d}}(W^k;T):=\mathbb{E}_{P^{\mathcal{A}}_{H|T=T}}[\mathsf{d}(W^k;H)]$, $Z(W^k):=\sum_{t \in\mathcal{T}}P^{\mathcal{S}^*_{\mathrm{glob}}}_{T}(t) e^{-k\lambda^{*\mathrm{glob}}_{\mathcal{A}}(d)\bar{\mathsf{d}}(W^k;t)}$.
Using this relationship, $\mathrm{Var}(\jmath_{\mathrm{glob}}(W^k,T,H,d,\mathcal{A}))$ can be expressed as:
\begin{align}
    &\mathrm{Var}(\jmath_{\mathrm{glob}}(W^k,T,H,d,\mathcal{A})) \notag \\
    &=\mathrm{Var}\bigg(\log\frac{P^{\mathcal{S}^*_{\mathrm{glob}}}_{T|W}(T|w^k)}{P^{\mathcal{S}^*_{\mathrm{glob}}}_{T}(t)} + k\lambda^{*\mathrm{glob}}_{\mathcal{A}}(d)\bigg(\mathsf{d}(W^k;H) -d  \bigg) \bigg) \notag \\
    &=\mathrm{Var}\bigg(
        \underbrace{-\log Z(W^k)}_{=:A(W^k)} + 
        \underbrace{k\lambda^{*\mathrm{glob}}_{\mathcal{A}}(d) (\mathsf{d}(W^k;H) - \bar{\mathsf{d}}(W^k;T) )}_{=:B(W^k, T, H)}
        \bigg). \notag \\
\end{align}

Regarding $B(W^k, T, H)$, we first calculate the expectation with respect to $H$ conditioned on $W^k$ and $T$:
\begin{align}
    \mathbb{E}_{P^\mathcal{A}_{H|T}}[\lambda^{*\mathrm{glob}}_{\mathcal{A}}(d) (\mathsf{d}(W^k;H) - \bar{\mathsf{d}}(W^k;T) ) |W^k, T] =  (\bar{\mathsf{d}}(W^k;T)-\bar{\mathsf{d}}(W^k;T)) = 0.
\end{align}
Consequently, the covariance between $A(W^k)$ and $B(W^k, T, H)$ is zero:
\begin{align}
    \mathrm{Cov}(A(W^k), B(W^k, T, H)) 
    = \mathbb{E}_{P^{\mathcal{S}^{**}_{\mathrm{glob}}(k,\epsilon)}_{T|W}P_W}[A(W^k) \cdot  \mathbb{E}_{P^\mathcal{A}_{H|T}}[B(W^k, T, H) | W^k,T]] = 0.
\end{align}

We then evaluate the order of $\lambda^{*\mathrm{glob}}_{\mathcal{A}}(d)$. 
The term $\lambda^{*\mathrm{glob}}_{\mathcal{A}}(d)$ is defined as a negative gradient of $\frac{1}{k} \mathbb{I}(W^k;T^{\mathcal{S}^*_{\mathrm{glob}}})$.
From the properties of mutual information, we have $\mathbb{I}(W^k;T^{\mathcal{S}^*_{\mathrm{glob}}}) \le \mathbb{H}(W^k)=k\mathbb{H}(W)$, which implies that the order of $\mathbb{I}(W^k;T^{\mathcal{S}^*_{\mathrm{glob}}})$ is $O(k)$.
Furthermore, $\mathbb{I}(W^k;T^{\mathcal{S}^*_{\mathrm{glob}}})=\inf_{\mathcal{S}: \mathbb{E}[\mathsf{d}(W^k;\mathcal{A}(\mathcal{S}(W^k))] \le d}\mathbb{I}(W^k;T)$ holds, which corresponds to the definition of $\mathbb{R}_W(d,\mathcal{A})$ with $W$ replaced by $W^k$. 
Thus, as in the case of Theorem~\ref{thm:rate_distortion_theorem}, $\mathbb{I}(W^k;T^{\mathcal{S}^*_{\mathrm{glob}}})$ is also convex with respect to $d$.
To explicitly denote the dependence on $d$, let $R_k(d):=\mathbb{I}(W^k;T^{\mathcal{S}^*_{\mathrm{glob}}})$.
From the convexity of $R_k(d)$, the following inequality holds for $\lambda^{*\mathrm{glob}}_{\mathcal{A}}(d)$ using a small constant $\delta>0$:
\begin{align}
    \lambda^{*\mathrm{glob}}_{\mathcal{A}}(d) 
    = -\frac{1}{k}\frac{\mathrm{d} R_k(d)}{\mathrm{d}d}
    \le \frac{1}{k}\frac{R_k(d-\delta) - R_k(d)}{\delta}
    \le \frac{1}{k}\frac{R_k(d-\delta)}{\delta},
\end{align}
where the final inequality follows from the non-negativity of mutual information.
Since $R_k(d-\delta)$ represents $\mathbb{I}(W^k;T^{\mathcal{S}^*_{\mathrm{glob}}})$ with a target distortion of $d-\delta$, its order with respect to $k$ is $O(k)$.
Consequently, the order of $\lambda^{*\mathrm{glob}}_{\mathcal{A}}(d)$ is also $O(1)$.

Next, we evaluate $\mathrm{Var}(A(W^k))$. 
Since $A(W^k)$ is a function of the independent variables $W_1, \dots, W_k$, we analyze the variation of $Z(W^k) = \sum_{t \in \mathcal{T}} P^{\mathcal{S}^*_{\mathrm{glob}}}_{T}(t) e^{-\lambda^{*\mathrm{glob}}_{\mathcal{A}}(d)\bar{\mathsf{d}}(W^k;t)}$ when $W_i$ is replaced by an independent copy $W'_i \sim P_W$. 
Let $W^{k,i}$ denote the sequence where the $i$-th element is replaced.
For any $t$, the difference in the exponent of $Z$ is given by:
\begin{align}
    |\lambda^{*\mathrm{glob}}_{\mathcal{A}}(d)\bar{\mathsf{d}}(W^k;t) - \lambda^{*\mathrm{glob}}_{\mathcal{A}}(d)\bar{\mathsf{d}}(W^{k,i};t)| 
    = \lambda^{*\mathrm{glob}}_{\mathcal{A}}(d) \frac{1}{k} | \mathbb{E}_{P^\mathcal{A}_{H|T}}[\mathsf{d}(W_i;H) - \mathsf{d}(W'_i;H)]|.
\end{align}
Given that the maximum value of $\mathsf{d}(\cdot;\cdot)$ is $d_{\mathrm{max}}$, the absolute difference is upper-bounded by $\frac{\lambda^{*\mathrm{glob}}_{\mathcal{A}}(d) d_{\mathrm{max}}}{k}$. 
Due to the monotonicity of the exponential function, the following inequality holds:
\begin{align}
    e^{-\frac{\lambda^{*\mathrm{glob}}_{\mathcal{A}}(d) d_{\mathrm{max}}}{k}} Z(W^k) \le Z(W^{k,i}) \le e^{\frac{\lambda^{*\mathrm{glob}}_{\mathcal{A}}(d) d_{\mathrm{max}}}{k}} Z(W^k).
\end{align}
By taking the logarithm of each side, we obtain the upper bound for the difference between $-\log(Z(W^k))$ and $-\log(Z(W^{k,i}))$: 
\begin{align}
    |-\log(Z(W^k)) - (-\log(Z(W^{k,i})))| \le \frac{\lambda^{*\mathrm{glob}}_{\mathcal{A}}(d) d_{\mathrm{max}}}{k}.
\end{align}
Applying the Efron-Stein inequality \cite{boucheron2013concentration}, we have:
\begin{align}
    \mathrm{Var}(A(W^k)) 
    &\le \frac{1}{2}\sum^k_{i=1}\mathbb{E}[|-\log(Z(W^k)) - (-\log(Z(W^{k,i})))|]  \notag \\
    &\le  \frac{1}{2}\sum^k_{i=1} \bigg(\frac{\lambda^{*\mathrm{glob}}_{\mathcal{A}}(d) d_{\mathrm{max}}}{k} \bigg)^2 \notag \\
    &\le \frac{\lambda^{*\mathrm{glob}}_{\mathcal{A}}(d)^2 d_{\mathrm{max}}^2}{2k}.
\end{align}
Considering that $\lambda^{*\mathrm{glob}}_{\mathcal{A}}(d)$ is $O(1)$, the order of $\mathrm{Var}(A(W^k))$ with respect to $k$ is $O(k^{-1})$.

Finally, we evaluate $\mathrm{Var}(B(W^k, T, H))$.
Since $\mathbb{E}_{P^\mathcal{A}_{H|T}}[\lambda^{*\mathrm{glob}}_{\mathcal{A}}(d) (\mathsf{d}(W^k;H) - \bar{\mathsf{d}}(W^k;T) ) |W^k, T] = 0$, the variance can be expanded as follows:
\begin{align}
    \mathrm{Var}(B(W^k, T, H))
    &= \mathbb{E}_{P^{\mathcal{S}^{**}_{\mathrm{glob}}(k,\epsilon)}_{T|W}P_W}[\mathrm{Var}_{P^{\mathcal{A}}_{H|T}}(
        k\lambda^{*\mathrm{glob}}_{\mathcal{A}}(d) (\mathsf{d}(W^k;H) - \bar{\mathsf{d}}(W^k;T)
        )] \notag \\
    &=k^2\lambda^{*\mathrm{glob}}_{\mathcal{A}}(d)^2\mathbb{E}_{P^{\mathcal{S}^{**}_{\mathrm{glob}}(k,\epsilon)}P_W}[\mathrm{Var}_{P^{\mathcal{A}}_{H|T}}(
         (\mathsf{d}(W^k;H))].
\end{align}
Because the maximum value of $\mathsf{d}(W^k;H)$ is $d_{\mathrm{max}}$, the variance $\mathrm{Var}_{P^{\mathcal{A}}_{H|T}}(\mathsf{d}(W^k;H))$ is $O(1)$. 
Combined with the fact that $\lambda^{*\mathrm{glob}}_{\mathcal{A}}(d)$ is $O(1)$, we conclude that the order of $\mathrm{Var}(B(W^k, T, H))$ is $O(k^2)$.

Next, we evaluate the order of $\mu_{\Delta}$.
The term $\mu_{\Delta}$ can be expressed as $\mu_{\Delta}=\mathbb{E}_{P_{\mathrm{joint}}}[\jmath_{\mathrm{glob}}(W^k,T,H,d,\mathcal{A})]-\sum^k_{i=1}\mathbb{E}_{P_{\mathrm{joint}}}[\jmath_{\mathrm{local}}(W,T,H,d,\mathcal{A})]$.
The second term $\sum^k_{i=1}\mathbb{E}_{P_{\mathrm{joint}}}[\jmath_{\mathrm{local}}(W,T,H,d,\mathcal{A})]=\sum^k_{i=1}\mathbb{E}_{P_{\mathrm{local}}}[\jmath_{\mathrm{local}}(W,T,H,d,\mathcal{A})]$ is equal to $\sum^k_{i=1} \mathbb{R}_W(d,\mathcal{A})$ by the definition of $\jmath_W$.
Thus, the second term is of order $O(k)$.

The first term is given by $\mathbb{E}_{P_{\mathrm{joint}}}[\jmath_{\mathrm{glob}}(W^k,T,H,d,\mathcal{A})]=\mathbb{E}_{P_{\mathrm{joint}}}[\iota_{W;T^{\mathrm{S}^*_{\mathrm{glob}}}}(W^k;T) + k\lambda^{*\mathrm{glob}}_{\mathcal{A}}(d)(\mathsf{d}(W^k;H) -d)]$.
Here, we evaluate the order of the terms within the expectation.
First, since the maximum value of $\mathsf{d}(\cdot;\cdot)$ is $d_{\mathrm{max}}$ and $\lambda^{*\mathrm{glob}}_{\mathcal{A}}(d)$ is the negative gradient of $\frac{1}{k}\mathbb{I}(W^k;T^{\mathcal{S}^*_{\mathrm{glob}}})$ with respect to $d$, the order of $k\lambda^{*\mathrm{glob}}_{\mathcal{A}}(d)(\mathsf{d}(W^k;H) -d)$ is $O(1)$.
Next, we evaluate the order of $\iota_{W;T^{\mathrm{S}^*_{\mathrm{glob}}}}(W^k;T)$.
Using the fact that $P^{\mathcal{S}^*_{\mathrm{glob}}}_{T|W}(t|w^k) = \frac{P^{\mathcal{S}^*_{\mathrm{glob}}}_{T}(t) e^{-k\lambda^{*\mathrm{glob}}_{\mathcal{A}}(d)\bar{\mathsf{d}}(W^k;t)} }{ Z(W^k)}$, the information density $\iota_{W;T^{\mathrm{S}^*_{\mathrm{glob}}}}(W^k;T)$ can be expressed as:
\begin{align}
    \iota_{W;T^{\mathrm{S}^*_{\mathrm{glob}}}}(W^k;T) 
    = \log \frac{P^{\mathcal{S}^*_{\mathrm{glob}}}_{T|W}(T|W^k)}{P^{\mathcal{S}^*_{\mathrm{glob}}}_{T}(T)}
    = -k\lambda^{*\mathrm{glob}}_{\mathcal{A}}(d)\bar{\mathsf{d}}(W^k;T) - \log Z(W^k).
\end{align}
Because the maximum value of $\mathsf{d}(\cdot;\cdot)$ is $d{\mathrm{max}}$, the order of $-k\lambda^{*\mathrm{glob}}_{\mathcal{A}}(d)\bar{\mathsf{d}}(W^k;T)$ is $O(k)$.
Furthermore, given $Z(W^k)=\sum_{t\in\mathcal{T}}P^{\mathcal{S}^*_{\mathrm{glob}}}_{T}(t)\exp(-k\lambda^{*\mathrm{glob}}_{\mathcal{A}}(d)\bar{\mathsf{d}}(W^k;t))$, the following inequality holds for the exponential term:
\begin{align}
    \exp(-k\lambda^{*\mathrm{glob}}_{\mathcal{A}}(d) d_{\mathrm{max}}) \le \exp(-k\lambda^{*\mathrm{glob}}_{\mathcal{A}}(d)\bar{\mathsf{d}}(W^k;t)) \le \exp(0)=1.
\end{align}
Taking the expectation of each side with respect to $P^{\mathcal{S}^*_{\mathrm{glob}}}_{T}(t)$ yields:
\begin{align}
    \exp(-k\lambda^{*\mathrm{glob}}_{\mathcal{A}}(d) d_{\mathrm{max}}) \le Z(W^k) \le 1.
\end{align}
Consequently, we have $0 \le -\log Z(W^k) \le k\lambda^{\mathrm{glob}}{\mathcal{A}}(d) d{\mathrm{max}}$.
This implies that $-\log Z(W^k)$ is also of order $O(k)$.
In summary, since the integrand of $\mu_{\Delta}$ is of order $O(k)$, the order of $\mu_{\Delta}$ is likewise $O(k)$.

Consequently, the order of $\mu_{\Delta}$ is $O(k)$ in the worst case.

\end{proof}

\subsection{Proof of Theorem~\ref{thm:rate_distortion_theorem_epsilon}}
\label{apdx:subsec:proof_thm:rate_distortion_theorem_epsilon}

\begin{proof}[Proof of Theorem~\ref{thm:rate_distortion_theorem_epsilon}]

The proof is completed by establishing the following two points:

(i) For any $\mathcal{A}$, $d \in (d_{\mathrm{min}},d_{\mathrm{max}})$, and $\epsilon \in (0,1)$, $\limsup_{k \rightarrow \infty} R(k,d,\epsilon,\mathcal{A}) \ge \mathbb{R}_{W}$ holds.

(ii) $\limsup_{k \rightarrow \infty} \widebar{R}(k,d,\epsilon,\mathcal{A}) \le \mathbb{R}_{W}$ holds.

If (ii) is established, the assumption implies that $\limsup_{k \rightarrow \infty} R(k,d,\epsilon,\mathcal{A}) \le \mathbb{R}_{W}$.
Combining this with (i) yields the desired result: $\limsup_{k \rightarrow \infty} R(k,d,\epsilon,\mathcal{A}) = \mathbb{R}_{W}$.

\textbf{Proof of (i):}

From Theorem~\ref{thm:ML_lossy_sc_rate_conv_bound}, for any $k$, $\mathcal{A}$, $d \in (d_{\mathrm{min}},d_{\mathrm{max}})$, and $\epsilon \in (0,1)$, the following inequality holds:
\begin{align}
    R(k,d,\epsilon, \mathcal{A}) \ge R(d,\mathcal{A}) + \sqrt{\frac{V(d,\mathcal{A})}{k}}Q^{-1}(\epsilon) + \Delta_k(d,\epsilon)+ O\bigg(\frac{\log k}{k}\bigg).
\end{align}
Since $\lim_{k\rightarrow \infty}\Delta_k(d,\epsilon)=0$ by assumption, the lower bound converges to $R(d,\mathcal{A})$ as $k \rightarrow \infty$. 
Furthermore, Theorem~\ref{thm:rate_distortion_theorem} implies that $R(d,\mathcal{A})=\mathbb{R}_W(d,\mathcal{A})$.
Consequently, we obtain $\limsup_{k \rightarrow \infty} R(k,d,\epsilon,\mathcal{A}) \ge \mathbb{R}_{W}(d,\mathcal{A})$.

\textbf{Proof of (ii):}

This proof follows the format of Theorem 1 in \cite{kostina2012fixed}.
Since $\mathbb{R}_W(d,\mathcal{A})$ is convex and non-increasing with respect to $d$, it is continuous.
Therefore, for any small constant $\delta > 0$, one can choose sufficiently small $\gamma > 0$ and $\nu > 0$ such that the following holds:
\begin{align}
    \mathbb{R}_W(d - \gamma - \nu, \mathcal{A}) \le \mathbb{R}_W(d, \mathcal{A}) + \frac{\delta}{2}.
\end{align}
Let $\widebar{\mathcal{S}}^*$ be an optimal sampling strategy that achieves $\mathbb{R}_W(d - \gamma - \nu, \mathcal{A})$. 
Under this strategy, the expected distortion satisfies $\mathbb{E}_{P^{\mathcal{A}}_{H|T}P^{\widebar{\mathcal{S}}^*}_{T|W}P_W}[\mathsf{d}(W;H)] \le d - \gamma - \nu$, and the mutual information under $P^{\mathcal{A}}_{H|T}P^{\widebar{\mathcal{S}}^*}_{T|W}P_W$ satisfies $\mathbb{I}(W;T) \le \mathbb{R}_W(d,\mathcal{A}) + \frac{\delta}{2}$.
We define the target rate as $\widebar{R} = \mathbb{R}_W(d,\mathcal{A}) + \delta$, with $\eta = \frac{\delta}{4}$ and $M=\lceil e^{k\widebar{R}} \rceil$.

We construct a set $\mathcal{C}= \{T^k(1), \dots, T^k(M)\}$ by independently generating $M$ sequences $t^k$ according to the marginal distribution $P^{\widebar{\mathcal{S}}^*}_{T^k}(t^k) = \prod^{k}_{i=1} P^{\widebar{\mathcal{S}}^*}_{T}(t_i)$ induced by $\widebar{\mathcal{S}}^*$. 
We then define the set $\mathcal{E}$ on $\mathcal{W}^k \times \mathcal{T}^k$ as follows:
\begin{align}
    \mathcal{E} := \bigg\{ (w^k, t^k) \bigg| \frac{1}{k}\sum^k_{i=1}\mathbb{E}_{P^{\mathcal{A}}_{H|T=t_i}} [\mathsf{d}(w_i,H)] \le d - \gamma ~~\land~~ \iota_{W;T}(w^k; t^k) \le k (\widebar{R} - \eta)    \bigg\}.
\end{align}

For a fixed $w^k$, we evaluate the probability $\mathbb{P}_{T^k}^{\widebar{\mathcal{S}}^*}(\mathcal{E}_{w^k})$ that a single sequence $t'^k$ randomly generated from $P^{\widebar{\mathcal{S}}^*}_{T^k}(t^k)$ belongs to the set $\mathcal{E}_{w^k} := \{t^k \mid (w^k, t^k) \in \mathcal{E}\}$ as:
\begin{align}
    \mathbb{P}_{T^k}^{\widebar{\mathcal{S}}^*}(\mathcal{E}_{w^k}) 
    = \sum_{t^k \in \mathcal{E}_{w^k}}  P_{T^k}^{\widebar{\mathcal{S}}^*}(t^k)
    = \sum_{t^k \in \mathcal{E}_{w^k}} P_{T^k|W^k}^{\widebar{\mathcal{S}}^*}(t^k|w^k) \exp(-\iota_{W;T}(w^k; t^k)).
\end{align}
From the definition of $\mathcal{E}$, it holds that $\exp(-\iota_{W;T}(w^k;t^k)) \ge \exp(-k\widebar{R} + k\eta) \ge \frac{e^{k\eta}}{M}$. 
Substituting this into the above equation yields:
\begin{align}
    \mathbb{P}_{T^k}^{\widebar{\mathcal{S}}^*}(\mathcal{E}_{w^k}) \ge \frac{e^{k\eta}}{M} \sum_{t^k \in \mathcal{E}_{w^k}} P_{T^k|W^k}^{\widebar{\mathcal{S}}^*}(t^k|w^k) = \frac{e^{k\eta}}{M} \mathbb{P}_{T^k|W^k=w^k}^{\widebar{\mathcal{S}}^*}(\mathcal{E}_{w^k}).
\end{align}

Now, consider a sampling strategy $\mathcal{S}$ that, given $w^k$, outputs a sequence from $\mathcal{C}$ if it is contained in $\mathcal{E}_{w^k}$; otherwise, it outputs $T^k(1)$.
The number of possible training datasets that this sampling strategy can output is at most $M$.

The probability that no sequence in $\mathcal{C}$ is contained in $\mathcal{E}_{w^k}$, denoted by $P_e(\mathcal{C}|w^k)$, can be upper-bounded as follows.
This derivation utilizes the i.i.d. nature of $T^k$ and the inequality $(1 - a_1 a_2)^M \le e^{- a_1 M a_2}$, which holds for $a_1 a_2 \le 1$:
\begin{align}
    P_e(\mathcal{C}|w^k) = (1 - \mathbb{P}_{T^k}^{\widebar{\mathcal{S}}^*}(\mathcal{E}_{w^k}))^M \le \exp(-e^{k\eta} \mathbb{P}_{T^k|W^k=w^k}^{\widebar{\mathcal{S}}^*}(\mathcal{E}_{w^k})).
\end{align}
Furthermore, applying the inequality $e^{-a_1 a_2} \le 1 - a_2 + e^{-a_1}$ (valid for $a_1 > 0$ and $a_2 \in [0,1]$) yields:
\begin{align}
    P_e(\mathcal{C}|w^k) \le 1 - \mathbb{P}_{T^k|W^k=w^k}^{\widebar{\mathcal{S}}^*}(\mathcal{E}_{w^k}) + \exp(-e^{k \eta}).
\end{align}
Taking the expectation of both sides with respect to $w^k$, we obtain:
\begin{align}
    \mathbb{E}_{P_W}[P_e(\mathcal{C}|W^k)] 
    &\le \mathbb{E}_{P_W}[1 - \mathbb{P}_{T^k|W^k}^{\widebar{\mathcal{S}}^*}(\mathcal{E}_{W^k}) + \exp(-e^{k \eta})]  \notag \\
    &= \mathbb{E}_{P_W}[1 - \mathbb{P}_{T^k|W^k}^{\widebar{\mathcal{S}}^*}(\mathcal{E}_{W^k})]  + \exp(-e^{k \eta}).
\end{align}
Based on this result, the upper bound for $\mathbb{E}_{P_W}[P_e(\mathcal{C}|W^k)]$ can be expressed as:
\begin{align}
    &\mathbb{E}_{P_W}[P_e(\mathcal{C}|W^k)] \notag \\
    &\le \mathbb{P}^{\widebar{\mathcal{S}}^*}_{W^k, T^k}\bigg(
        \frac{1}{k}\sum^k_{i=1}\mathbb{E}_{P^{\mathcal{A}}_{H|T=T_i}} [\mathsf{d}(W_i,H)] > d - \gamma ~~\lor~~ \frac{1}{k}\iota_{W;T}(W^k; T^k) > \widebar{R} - \eta  
    \bigg) + \exp(-e^{k\eta}). 
\end{align}

First, we evaluate the term $\frac{1}{k}\iota_{W;T}(W^k; T^k) > \widebar{R} - \eta$:
\begin{align}
    \frac{1}{k}\iota_{W;T}(W^k; T^k) > \widebar{R} - \eta = \mathbb{R}_W(d,\mathcal{A}) + \frac{3}{4}\delta = \bigg(\mathbb{R}_W(d,\mathcal{A}) + \frac{\delta}{2} \bigg) + \frac{\delta}{4} \ge \mathbb{I}(W;T) + \frac{\delta}{4}.
\end{align}
Under $P^{\widebar{\mathcal{S}}^*}_{T^k|W^k}P_{W^k}$, the pairs $\{(W_i, T_i)\}^k_{i=1}$ are i.i.d.; thus, by the weak law of large numbers (WLLN), $\frac{1}{k}\iota_{W;T}(W^k; T^k)$ converges in probability to the expectation $\mathbb{I}(W;T)$.
Consequently, $\mathbb{P}^{\widebar{\mathcal{S}}^*}_{W^k, T^k}(\frac{1}{k}\iota_{W;T}(W^k; T^k) \ge \mathbb{I}(W;T) + \frac{\delta}{4} )$ vanishes as $k \rightarrow \infty$.

Similarly, for the term $\frac{1}{k}\sum^k_{i=1}\mathbb{E}_{P^{\mathcal{A}}_{H|T=T_i}} [\mathsf{d}(W_i,H)] > d - \gamma$, the condition $\mathbb{E}_{P^{\mathcal{A}}_{H|T}P^{\widebar{\mathcal{S}}^*}_{T|W}P_{W}} [\mathsf{d}(W,H)] \le d - \gamma - \nu$ implies:
\begin{align}
    &\mathbb{P}^{\widebar{\mathcal{S}}^*}_{W^k, T^k}\bigg( \frac{1}{k}\sum^k_{i=1}\mathbb{E}_{P^{\mathcal{A}}_{H|T=T_i}} [\mathsf{d}(W_i,H)] > d - \gamma \bigg) \notag \\
    &\le \mathbb{P}^{\widebar{\mathcal{S}}^*}_{W^k, T^k}\bigg( \frac{1}{k}\sum^k_{i=1}\mathbb{E}_{P^{\mathcal{A}}_{H|T=T_i}} [\mathsf{d}(W_i,H)] > \mathbb{E}_{P^{\mathcal{A}}_{H|T}P^{\widebar{\mathcal{S}}^*}_{T|W}P_{W}} [\mathsf{d}(W,H)] + \nu \bigg).    
\end{align}
By the WLLN, the right-hand side also converges to $0$ as $k \rightarrow \infty$.
Furthermore, $\exp(-e^{k\eta})$ vanishes as $k \rightarrow \infty$. 
Therefore, for this sampling strategy $\mathcal{S}$, we conclude that $\lim_{k \rightarrow \infty} \mathbb{E}_{P_W}[P_e(\mathcal{C}|W^k)] = 0$.

Finally, we evaluate the excess distortion probability achieved by this sampling strategy $\mathcal{S}$.
The total distortion $\frac{1}{k}\sum^k_{i=1} \mathsf{d}(W_i; H_i)$ resulting from the hypothesis sequence $H^k$ (generated independently by the learning algorithm $\mathcal{A}$) consists of independent random variables with expectation $\frac{1}{k}\sum^k_{i=1}\mathbb{E}_{P^{\mathcal{A}}_{H|T=t_i}} [\mathsf{d}(w_i,H)]$.
From Hoeffding's inequality \cite{mohri2018foundations}, letting $d_{\mathrm{max}}$ denote the maximum value of $\mathsf{d}(\cdot;\cdot)$, the following holds:
\begin{align}
    \mathbb{P}\bigg(\frac{1}{k}\sum^k_{i=1} \mathsf{d}(W_i; H_i) - \frac{1}{k}\sum^k_{i=1}\mathbb{E}_{P^{\mathcal{A}}_{H|T=t_i}} [\mathsf{d}(w_i,H)]  > \gamma \bigg| W^k, T^k \bigg) 
    \le \exp\bigg( - \frac{2k\gamma^2}{d^2_{\mathrm{max}}} \bigg).
\end{align}
Using this result, we evaluate the excess distortion probability as follows:
\begin{align}
    &\mathbb{P}\bigg(\frac{1}{k}\sum^k_{i=1} \mathsf{d}(W_i; H_i) > d \bigg) \notag \\
    &= \mathbb{P}\bigg(\frac{1}{k}\sum^k_{i=1} \mathsf{d}(W_i; H_i) > d \cap (W^k, T^k) \notin \mathcal{E}  \bigg)
        + \mathbb{P}\bigg(\frac{1}{k}\sum^k_{i=1} \mathsf{d}(W_i; H_i) > d \cap (W^k, T^k) \in \mathcal{E}  \bigg) \notag \\
    &\le \mathbb{E}_{P_W}[P_e(\mathcal{C}|W^k)] 
    + \mathbb{E}_{P^{\widebar{\mathcal{S}}^*}_{T^k|W^k}P_{W^k}}\bigg[\mathbbm{1}_{[(W^k, T^k) \in \mathcal{E}]} 
                        \mathbb{P}\bigg( \frac{1}{k}\sum^k_{i=1} \mathsf{d}(W_i; H_i) > d   \bigg| W^k, T^k \bigg) \bigg]\notag \\
    &\le \mathbb{E}_{P_W}[P_e(\mathcal{C}|W^k)] + \mathbb{E}_{P^{\widebar{\mathcal{S}}^*}_{T^k|W^k}P_{W^k}}\bigg[ \notag \\
    &~~~~~~~~~~~~ \mathbbm{1}_{[(W^k, T^k) \in \mathcal{E}]} 
                        \mathbb{P}\bigg( \frac{1}{k}\sum^k_{i=1} \mathsf{d}(W_i; H_i) -\frac{1}{k}\sum^k_{i=1}\mathbb{E}_{P^{\mathcal{A}}_{H|T=T_i}} [\mathsf{d}(W_i,H)] > \gamma  \bigg| W^k, T^k \bigg) \bigg]\notag \\
    &\le \mathbb{E}_{P_W}[P_e(\mathcal{C}|W^k)] 
            + \mathbb{E}_{P^{\widebar{\mathcal{S}}^*}_{T^k|W^k}P_{W^k}}\bigg[\mathbbm{1}_{[(W^k, T^k) \in \mathcal{E}]} 
                        \exp \bigg( - \frac{2 k \gamma^2}{d^2_{\mathrm{max}}}  \bigg) \bigg]\notag \\
    &\le \mathbb{E}_{P_W}[P_e(\mathcal{C}|W^k)]  + \exp\bigg( - \frac{2k\gamma^2}{d^2_{\mathrm{max}}} \bigg).
\end{align}
Since both terms on the right-hand side of the above vanish as $k \rightarrow \infty$, we have $\lim_{k \rightarrow \infty} \mathbb{P}(\frac{1}{k}\sum^k_{i=1} \mathsf{d}(W_i; H_i) > d) = 0$.
This implies that for any $\epsilon \in (0,1)$, there exists a sufficiently large $K$ such that for all $k > K$, there exists a sampling strategy $\mathcal{S}$ capable of generating at most $M$ training datasets while satisfying $\mathbb{P}(\frac{1}{k}\sum^k_{i=1} \mathsf{d}(W_i; H_i) > d) \le \epsilon$. 
Thus, for all $k > K$, we obtain:
\begin{align}
    \widebar{R}(k, d, \epsilon, \mathcal{A}) \le \widebar{R} = \mathbb{R}_W(d,\mathcal{A}) + \delta.
\end{align}
Taking the limit superior of both sides as $k \rightarrow \infty$ yields:
\begin{align}
    \limsup_{k \rightarrow \infty} \widebar{R}(k, d, \epsilon, \mathcal{A}) \le \mathbb{R}_W(d,\mathcal{A}) + \delta.
\end{align}
Finally, since $\delta > 0$ can be chosen arbitrarily small, taking the limit as $\delta \rightarrow 0$ establishes that $\limsup_{k \rightarrow \infty} \widebar{R}(k, d, \epsilon, \mathcal{A}) \le \mathbb{R}_W(d,\mathcal{A})$.

\end{proof}

\subsection{Proof of Theorem~\ref{thm:ML_lossy_sc_rate_conv_bound_alternative} in Appendix~\ref{apdx:sec:alternative_ver_rate_conv_bound}}
\label{apdx:subsec:proof_thm:ML_lossy_sc_rate_conv_bound_alternative}

First, we define the $\mathcal{A}$-specified $d$-tilted information for the scheme $\{H_i = \mathcal{A}(T_i), T_i=\mathcal{S}(W_i)\}_{i=1}^k$ with $k > 1$.
In Appendix~\ref{apdx:subsec:detail_thm:ML_lossy_sc_rate_conv_bound_preparation}, we refer to $\{H_i = \mathcal{A}(\mathcal{S}(W_i))\}_{i=1}^k$ as the local scheme and define its corresponding tilted information as $\jmath_{\mathrm{local}}(w^k,t^k,h^k,d,\mathcal{A})$. 
We restate the definition below:
\begin{align}
    \jmath_{\mathrm{local}}(w^k,t^k,h^k,d,\mathcal{A}) 
    := \jmath_{W}(w^k,t^k,h^k,d,\mathcal{A}) 
    := \iota_{W;T^{\mathrm{S}^*}}(w^k;t^k) + k\lambda^{*}_{\mathcal{A}}(d)(\mathsf{d}(w^k;h^k) -d). \notag
\end{align}
In the local scheme, the relations $P^{\mathcal{S}^*}_{T^k|W^k} = \prod_{i=1}^k P^{\mathcal{S}^*}_{T_i|W_i}$ and $P^{\mathcal{A}}_{H^k|T^k} = \prod_{i=1}^k P^{\mathcal{A}}_{H_i|T_i}$ hold.
Consequently, we have $\iota_{W;T^{\mathcal{S}^*}}(w^k;t^k) = \sum_{i=1}^k \iota_{W;T^{\mathcal{S}^*}}(w_i;t_i)$.
Furthermore, by definition, $\mathsf{d}(w^k;h^k) = \frac{1}{k}\sum_{i=1}^k \mathsf{d}(w_i;h_i)$.
Therefore, the following holds:
\begin{align}
    \jmath_{\mathrm{local}}(w^k,t^k,h^k,d,\mathcal{A})
    &= \sum^k_{i=1}\iota_{W;T^{\mathcal{S}^*}}(w_i;t_i) + \sum^k_{i=1}\lambda^{*}_{\mathcal{A}}(d)(\mathsf{d}(w_i;h_i) -d) \notag \\
    &= \sum^k_{i=1} \jmath_{W}(w_i, t_i, h_i, d, \mathcal{A}).
\end{align}

\begin{proof}[Proof of Theorem~\ref{thm:ML_lossy_sc_rate_conv_bound_alternative}]
  
This proof follows the methodology used in the proof of Theorem 12 in \cite{kostina2012fixed}.

To apply Theorem \ref{thm:berry_essen_clt} with $Z_i=\jmath_W(W_i,T^{^{\mathcal{S}^*}}_i,H_{i},d,\mathcal{A})$, we define $A_k = \frac{1}{k}\sum^k_{i=1}\mathbb{E}[|\jmath_W(W_i,T^{\mathcal{S}^*}_i, H_{i},d,\mathcal{A}) - \mathbb{R}_W(d,\mathcal{A}) |^3]$ and $B_k = 6\frac{A_k}{(V(d,\mathcal{A}))^{3/2}}$.
Here, $T^{\mathcal{S}^*}_{i}$ is the training dataset derived from $W_i$ via $\mathcal{S}^*$.
Additionally, in Theorem \ref{thm:eq:ML_lossy_sc_eps_conv_bound}, we set $\gamma = \frac{1}{2}\log k$ and specify:
\begin{align}
  bn &:= kR(d,\mathcal{A}) + \sqrt{k V(d,\mathcal{A})}Q^{-1}(\epsilon_k) - \gamma, \label{eq:Mls_rcb_Km}\\
  \epsilon_k &:=  \epsilon + e^{-\gamma} + \frac{B_k}{\sqrt{k}}. \label{eq:Mls_rcb_eps_n}
\end{align}
Applying Theorem \ref{thm:eq:ML_lossy_sc_eps_conv_bound} yields the following for any $(k, n, d, \epsilon', \mathcal{A})$ local sampling:
\begin{align}
  \epsilon' 
  \ge \mathbb{P}\bigg[ \sum^k_{i=1}\jmath_W(W_i,T^{\mathcal{S}^*}_i, H_{i}, d,\mathcal{A}) \ge kR(d,\mathcal{A}) + \sqrt{kV(d,\mathcal{A})}Q^{-1}(\epsilon_k)  \bigg] - e^{-\gamma}.
\label{eq:Mls_rcb_tmp1}
\end{align}

When $Z_i=\jmath_W(W_i,T^{\mathcal{S}^*}_{i}, H_i,d,\mathcal{A})$, the parameters $\mu_k$ and $V_k$ in Theorem \ref{thm:berry_essen_clt} correspond to $\mathbb{R}_W(d,\mathcal{A})$ and $V(d,\mathcal{A})$, respectively.
Furthermore, Theorem \ref{thm:rate_distortion_theorem} implies that $\mathbb{R}_W(d,\mathcal{A}) = R(d,\mathcal{A})$.
Consequently, setting $\alpha=Q^{-1}(\epsilon_k)$ makes the first term on the right-hand side of Eq. \eqref{eq:Mls_rcb_tmp1} identical to the left-hand side of Eq. \eqref{eq:berry_essen_clt} in Theorem \ref{thm:berry_essen_clt}.
Thus, by Theorem \ref{thm:berry_essen_clt}, we obtain:
\begin{align}
  \mathbb{P}\bigg[ \sum^k_{i=1}\jmath_W(W_i,T^{\mathcal{S}^*}_{i}, H_i, d,\mathcal{A}) \ge kR(d,\mathcal{A}) + \sqrt{kV(d,\mathcal{A})}Q^{-1}(\epsilon_k)  \bigg] - Q(Q^{-1}(\epsilon_k))
  &\ge - \frac{B_k}{\sqrt{k}} \notag \\
  \mathbb{P}\bigg[ \sum^k_{i=1}\jmath_W(W_i,T^{\mathcal{S}^*}_{i}, H_i, d,\mathcal{A}) \ge kR(d,\mathcal{A}) + \sqrt{kV(d,\mathcal{A})}Q^{-1}(\epsilon_k)  \bigg]
  &\ge \epsilon_k + (\epsilon + e^{-\gamma} - \epsilon_k) \notag \\
  \mathbb{P}\bigg[ \sum^k_{i=1}\jmath_W(W_i,T^{\mathcal{S}^*}_{i}, H_i,d,\mathcal{A}) \ge kR(d,\mathcal{A}) + \sqrt{kV(d,\mathcal{A})}Q^{-1}(\epsilon_k)  \bigg] 
  &\ge \epsilon + e^{-\gamma}.
\label{eq:Mls_rcb_tmp2}
\end{align}

Substituting Eq. \eqref{eq:Mls_rcb_tmp2} into Eq. \eqref{eq:Mls_rcb_tmp1} yields $\epsilon' \ge \epsilon$.
Consequently, specifying $bn$ as in Eq. \eqref{eq:Mls_rcb_Km} precludes achieving an excess distortion probability strictly less than $\epsilon$.
That is, any $(k, n, d, \epsilon, \mathcal{A})$ local sampling satisfies:
\begin{align}
  bn \ge kR(d,\mathcal{A}) + \sqrt{k V(d,\mathcal{A})}Q^{-1}(\epsilon_k) - \gamma.
\label{eq:Mls_rcb_tmp3}
\end{align}
Dividing both sides by $k$ gives:
\begin{align}
  \frac{bn}{k} \ge R(d,\mathcal{A}) + \sqrt{\frac{V(d,\mathcal{A})}{k}}Q^{-1}(\epsilon_k) - \frac{\gamma}{k}.
\label{eq:Mls_rcb_tmp4}
\end{align}
Here, letting $\epsilon_k = \epsilon + \Delta$ with $\Delta=e^{-\gamma} + \frac{B_k}{\sqrt{k}}$, we can Taylor expand $Q^{-1}(\epsilon_k)$ as follows:
\begin{align}
  Q^{-1}(\epsilon_k) = +  Q^{-1}(\epsilon) + (Q^{-1})'(\epsilon) \times \Delta + \cdots.
\end{align}
Since $\gamma = \frac{1}{2}\log k$ implies that $\Delta$ is of order $O(1/\sqrt{k})$, the second term on the right-hand side of Eq. \eqref{eq:Mls_rcb_tmp4} can be expanded as follows:
\begin{align}
  \sqrt{\frac{V(d,\mathcal{A})}{k}}Q^{-1}(\epsilon_n) 
  & = \sqrt{\frac{V(d,\mathcal{A})}{k}}Q^{-1}(\epsilon) + (Q^{-1})'(\epsilon) \sqrt{\frac{V(d,\mathcal{A})}{k}}\Delta + \cdots \notag \\
  &= \sqrt{\frac{V(d,\mathcal{A})}{k}}Q^{-1}(\epsilon)  + O\bigg(\frac{1}{k}\bigg). \notag
\end{align}
Applying this result to Eq. \eqref{eq:Mls_rcb_tmp4}, the following holds for any $(k, n, d, \epsilon, \mathcal{A})$ local sampling:
\begin{align}
  \frac{bn}{k} \ge R(d,\mathcal{A}) + \sqrt{\frac{V(d,\mathcal{A})}{k}}Q^{-1}(\epsilon) - \frac{1}{2}\frac{\log k}{k} + O\bigg(\frac{1}{k}\bigg).
  \label{eq:Mls_rcb_tmp5}
\end{align}

Since $\widebar{R}(k,d,\epsilon, \mathcal{A})$ is the minimum value of $\frac{bn}{k}$ within the class of $(k, n, d, \epsilon, \mathcal{A})$ local sampling strategies, we have the following:
\begin{align}
    \widebar{R}(k,d,\epsilon, \mathcal{A}) \ge R(d,\mathcal{A}) + \sqrt{\frac{V(d,\mathcal{A})}{k}}Q^{-1}(\epsilon) - \frac{1}{2}\frac{\log k}{k} + O\bigg(\frac{1}{k}\bigg).
\end{align}

By definition, $R(k,d,\epsilon, \mathcal{A})= \widebar{R}(k,d,\epsilon, \mathcal{A}) + R_{\Delta}(k,d,\epsilon)$, which implies:
\begin{align}
    R(k,d,\epsilon, \mathcal{A}) \ge R(d,\mathcal{A}) + \sqrt{\frac{V(d,\mathcal{A})}{k}}Q^{-1}(\epsilon) + R_{\Delta}(k,d,\epsilon) + O\bigg(\frac{\log k}{k}\bigg). \notag
\end{align}
    
\end{proof}

\section{Restricting the Target Sampling Strategies for Analysis}
\label{apdx:sec_limiting_sampling_strategy}

In the main body of this paper, for a fixed learning algorithm $\mathcal{A}$, we derived the lower bounds for the minimum rate $R(k,d,\epsilon,\mathcal{A})$ and distortion $D(k,R,\epsilon,\mathcal{A})$ as theoretical limits when the optimal sampling strategy is employed among all possible sampling strategies.
This framework can be further extended to derive theoretical limits when the optimal sampling strategy is selected from within a specific class of sampling strategies (e.g., pool-based active learning).
In this section, we describe the methodology for this derivation.

First, we define the class of sampling strategies to be analyzed as $\mathfrak{S}$.
The following are examples of what this class $\mathfrak{S}$ may represent:

\textbf{Example 1: } 
A class in which $m \in \mathbb{N}_+$ samples are drawn from each $w_i$ according to the distribution $P^*_{\bm{X}Y|W=w_i}$.
In this class, $m$ is a degree of freedom, and the resulting total size of the training dataset is $n = m \times k$.

\textbf{Example 2 (pool-based active learning (AL) \cite{settles2009active} or core set learning \cite{agarwal2005geometric}):} 
A class in which $m \in \mathbb{N}_+$ samples are drawn from each $w_i$ according to the distribution $P^*_{\bm{X}Y|W=w_i}$ to obtain a pool of $m \times k$ samples, from which a total of $n$ samples are subsequently selected to form the training dataset.
If we consider the $m \times k$ samples as unlabeled instances and assume that labeling occurs only when the $n$ samples are selected, this class corresponds to pool-based AL.
Conversely, if we treat the $m \times k$ samples as already labeled, this class corresponds to data selection in core-set learning.

\textbf{Example 3 (including i.i.d. sampling):}
A class in which $m \in \mathbb{N}_+$ samples are drawn from each $w_i$ according to $P^*_{\bm{X}Y|W=w_i}$, and a total of $n$ samples are then selected via random sampling without replacement to form the training dataset.

We define $n^*(k,d,\epsilon, \mathcal{A}, \mathfrak{S})$ as the minimum $n$ among sampling strategies that satisfy the $(k,n,d,\epsilon,\mathcal{A})$-sampling condition and belong to the class $\mathfrak{S}$. 
Accordingly, the minimum rate is defined as $R(k,d,\epsilon, \mathcal{A}, \mathfrak{S}) := \frac{bn^*(k,d,\epsilon, \mathcal{A}, \mathfrak{S})}{k}$.
Furthermore, we define the case where the infimum of $\mathbb{R}_W(d,\mathcal{A})$ is restricted to the class $\mathfrak{S}$ as follows:
\begin{align}
    \mathbb{R}_W(d,\mathcal{A}, \mathfrak{S}) := \inf_{\mathcal{S}: \mathbb{E}[\mathsf{d}(W;\mathcal{A}(\mathcal{S}(W)))] \le d ~\land~ \mathcal{S} \in \mathfrak{S} } \mathbb{I}(W;T).
\label{eq:lc_R_W_limited_S}
\end{align}
To avoid extreme values of $d$, we assume the existence of $d'_{\mathrm{min}} := \inf \{d : \mathbb{R}_{W}(d, \mathcal{A}, \mathfrak{S}) < \infty \}$ (Assumption (\Three')).
Additionally, we assume the existence of a sampling strategy $\mathcal{S}^*_{\mathfrak{S}}$ that achieves the infimum in Eq.~\eqref{eq:lc_R_W_limited_S} with equality (Assumption (\Four')). 
When a training dataset $T$ is obtained through this sampling strategy, it is explicitly denoted as $T^{\mathcal{S}^*_{\mathfrak{S}}}$.
Subsequently, we define the $\mathcal{A}$-specified $d$-tilted information, restricted to the class $\mathfrak{S}$, as:
\begin{align}
    \jmath_W(w,t,h,d,\mathcal{A},\mathfrak{S}) 
    := \iota_{W;T^{\mathcal{S}^*_{\mathfrak{S}}}}(w;t) + \lambda^*_{\mathcal{A},\mathfrak{S}}(d)(\mathsf{d}(w;h) - d),
\end{align}
where $\lambda^*_{\mathcal{A},\mathfrak{S}}(d) := -\frac{\mathrm{d}}{\mathrm{d}d}\mathbb{R}_W(d,\mathcal{A}, \mathfrak{S})$.
By definition, $\mathbb{E}_{P^{\mathcal{A}}{H|T}P^{\mathcal{S}^*_{\mathfrak{S}}}_{T|W}P_W}[\jmath_W(W,T,H,d,\mathcal{A},\mathfrak{S})] = \mathbb{R}_W(d,\mathcal{A}, \mathfrak{S})$ holds.

Under these definitions and assumptions, an inequality holds where $\jmath_W(W,T,H,d,\mathcal{A})$ in Theorem~\ref{thm:eq:ML_lossy_sc_eps_conv_bound} is replaced by $\jmath_W(W,T,H,d,\mathcal{A},\mathfrak{S})$.
The proof is analogous to that of Theorem~\ref{thm:eq:ML_lossy_sc_eps_conv_bound} (see Appendix~\ref{apdx:proof:thm:eq:ML_lossy_sc_eps_conv_bound}).

\begin{theorem}[The version of Theorem~\ref{thm:eq:ML_lossy_sc_eps_conv_bound} for the class $\mathfrak{S}$]
For any $d > d'_{\mathrm{min}}$ and any sampling strategy $\mathcal{S} \in \mathfrak{S}$ that satisfies the $(n, d, \epsilon, \mathcal{A})$-sampling condition, the following holds:
\begin{equation}
    \textstyle \epsilon \ge \sup_{\gamma \ge 0} \{ \mathbb{P}[\jmath_W(W,T,H,d,\mathcal{A},\mathfrak{S}) \ge bn + \gamma] - e^{-\gamma} \}.
  \label{eq:ML_lossy_sc_eps_conv_bound_limited_S}
\end{equation}
\label{thm:eq:ML_lossy_sc_eps_conv_bound_limited_S}
\end{theorem}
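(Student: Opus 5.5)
The plan is to repeat the proof of Theorem~\ref{thm:eq:ML_lossy_sc_eps_conv_bound} line by line. The only substitutions are $\mathcal{S}^*_{\mathfrak{S}}$ for $\mathcal{S}^*$, $\lambda^*_{\mathcal{A},\mathfrak{S}}(d)$ for $\lambda^*_{\mathcal{A}}(d)$, and $d'_{\mathrm{min}}$ for $d_{\mathrm{min}}$. The key observation is that the earlier argument never used optimality of $\mathcal{S}^*$ over all strategies. It used only three facts:
\begin{enumerate}[label=(\roman*)]
\item the information density $\iota_{W;T^{\mathcal{S}^*}}$ is the log-likelihood ratio of a genuine joint distribution $P^{\mathcal{S}^*}_{T|W}P_W$;
\item the coefficient $\lambda^*$ is nonnegative;
\item the set $\mathcal{T}_n$ of datasets of size $n$ has cardinality $2^{bn}$.
\end{enumerate}
Restricting to $\mathfrak{S}$ does not affect any of these. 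Fact (iii) in particular concerns only the strategy $\mathcal{S}\in\mathfrak{S}$ under test, and that strategy's dataset still lies in $\mathcal{T}_n$.

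Fix $\gamma\ge 0$ and an $(n,d,\epsilon,\mathcal{A})$ sampling $\mathcal{S}\in\mathfrak{S}$, and split the event $\{\jmath_W(W,T,H,d,\mathcal{A},\mathfrak{S})\ge bn+\gamma\}$ according to whether $\mathsf{d}(W;H)>d$.
\begin{enumerate}
\item The part with $\mathsf{d}(W;H)>d$ has probability at most $\epsilon$ by the sampling condition.
\item On the complement, $H\in B_d(W)$. Bound the indicator by $e^{\jmath-\gamma-bn}$, and use $P^{\mathcal{S}}_{T|W}(t|w)\le 1$ together with $e^{-bn}\le 2^{-bn}$, as in Eq.~\eqref{eq:Mls_ecb_tmp2}.
\item Replace $\mathbbm{1}_{[\mathsf{d}(w;h)\le d]}$ by $e^{\lambda^*_{\mathcal{A},\mathfrak{S}}(d)(d-\mathsf{d}(w;h))}$. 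This cancels the tilting term exactly and leaves $e^{\iota_{W;T^{\mathcal{S}^*_{\mathfrak{S}}}}(w;t)}$.
\item Integrate out $h$ over $P^{\mathcal{A}}_{H|T}$, which gives $1$.
\item Use $P_W(w)e^{\iota}=P_{W|T^{\mathcal{S}^*_{\mathfrak{S}}}}(w|t)$ and sum over $w$, which gives $1$.
\item Sum over $t\in\mathcal{T}_n$, which gives $2^{bn}\cdot 2^{-bn}=1$.
\end{enumerate}
These steps yield $\mathbb{P}[\jmath\ge bn+\gamma]\le\epsilon+e^{-\gamma}$, and taking the supremum over $\gamma$ gives \eqref{eq:ML_lossy_sc_eps_conv_bound_limited_S}.

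\textbf{Main obstacle.} The step needing care is justifying fact (ii), that $\lambda^*_{\mathcal{A},\mathfrak{S}}(d)\ge 0$ and that it is well defined. This is needed because the inequality $\mathbbm{1}_{[\mathsf{d}\le d]}\le e^{\lambda(d-\mathsf{d})}$ requires $\lambda\ge 0$. Monotonicity is immediate: enlarging $d$ enlarges the feasible set in \eqref{eq:lc_R_W_limited_S}, so $\mathbb{R}_W(\cdot,\mathcal{A},\mathfrak{S})$ is non-increasing and any (one-sided) derivative is nonpositive. If differentiability is not guaranteed, I would take $\lambda^*_{\mathcal{A},\mathfrak{S}}(d)$ to be the negative of any supergradient/one-sided derivative. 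The proof only uses $\lambda\ge0$ and never the identity $\mathbb{E}[\jmath]=\mathbb{R}_W(d,\mathcal{A},\mathfrak{S})$, so convexity of the restricted problem is not needed. Assumption~(\Four') then supplies the existence of $P^{\mathcal{S}^*_{\mathfrak{S}}}_{T|W}$, and absolute continuity of $P^{\mathcal{S}^*_{\mathfrak{S}}}_{T|W=w}$ with respect to its marginal ensures that $\iota_{W;T^{\mathcal{S}^*_{\mathfrak{S}}}}$ and the identity in step 5 are valid.
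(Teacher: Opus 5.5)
Your proposal is correct and follows essentially the paper's route: the paper proves this by rerunning the chain of Appendix~\ref{apdx:proof:thm:eq:ML_lossy_sc_eps_conv_bound} with $\mathcal{S}^*_{\mathfrak{S}}$ and $\lambda^*_{\mathcal{A},\mathfrak{S}}(d)$ in place of $\mathcal{S}^*$ and $\lambda^*_{\mathcal{A}}(d)$. Your point that the restriction is harmless because only $\lambda^*_{\mathcal{A},\mathfrak{S}}(d)\ge 0$ (from monotonicity of the restricted infimum, with no convexity of $\mathfrak{S}$ needed) and the normalization of $P_{W|T^{\mathcal{S}^*_{\mathfrak{S}}}}$ are used is a justification the paper leaves implicit. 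One small precision, shared with the paper: since $\mathcal{S}^*_{\mathfrak{S}}$ need not output size-$n$ datasets, some $t\in\mathcal{T}_n$ may lie outside the support of $P^{\mathcal{S}^*_{\mathfrak{S}}}_T$, where absolute continuity does not determine $\iota_{W;T^{\mathcal{S}^*_{\mathfrak{S}}}}(w;t)$, so adopt the convention $e^{\iota}=0$ there, after which your step 5 gives $\sum_w P_W(w)e^{\iota(w;t)}\le 1$ rather than $=1$, which is all you need.
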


(V') The required distortion level $d$ satisfies  $d \in (d'_{\mathrm{min}},d'_{\mathrm{max}})$, where $d'_{\mathrm{max}} := \sup \{d : \mathbb{R}_W(d,\mathcal{A},\mathfrak{S})>0 \}$. 
Additionally, the excess distortion probability $\epsilon$ satisfies $0 < \epsilon < 1$.
(\Six') $\jmath_W(w,t, h,d,\mathcal{A},\mathfrak{S})$ possesses a finite absolute third moment.

Under these assumptions and results, Theorem \ref{thm:ML_lossy_sc_rate_conv_bound} also holds when the analysis is restricted to the class $\mathfrak{S}$, as stated below.
The proof follows the same procedure as that of Theorem \ref{thm:ML_lossy_sc_rate_conv_bound} (see Appendix~\ref{apdx:sec:detail_thm:ML_lossy_sc_rate_conv_bound}).
\begin{theorem}
Under Assumptions (I) and (\Three')--(\Six'), for any $k$, $\mathcal{A}$, $d \in (d_{\mathrm{min}},d_{\mathrm{max}})$ and $\epsilon \in (0,1)$:
\begin{align}
\begin{split}
    &R(k,d,\epsilon, \mathcal{A},\mathfrak{S}) \\
    &\ge \mathbb{R}_W(d,\mathcal{A},\mathfrak{S}) + \sqrt{\frac{V(d,\mathcal{A},\mathfrak{S})}{k}} Q^{-1}(\epsilon)
    + \underbrace{\frac{\mu_{\Delta}}{k} - \frac{3}{2 k^{\frac{5}{6}}}\bigg(2 V_{\Delta} \frac{\sqrt{V(d,\mathcal{A},\mathfrak{S})}}{\phi(Q^{-1}(\epsilon))} \bigg)^{\frac{1}{3}} }_{=:\Delta_k(d,\epsilon,\mathfrak{S})}
    + O\bigg(\frac{\log k}{k} \bigg),
\end{split}
\label{eq:ML_lossy_sc_rate_conv_bound_limited_S}
\end{align}
where $V(d,\mathcal{A},\mathfrak{S})=\mathrm{Var}_{P^{\mathcal{A}}_{H|T}P^{\mathcal{S}^*_{\mathfrak{S}}}P_W}(\jmath_W(W,T,H,d,\mathcal{A},\mathfrak{S}))$.
The variables $\mu_{\Delta}$ and $V_{\Delta}$ are defined in the same manner as in Appendix~\ref{apdx:subsec:detail_thm:ML_lossy_sc_rate_conv_bound_preparation}.
\label{thm:ML_lossy_sc_rate_conv_bound_limited_S}
\end{theorem}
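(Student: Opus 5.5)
The plan is to repeat the proof of Theorem~\ref{thm:ML_lossy_sc_rate_conv_bound} line by line. Wherever $\mathcal{S}^*$, $\jmath_W(\cdot,d,\mathcal{A})$, $\lambda^*_{\mathcal{A}}(d)$, $\mathbb{R}_W(d,\mathcal{A})$ and $V(d,\mathcal{A})$ appear, substitute $\mathcal{S}^*_{\mathfrak{S}}$, $\jmath_W(\cdot,d,\mathcal{A},\mathfrak{S})$, $\lambda^*_{\mathcal{A},\mathfrak{S}}(d)$, $\mathbb{R}_W(d,\mathcal{A},\mathfrak{S})$ and $V(d,\mathcal{A},\mathfrak{S})$.

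The local/global apparatus of Appendix~\ref{apdx:subsec:detail_thm:ML_lossy_sc_rate_conv_bound_preparation} is redefined inside $\mathfrak{S}$ as follows. The global optimizer $\mathcal{S}^*_{\mathrm{glob}}$ is taken as the minimizer of $\mathbb{I}(W^k;T)$ over strategies in $\mathfrak{S}$, and this defines $\jmath_{\mathrm{glob}}$. The rate-achieving strategy $\mathcal{S}^{**}_{\mathrm{glob}}(k,\epsilon)$ is taken as the strategy in $\mathfrak{S}$ attaining $R(k,d,\epsilon,\mathcal{A},\mathfrak{S})$. The local tilted information is $\jmath_{\mathrm{local}}=\sum_{i}\jmath_W(W_i,T_i,H_i,d,\mathcal{A},\mathfrak{S})$ with $T_i\sim P^{\mathcal{S}^*_{\mathfrak{S}}}_{T|W=W_i}$. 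The factorization of $\iota$ across $i$ holds exactly as before, since the local scheme is a product scheme. With $P_{\mathrm{joint}}$ built from these kernels, $\mu_\Delta$ and $V_\Delta$ are defined as in the original proof.

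The key steps, in order, are as follows.
\begin{enumerate}
\item Apply Theorem~\ref{thm:eq:ML_lossy_sc_eps_conv_bound_limited_S}, with $W^k$ treated as one symbol, to $\mathcal{S}^{**}_{\mathrm{glob}}(k,\epsilon)\in\mathfrak{S}$. This gives $\epsilon\ge \mathbb{P}[\jmath_{\mathrm{glob}}\ge kR(k,d,\epsilon,\mathcal{A},\mathfrak{S})+\gamma]-e^{-\gamma}$.
\item Split $\jmath_{\mathrm{glob}}=\jmath_{\mathrm{local}}+\Delta\jmath$ using the events $E_1,E_2$. Cantelli's inequality then yields $\mathbb{P}_{\mathrm{local}}[\jmath_{\mathrm{local}}\ge\theta]\le\epsilon'(\nu)$.
\item Apply the Berry--Esseen bound (Theorem~\ref{thm:berry_essen_clt}) to the i.i.d.\ summands, which have mean $\mathbb{R}_W(d,\mathcal{A},\mathfrak{S})$ and variance $V(d,\mathcal{A},\mathfrak{S})$. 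The finite third moment needed here is Assumption~(\Six').
\item Invert $Q$ and linearize $Q^{-1}$ via Lemma~\ref{lem:Q_inv_convex}.
\item Set $\gamma=\tfrac12\log k$.
\item Optimize $\nu$ to obtain the $k^{-5/6}$ term.
\end{enumerate}
None of these algebraic steps uses any property of the unrestricted class, so they carry over verbatim.

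The main obstacle is the identification of the asymptotic term. In the original proof, the leading term is written $R(d,\mathcal{A})$ but is really the mean $\mathbb{R}_W(d,\mathcal{A})$ of the tilted information; Theorem~\ref{thm:rate_distortion_theorem} was used only to rename it. Here I will avoid any restricted rate-distortion theorem and state the bound directly with $\mathbb{R}_W(d,\mathcal{A},\mathfrak{S})$, as the statement does. This is why Assumption~(\Two) is not needed.

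What must be checked is that $\mathbb{E}[\jmath_W(\cdot,\mathfrak{S})]=\mathbb{R}_W(d,\mathcal{A},\mathfrak{S})$. This identity needs Assumption~(\Four'), that $\mathcal{S}^*_{\mathfrak{S}}$ attains the infimum with expected distortion exactly $d$. It also needs $\lambda^*_{\mathcal{A},\mathfrak{S}}(d)>0$ to exist, which requires $\mathbb{R}_W(\cdot,\mathcal{A},\mathfrak{S})$ to be differentiable and decreasing on $(d'_{\min},d'_{\max})$. The latter is guaranteed only if $\mathfrak{S}$ is closed under the time-sharing mixtures used in the convexity argument (Lemma~10.4.1 of \cite{cover1999elements}). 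I would add this as a mild implicit condition on $\mathfrak{S}$, or simply assume differentiability. I would also read the range of $d$ in the statement as $(d'_{\min},d'_{\max})$, consistent with (V').
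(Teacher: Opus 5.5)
Your proposal follows the paper's route exactly. The paper proves this theorem only by saying it follows the same procedure as Theorem~\ref{thm:ML_lossy_sc_rate_conv_bound}, with $\mathcal{S}^*$, $\jmath_W$, $\lambda^*_{\mathcal{A}}$ replaced by $\mathcal{S}^*_{\mathfrak{S}}$, $\jmath_W(\cdot,\mathfrak{S})$, $\lambda^*_{\mathcal{A},\mathfrak{S}}$. Your extra remarks are consistent with the statement:
the leading term is directly the mean $\mathbb{R}_W(d,\mathcal{A},\mathfrak{S})$ of the tilted information, so no analogue of (\Two) is needed;
(\Four') supplies that mean identity;
and the $d$-range should read $(d'_{\mathrm{min}},d'_{\mathrm{max}})$.

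The one weakness is inherited from the original proof rather than introduced by you. The linearization step rests on Lemma~\ref{lem:Q_inv_convex}, which gives convexity of $Q^{-1}$ only on $(0,1/2)$. So that step is justified only when $\epsilon<\epsilon'(\nu)\le 1/2$, not for every $\epsilon\in(0,1)$ as the statement claims; for $\epsilon\ge 1/2$, $Q^{-1}$ is concave and the tangent-line inequality actually reverses.
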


The quantities $\mathbb{R}_W(d,\mathcal{A},\mathfrak{S})$ and $V(d,\mathcal{A},\mathfrak{S})$ in Theorem~\ref{thm:ML_lossy_sc_rate_conv_bound_limited_S} are obtained by modifying the sampling strategies used for the expectations of $R(d,\mathcal{A})$ and $V(d,\mathcal{A})$ in Theorem~\ref{thm:ML_lossy_sc_rate_conv_bound} from $\mathcal{S}^*$ to $\mathcal{S}^*_{\mathfrak{S}}$.
Consequently, the interpretations of each term discussed in Section~\ref{subsec:lc_sample_comp_bound} and Appendix~\ref{apdx:subsec:detail_interpret_lc_rate_dist_func}, as well as the relationships with IT-bounds and stability theory discussed in Appendix~\ref{apdx:sec:bridge_ours_others}, remain valid for these terms.
Furthermore, by restricting the scope of the infimum in the definition of $\mathbb{R}_W(d,\mathcal{A},\mathfrak{S})$ to $\mathfrak{S}$, both $\mathbb{R}_W(d,\mathcal{A},\mathfrak{S})$ and $V(d,\mathcal{A},\mathfrak{S})$ become quantities more specialized to $\mathfrak{S}$.
Regarding the lower bound of the minimum distortion, based on Theorem~\ref{thm:ML_lossy_sc_rate_conv_bound_limited_S}, the corresponding results for the case restricted to $\mathfrak{S}$ can be obtained using a method similar to the proof of Theorem~\ref{thm:ML_lossy_sc_distortion_conv_bound} (Appendix~\ref{apdx:proof:thm:ML_lossy_sc_distortion_conv_bound}).

Similarly, Theorem \ref{thm:ML_lossy_sc_distortion_conv_bound} holds even when the sampling strategies under analysis are restricted to $\mathfrak{S}$.
First, let $D(k,R,\epsilon,\mathcal{A},\mathfrak{S})$ be the inverse function of $R(k,d,\epsilon, \mathcal{A},\mathfrak{S})$. 
Furthermore, let $R(k,d,\mathcal{A},\mathfrak{S})$ denote the minimum rate among all $\langle k,n,d,\mathcal{A} \rangle$ sampling strategies contained in $\mathfrak{S}$.
Then, following the same procedure as the proof of Theorem~\ref{thm:ML_lossy_sc_distortion_conv_bound} (Appendix~\ref{apdx:proof:thm:ML_lossy_sc_distortion_conv_bound}), we can show the following:

\begin{theorem}
In addition to Assumptions (I)--(\Six), assume that $R(d,\mathcal{A},\mathfrak{S})$ is twice differentiable with $\frac{\mathrm{d} \mathbb{R}_W(d,\mathcal{A},\mathfrak{S})}{\mathrm{d}d}\neq 0$, and that $V(d,\mathcal{A},\mathfrak{S})$ is differentiable on the interval $(\underline{d},\bar{d}] \subseteq (d_{\mathrm{min}}, d_{\mathrm{max}}]$. 
Then, for any rate $R$ satisfying $R=R(d,\mathcal{A},\mathfrak{S})$ for some $d \in (\underline{d},\bar{d})$, the following holds:
\begin{align}
\begin{split}
    \textstyle D(k,R,\epsilon, \mathcal{A},\mathfrak{S}) 
    &\textstyle \ge D(R,\mathcal{A},\mathfrak{S}) + \sqrt{\frac{\mathcal{V}(R,\mathcal{A},\mathfrak{S})}{k}}Q^{-1}(\epsilon)  \\
    &\textstyle~~~ -D'(R,\mathcal{A},\mathfrak{S})\Delta_k(D(R,\mathcal{A},\mathfrak{S}),\epsilon,\mathfrak{S}) +  O \big( \frac{\log k}{k} \big).
\end{split}
  \label{eq:ML_lossy_sc_distortion_conv_bound_limited_S}
\end{align}
Here, $D(R,\mathcal{A},\mathfrak{S}) := \lim_{k \rightarrow \infty} D(k,R,\mathcal{A},\mathfrak{S})$ and $\mathcal{V}(R,\mathcal{A},\mathfrak{S}) := (D'(R,\mathcal{A},\mathfrak{S}))^2 V(D(R,\mathcal{A},\mathfrak{S}), \mathcal{A},\mathfrak{S})$, where $D(k,R,\mathcal{A},\mathfrak{S})$ denotes the inverse function of $R(k,d,\mathcal{A},\mathfrak{S})$.
\label{thm:ML_lossy_sc_distortion_conv_bound_limited_S}
\end{theorem}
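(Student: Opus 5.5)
The plan is to mirror the proof of Theorem~\ref{thm:ML_lossy_sc_distortion_conv_bound} (itself following Theorem 14 of \cite{kostina2012fixed}), substituting the $\mathfrak{S}$-restricted quantities throughout. Fix a point $(d_\infty,R_\infty)$ on the restricted rate-distortion curve with $d_\infty\in(\underline{d},\bar{d})$ and $R_\infty=R(d_\infty,\mathcal{A},\mathfrak{S})$. Set $d_k:=D(k,R_\infty,\epsilon,\mathcal{A},\mathfrak{S})$. Since $D(k,\cdot,\epsilon,\mathcal{A},\mathfrak{S})$ and $R(k,\cdot,\epsilon,\mathcal{A},\mathfrak{S})$ are inverse to each other, we have $R_\infty=R(k,d_k,\epsilon,\mathcal{A},\mathfrak{S})$. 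Applying Theorem~\ref{thm:ML_lossy_sc_rate_conv_bound_limited_S} at $d=d_k$ then gives
\begin{align}
R_\infty \ge \mathbb{R}_W(d_k,\mathcal{A},\mathfrak{S}) + \sqrt{\tfrac{V(d_k,\mathcal{A},\mathfrak{S})}{k}}Q^{-1}(\epsilon) + \Delta_k(d_k,\epsilon,\mathfrak{S}) + O\big(\tfrac{\log k}{k}\big). \notag
\end{align}
A prerequisite is that $R(d,\mathcal{A},\mathfrak{S})=\mathbb{R}_W(d,\mathcal{A},\mathfrak{S})$, i.e., a restricted analogue of Theorem~\ref{thm:rate_distortion_theorem}. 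I will take this as given on the interval, obtained by repeating that proof with the infimum taken over $\mathfrak{S}$ and with the analogue of Assumption~(\Two) for $\mathfrak{S}$. If necessary, the statement can simply be read with $R(d,\mathcal{A},\mathfrak{S})$ denoting this infimum.

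Next, I Taylor-expand around $d_\infty$. Twice differentiability of $\mathbb{R}_W(\cdot,\mathcal{A},\mathfrak{S})$ gives $\mathbb{R}_W(d_k)=R_\infty+R'(d_\infty)(d_k-d_\infty)+O((d_k-d_\infty)^2)$. Differentiability of $V(\cdot,\mathcal{A},\mathfrak{S})$ gives $V(d_k)=V(d_\infty)+O(|d_k-d_\infty|)$, so the dispersion term changes by only $O(|d_k-d_\infty|/\sqrt{k})$. For $\Delta_k(d_k,\epsilon,\mathfrak{S})-\Delta_k(d_\infty,\epsilon,\mathfrak{S})$ I use the same control as in the unrestricted proof. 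The $\mu_\Delta/k$ and $V_\Delta$ terms are $O(1)$ and $O(k^{-1/6})$ respectively, by the argument of Theorem~\ref{thm:V_delta_order} carried over verbatim, since it used only boundedness of $\mathsf{d}$ and the Gibbs form of the optimal global kernel. Their dependence on $d$ is then assumed Lipschitz on the interval.

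The main obstacle is showing $|d_k-d_\infty|=O(1/\sqrt{k})$; without it the remainders are not $O(\log k/k)$. First I would adapt the argument of Appendix~E of \cite{kostina2012fixed}. The lower direction follows from the bound above combined with strict monotonicity ($R'\neq0$), which forces $d_k\ge d_\infty-O(1/\sqrt{k})$. The upper direction needs an achievability statement inside $\mathfrak{S}$. For this I would reuse the random-coding construction in the proof of Theorem~\ref{thm:rate_distortion_theorem_epsilon}(ii), drawing codebooks from the marginal induced by $\mathcal{S}^*_{\mathfrak{S}}$. Its non-asymptotic version through Berry--Esseen gives an $O(1/\sqrt{k})$ rate gap, provided the constructed selection strategy stays in $\mathfrak{S}$; for the examples listed, this holds when $\mathfrak{S}$ is closed under such selection. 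If this fails for a general class, I would state closure of $\mathfrak{S}$ as an explicit hypothesis.

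Finally, I substitute the expansions, use $R_\infty=\mathbb{R}_W(d_\infty)$, and divide by $R'(d_\infty)<0$, which flips the inequality. Using $D'(R_\infty)=1/R'(d_\infty)$ and $\sqrt{\mathcal{V}(R_\infty,\mathcal{A},\mathfrak{S})}=-D'(R_\infty)\sqrt{V(d_\infty,\mathcal{A},\mathfrak{S})}$, this yields
\begin{align}
d_k-d_\infty\ge\sqrt{\mathcal{V}/k}\,Q^{-1}(\epsilon)-D'(R_\infty)\Delta_k(d_\infty,\epsilon,\mathfrak{S})+O(\log k/k). \notag
\end{align}
With $d_\infty=D(R_\infty,\mathcal{A},\mathfrak{S})$, this is \eqref{eq:ML_lossy_sc_distortion_conv_bound_limited_S}, and since $(d_\infty,R_\infty)$ was arbitrary on the interval the theorem follows.
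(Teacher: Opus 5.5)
Your skeleton is the paper's own; its proof of this theorem is just ``the same procedure'' as for Theorem~\ref{thm:ML_lossy_sc_distortion_conv_bound}. You fix $(d_\infty,R_\infty)$, set $d_k=D(k,R_\infty,\epsilon,\mathcal{A},\mathfrak{S})$, insert $R_\infty=R(k,d_k,\epsilon,\mathcal{A},\mathfrak{S})$ into the restricted converse, linearize at $d_\infty$, and divide by $R'(d_\infty)<0$. The paper obtains $|d_k-d_\infty|=O(1/\sqrt{k})$ only by citing Appendix~E of \cite{kostina2012fixed}. You try to prove it, and the argument you give fails.

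The ``lower direction'' does not follow from the converse. Rearranged, the converse only says $\mathbb{R}_W(d_k,\mathcal{A},\mathfrak{S})-R_\infty\le-\sqrt{V(d_k,\mathcal{A},\mathfrak{S})/k}\,Q^{-1}(\epsilon)-\Delta_k(d_k,\epsilon,\mathfrak{S})+O(\log k/k)$. By your own order count, $\Delta_k$ is $O(1)$ through $\mu_\Delta/k$, and its $V_\Delta$ part is nonpositive and only $O(k^{-1/6})$; neither is $O(1/\sqrt{k})$. Whenever $\Delta_k<0$ (e.g.\ $\mu_\Delta<0$, which the paper calls the typical case), the converse cannot rule out $d_k$ lying below $d_\infty$ by an amount of order $|\Delta_k|$. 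So monotonicity gives at best $d_\infty-d_k=O(|\Delta_k|+k^{-1/2})$, and the second-order Taylor remainder $O((d_k-d_\infty)^2)$ is then far larger than $\log k/k$. Kostina--Verd\'u's Appendix~E works because there both converse and achievability are $R(d)+O(1/\sqrt{k})$ uniformly in $d$. A converse carrying an $O(1)$ term does not fit that template.

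Two further steps also fail to close.

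\emph{The $\Delta_k$ shift.} Even granting $|d_k-d_\infty|=O(1/\sqrt{k})$, a Lipschitz-in-$d$ assumption on $\Delta_k$ with an $O(1)$ constant gives $\Delta_k(d_k,\epsilon,\mathfrak{S})-\Delta_k(d_\infty,\epsilon,\mathfrak{S})=O(1/\sqrt{k})$. That is the same order as the dispersion term, so it cannot be absorbed into $O(\log k/k)$. You would need $\sup_d|\partial_d\Delta_k(d,\epsilon,\mathfrak{S})|=O(\log k/\sqrt{k})$, or you would have to state the bound with $\Delta_k$ evaluated at $d_k$. The paper makes exactly this move: it writes $\Delta_k(d_k,\epsilon)=\Delta_k(d_\infty,\epsilon)+O(1/\sqrt{k})$ and then folds it into $O(\log k/k)$. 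So your proposal is no weaker than the paper's, but at this step neither is a proof.

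\emph{The upper direction.} Your plan bounds the wrong quantity. The random-coding construction in Theorem~\ref{thm:rate_distortion_theorem_epsilon}(ii) is for the local scheme $H_i=\mathcal{A}(\mathcal{S}(W_i))$, so a Berry--Esseen refinement of it controls $\widebar{R}(k,d,\epsilon,\mathcal{A},\mathfrak{S})$. But $d_k$ is defined through the global scheme $H=\mathcal{A}(\mathcal{S}(W^k))$, whose single hypothesis is not controlled by the per-$W_i$ hypotheses. You would need a finite-$k$ comparison $R(k,d,\epsilon,\mathcal{A},\mathfrak{S})\le\widebar{R}(k,d,\epsilon,\mathcal{A},\mathfrak{S})+O(1/\sqrt{k})$. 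That is not among the hypotheses; only $\limsup$ comparisons appear in the paper. You would also need the closure of $\mathfrak{S}$ you already flag, which the identity $R(d,\mathcal{A},\mathfrak{S})=\mathbb{R}_W(d,\mathcal{A},\mathfrak{S})$ you take as given requires too.

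\emph{A partial repair.} If $\mathbb{R}_W(\cdot,\mathcal{A},\mathfrak{S})$ is convex, the tangent-line inequality $\mathbb{R}_W(d_k,\mathcal{A},\mathfrak{S})\ge R_\infty+R'(d_\infty)(d_k-d_\infty)$ handles the rate term with no closeness estimate. It yields the bound with $V$ and $\Delta_k$ evaluated at $d_k$. Moving those two terms to $d_\infty$ is exactly where extra hypotheses are unavoidable.
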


\section{Limitation}
\label{apdx_sec:limitation}

Our framework has two limitations motivating future research.
First, the lower bounds' dependence on the unknown $\mathcal{S}^*$ complicates numerical evaluation; necessitating a evaluation methodology.
Second, verifying $V_{\mathrm{bet}}$ as a valid measure of inductive bias mismatch is challenging, and requires a robust verification protocol.

\section{About Code of Ethics}
\label{apdx_sec:code_of_ethics_broader_impacts}

\textbf{About Potential Harms.}
This study does not involve human subjects, and all datasets used are publicly available.

\end{document}